\documentclass[12pt,reqno]{amsart}
\usepackage{tabularray}

\usepackage{amsthm,amsmath,amssymb,amsfonts}
\usepackage{natbib}
\usepackage{txfonts}
\usepackage{fullpage}

\usepackage{float}
\usepackage{graphicx}
\usepackage{graphics}
\usepackage{ascmac}
\usepackage{bm}
\usepackage{accents}
\usepackage[colorlinks=true, allcolors=blue]{hyperref}
\usepackage{url}
\usepackage{algorithm}
\usepackage{algorithmic}
\usepackage{enumitem}

\usepackage{fullpage}

\usepackage[utf8]{inputenc} % allow utf-8 input
\usepackage[T1]{fontenc}    % use 8-bit T1 fonts
\usepackage{hyperref}       % hyperlinks
\usepackage{url}            % simple URL typesetting
\usepackage{booktabs}       % professional-quality tables
\usepackage{amsfonts}       % blackboard math symbols
\usepackage{nicefrac}       % compact symbols for 1/2, etc.
\usepackage{microtype}      % microtypography
\usepackage{xcolor}         % colors
\usepackage{algorithm}
\usepackage{algorithmic}

\usepackage{amsmath,amssymb,amsthm,mathtools}
\usepackage{hyperref}
\usepackage{enumitem}
\usepackage{microtype}
\usepackage{natbib}
\usepackage{color}
\usepackage{longtable}          % multi-page per-problem tables in Appendix~\ref{app:llm}
\usepackage{multirow}           % Model/Dataset row spans in Table~\ref{tab:llm-main-case-a}
\usepackage{pifont}             % \ding glyphs for \cmark / \xmark (per-problem tables)
\usepackage{tabularx}           % URL column wrapping in Table~\ref{tab:links}
\newcommand{\cmark}{\ding{51}}  % mode = gold in per-problem tables
\newcommand{\xmark}{\ding{55}}  % mode != gold

\newcommand{\cA}{\mathcal{A}}
\newcommand{\cB}{\mathcal{B}}
\newcommand{\cE}{\mathcal{E}}
\newcommand{\cF}{\mathcal{F}}
\newcommand{\cH}{\mathcal{H}}
\newcommand{\cP}{\mathcal{P}}
\newcommand{\cQ}{\mathcal{Q}}
\newcommand{\cT}{\mathcal{T}}

\newcommand{\PP}{\mathbb{P}}
\newcommand{\EE}{\mathbb{E}}
\newcommand{\ind}{\mathbf{1}}

\newtheorem{theorem}{Theorem}[section]
\newtheorem{proposition}[theorem]{Proposition}
\newtheorem{lemma}[theorem]{Lemma}
\newtheorem{corollary}[theorem]{Corollary}
\newtheorem{remark}[theorem]{Remark}
\newtheorem{definition}[theorem]{Definition}
\newtheorem{assumption}[theorem]{Assumption}
\newtheorem{condition}[theorem]{Condition}

\usepackage{tikz}
\usetikzlibrary{arrows.meta,positioning,calc,decorations.pathreplacing}

\definecolor{cTarget}{RGB}{36,72,128}      % deep navy outer bar
\definecolor{cTargetIn}{RGB}{20,46,86}     % darker inner LCB block
\definecolor{cRunner}{RGB}{214,143,60}     % muted amber runner-up
\definecolor{cObs}{RGB}{180,180,180}       % warm-gray observed others
\definecolor{cObsLine}{RGB}{130,130,130}
\definecolor{cUnseen}{RGB}{120,120,120}    % dashed gray unseen
\definecolor{cThresh}{RGB}{197,76,90}      % refined coral for U_t / threshold
\definecolor{cAxis}{RGB}{110,110,110}
\definecolor{cInk}{RGB}{38,38,42}
\definecolor{cMute}{RGB}{105,105,112}

\usepackage{color}

\title{CITE: Anytime-Valid Statistical Inference in LLM Self-Consistency
}

\keywords{test-time compute, e-processes, sequential mode testing, unseen categories}
    \author[H. Ota]{Hirofumi Ota}
    \address[H. Ota]{Komaba Institute for Science, Graduate School of Arts and Sciences, The University of Tokyo.}
    \email{hirofumi-ota@g.ecc.u-tokyo.ac.jp}
    \author[N. Iwase]{Naoto Iwase}
    \address[N. Iwase]{Nagoya University}
    \email{iwase.naoto.h6@s.mail.nagoya-u.ac.jp}
\author[Y. Ichihara]{Yuki Ichihara}
    \address[Y. Ichihara]{NARA Institute of Science and Technology (NAIST) / Mohamed bin Zayed University of Artificial Intelligence (MBZUAI)}
    \email{yuki.ichihara@mbzuai.ac.ae}
    \author[J. Komiyama]{Junpei Komiyama}
    \address[J. Komiyama]{Mohamed bin Zayed University of Artificial Intelligence (MBZUAI) / RIKEN AIP}
    \email{}
    \author[M. Imaizumi]{Masaaki Imaizumi}
    \address[M. Imaizumi]{Graduate School of Arts and Sciences, The University of Tokyo / RIKEN AIP / Kyoto University}
        \email{imaizumi@g.ecc.u-tokyo.ac.jp}
	\date{\today\  (first version)}

\allowdisplaybreaks

\setlist{itemsep=2pt,topsep=2pt,partopsep=0pt,parsep=0pt}

\allowdisplaybreaks

\begin{document}

\maketitle

\begin{abstract}
    Large language models often improve reasoning by sampling multiple outputs and aggregating their final answers, but precise and efficient control of error levels remains a challenging task. In particular, deciding when to stop sampling remains difficult when the stopping rule is data-dependent and the set of possible response labels is not known in advance. We study anytime-valid certification of a prespecified target answer as the unique mode of the model’s response distribution, a guarantee distinct from answer correctness. We propose the Certification by Intersection-union Testing with E-processes (CITE) algorithm, which provably controls false certification at any prescribed level under arbitrary data-driven stopping, without requiring prior knowledge of the answer category set. We also prove a category-set-size-free stopping-time rate, establish matching minimax lower bounds up to constants in the main regime, and extend the construction to confidence-weighted voting. Simulations and LLM self-consistency experiments show empirical error control and improved certification in diffuse-tail settings.
\end{abstract}

\section{Introduction}

\subsection{Background and Motivation}
Large language models (LLMs) have demonstrated increasingly strong performance on a wide range of  tasks \cite{brown2020language,chowdhery2023palm,achiam2023gpt,touvron2023llama}, especially from the use of \emph{test-time compute}. Prompting methods such as chain-of-thought and least-to-most decomposition encourage models to expose intermediate reasoning steps, while inference-time strategies such as self-consistency and tree-of-thought style search improve robustness by sampling, exploring, and aggregating multiple candidate reasoning trajectories \citep{wei2022chain,zhou2023least,wang2023selfconsistency,yao2023tree,brown2025monkeys,snell2025scaling}.

A notion of \textit{adaptive self-consistency} has gained attention to reduce computational cost of sampling growing approximately linearly with the number of sampled trajectories.
A seminal approach is \textit{majority voting} \citep{wang2023selfconsistency}, which is simple, model-agnostic, and does not require a separate verifier, reward model, or additional training. 
In particular, the method adapts to the case that 
easy questions often stabilize after only a few samples, whereas hard questions remain ambiguous and may benefit from continued sampling. As extensions, \cite{aggarwal2023let} studies a stopping method based on posterior agreement among sampled answers; \cite{li2024escape} uses low-entropy windows as a practical stopping signal; \cite{wang2025penny} and \cite{wang2025dynscaling} allocate budgets using difficulty estimates or bandit-style uncertainty across sets of queries; and confidence-aware variants use response-level confidence to improve aggregation and stopping \cite{taubenfeld2025confidence,aghazadeh2025cges}.

One challenge of the literature is to handle the situation where we must precisely control error level $\varepsilon$ of certifying a modal response, those that arise when the risk of failure is high.

A typical real example is the emergency-room triage on data such as ER-REASON \citep{mehandru2025erreason}, where each LLM-output acuity score determines the time-to-physician for an arriving patient and each disposition determines admission-versus-discharge.
In this case, if a modal-response certificate is used operationally, the false-certification probability must be bounded at a prescribed level $\varepsilon$. 
At present, it is difficult to control the certification errors with precision; therefore, keeping errors below a certain threshold requires performing an excessive number of inferences, which directly leads to a massive increase in computational costs.

Based on the situation, we have the following question: \textit{can we derive a data-driven stopping rule for adaptive self-consistency that controls the error level of certifying a modal response by LLMs?}

\begin{figure}
    \centering
    \caption{Overview of the stopping rule by CITE. With the countably infinite set of categories $\cA$,  CITE tests the hypothesis of whether the target category $r \in \cA$ is a mode or not.}
    \includegraphics[width=0.8\linewidth]{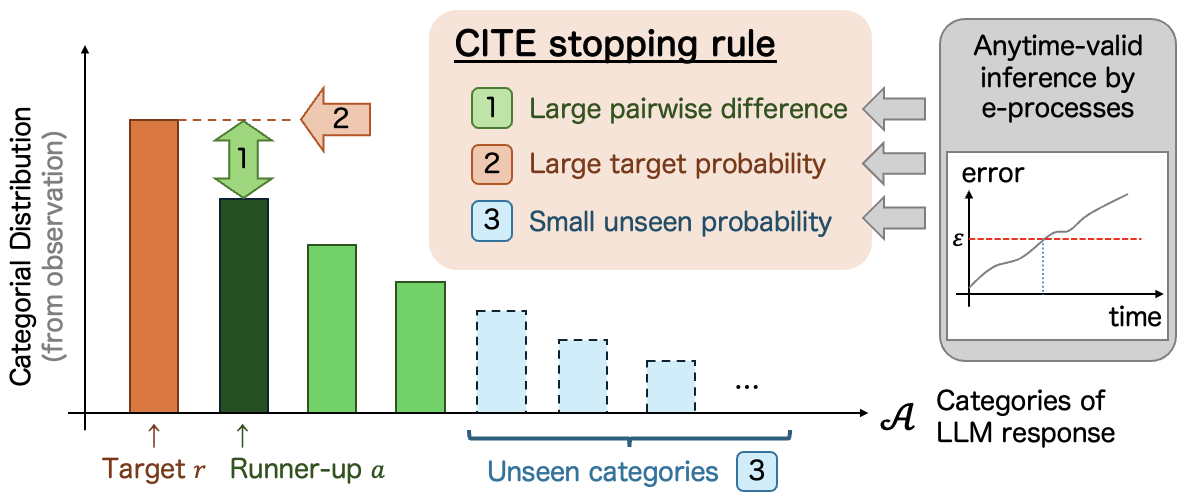}
    \label{fig:graph_abst}
    \vskip -0.15in
\end{figure}

\subsection{Our Study}

In this study, we propose the \emph{Certification by Intersection-union Testing with E-processes} (CITE), which is an \textit{anytime-valid} and \textit{category-agnostic} sequential test for controlling the Type-I error of falsely certifying a prespecified response label as the unique mode.
We note the important features: 
(i) \textit{Anytime validity}: CITE works with data-driven stopping time and the error guarantee must hold uniformly over the  time, not at a single prespecified sample size.
(ii) \textit{Category-agnostic}, CITE works even when the response category is unknown a priori and grows as sampling proceeds, which can adapt to new responses from LLMs. CITE gives a stopping time based on the three conditions on empirical probabilities: (i) the target is larger than the runner-up, (ii) the target itself is large, and (iii) the unobserved category is small, using anytime-valid e-process machinery
\citep{ville1939,grunwald2024safe,ramdas2025hypothesis}. Figure \ref{fig:graph_abst} gives a graphical abstract.

Our contributions are as follows.
\begin{itemize}[leftmargin=1.5em,itemsep=2pt,topsep=2pt]
\item \textbf{Anytime-valid category-agnostic error guarantee}: 
 CITE controls Type-I error of certifying a label as the unique mode uniformly over all
  stopping times at any prescribed level $\varepsilon$
  (Theorem~\ref{thm:main-iut}).
  The error guarantee is valid without the knowledge of answer categories.
\item \textbf{Category-size-free stopping-time}:
  We show that the stopping time by CITE has an order which is free from category size (Theorem~\ref{thm:power-iut-clean}), which is 

  contrast to several baseline methods.
We also derive a matching
minimax lower bound
(Theorem~\ref{thm:minimax-combined-lower-bound}). 
\item \textbf{Extension to weighted case}:
  We extend CITE to handle a case with weighted observations and gaps. We also derive an anytime-valid Type-I control and yield the same
  rate of the stopping time (Section~\ref{sec:weighted}).
  \item \textbf{Empirical performance}: Simulations and LLM experiments
  (Section~\ref{sec:experiments}) confirm that CITE controls
  Type-I error empirically, matches or improves on baselines on three LLM benchmarks. 

\end{itemize}

\section{Problem Formulation}
\label{sec:problem}

\subsection{Setup and Observation}

Let $\cA$ be a countable (possibly infinite) set of response
categories.  We consider an
unknown categorical distribution $P$ on $\cA$, and write
$p_a := P(X = a)$ for its probability mass function for $a \in \cA$.  Let
$p_{(1)} \geq p_{(2)} \geq \cdots$ denote the decreasing rearrangement
of $(p_a)_{a \in \cA}$, and let
$\cF_t := \sigma(X_1, \ldots, X_t)$ be the natural filtration.

Suppose that we observe an i.i.d.\ sequence $X_1, X_2, \ldots$ from $P$:
\begin{assumption}[i.i.d.\ categorical data]
\label{asn:dgp}
$(X_t)_{t \geq 1}$ are i.i.d.\ draws from a categorical distribution
$P$ on a countable set $\cA$.
\end{assumption}
We define some notations for statistics from the observations. For each $a \in \cA$ and $t \geq 1$, let
\[
  N_t(a) := \sum_{i=1}^t \ind\{X_i = a\},
  \qquad
  \widehat p_t(a) := \frac{N_t(a)}{t},
  \qquad
  \cA_t := \{a \in \cA : N_t(a) \geq 1\}
\]
denote the count, empirical frequency, and set of observed
categories by time~$t$.

Our definition of $P$ has a distinctive feature: A may contain a countably infinite number of categories, and since $\cA$ is unknown, information about the set of categories cannot be used in decision-making. This setup is effective for representing the complex response patterns of LLMs.

\subsection{Statistical Testing}

We study the fixed-target unique-mode testing problem.
For a target category $r \in \cA$ fixed \emph{a priori}, i.e. independent
of the data, we design a sequential test of the
unique-mode hypothesis that $r$ strictly dominates every other category.

\begin{definition}[Null and alternative hypothesis]
\label{def:exact-mode}
For a fixed target category $r \in \cA$, set
\[
  H_{0,r} := \bigl\{P : p_r \leq \sup_{a \neq r} p_a\bigr\},
  \qquad
  H_{1,r} := \bigl\{P : p_r > \sup_{a \neq r} p_a\bigr\}.
\]
$H_{1,r}$ is the unique-mode alternative, i.e. $r$ strictly dominates
every other category. 
Under any $P\in H_{0,r}$ there exists a deterministic witness $a^\star\neq r$ such that $p_{a^\star}\ge p_r$ holds.
A procedure \emph{certifies} $r$ whenever it
rejects $H_{0,r}$ in favor of $H_{1,r}$.
\end{definition}

To handle this testing, we consider certifying the mode through a sequential procedure with the observations using the following criteria.

{
\begin{definition}[Sequential certification procedure]
A sequential certification procedure for target $r$ is a sequence
$(D_t)_{t\ge1}$ of $\mathcal F_t$-measurable decision rules, where
$D_t=1$ means that, after observing $X_1,\ldots,X_t$, the procedure
certifies $r$ as the unique mode. Its stopping time is
\[
\tau := \inf\{t\ge1: D_t=1\},
\]
with $\inf\emptyset=\infty$. 
The procedure has anytime-valid Type-I error control at level $\varepsilon$ if it holds that 
\begin{align} \label{eq:type1}
\sup_{P\in H_{0,r}} \PP_P(\tau<\infty)\le \varepsilon.    
\end{align}
If $\tau<\infty$, the procedure certifies $r$ as the unique mode.
\end{definition}

}

Importantly, the sampling time $t$ is not fixed in advance, hence The bound~\eqref{eq:type1} is time-uniform. At each time $t$, the
procedure either continues sampling or stops and certifies, using only the data observed so far. 
The Type-I guarantee controls the probability that such a data-dependent rule ever falsely certifies $r$.

\begin{remark}
    We note the originality of our setting. While similar settings have been considered in statistics \citep{good1953population,Painsky02012025} and LLM test-time compute \citep{aggarwal2023let,li2024escape,komiyama2026bestofinfinity}, to the best of our knowledge, an anytime valid sequential testing for the infinite and unknown category set has not been investigated. This setting describes with the nature of LLMs and represents a novel approach in terms of hypothesis testing.
\end{remark}

\section{CITE: Certification by Intersection-union Testing with E-processes}
\label{sec:iut}

We present our proposed method, CITE.
Here, the target $r \in \cA$ and the error level of certification $\varepsilon > 0$ is fixed {before} observing data as formalized in \cite{kim2025locally}.
Our strategy involves using three data-driven stochastic processes to estimate the components of the distribution uniformly over time.
In particular, we consider $\cF_t$-measurable stochastic processes $E_t, L_t$ and $U_t$ with their forms will be derived in the following sections, which satisfies the following conditions under the null hypothesis $H_{0,r}$:
\begin{enumerate}
    \setlength{\parskip}{0cm} % 段落間
    \setlength{\itemsep}{0cm} % 項目間
    \item \textbf{Pairwise evaluation}: For $a \in \cA$, $E_t$ evaluates an relative size of $p_r$ against $p_a$, that is, for any $\alpha \in (0,1)$, $ \sup_{t \geq 1}E_t \leq 1 / \alpha$ holds with probability at least $1- \alpha$.
    \item \textbf{Lower bound of the target probability $p_r$}: the second process $L_t$ bounds $p_r$ below, that is, $\sup_{t \geq 1} L_t \leq p_r$ holds.
    \item \textbf{Upper bound of unseen categories}: the third process $U_t$ is an upper bound of probability of unseen categories, that is,  $\inf_{t \geq 1}  U_t\geq \sup_{a \in \cA : N_t(a) = 0} p_a$ holds.
\end{enumerate}
Note that these stochastic processes are calculated from observed values $X_1,...,X_t$ without the knowledge of $P$.

Using these processes, we define our stopping rule:  if any of the conditions above satisfied by these stochastic processes is violated at time $t$, we reject the null hypothesis $H_{0,r}$ and stop as $\tau = t$.

Rigorously, the stopping rule, with $P \in H_{0,r}$ and $a^\star \neq r$ satisfying
$p_{a^\star} \geq p_r$, provides the following  inclusion:
\begin{equation}\label{eq:type1-decomp}
  \{\tau < \infty\}
   \subseteq 
  \underbrace{\bigl\{\exists t:   E_t \geq 3/\varepsilon\bigr\}}_{\text{(i) pairwise}}
  \cup
  \underbrace{\bigl\{\exists t : L_t(r) > p_r\bigr\}}_{\text{(ii) target probability}}
  \cup
  \underbrace{\bigl\{\exists a, t : N_t(a) = 0,  p_a > U_t\bigr\}}_{\text{(iii) unseen categories}}.
\end{equation}
In the following sections, we will define the stochastic processes $E_t$, $L_t$, and $U_t$ in detail and evaluate these three events.

\subsection{Component (i): Pairwise E-Values}
\label{sec:pairwise}

We give a specific form of the stochastic process $E_t$ by the notion of e-process in the testing-by-betting style
of~\citet{ramdas2023gametheoretic}.
For the fixed target $r \in \cA$ and a competing category $a \neq r$, we consider a centered indicator
$Z_i^{(r,a)} := \ind\{X_i = r\} - \ind\{X_i = a\} \in \{-1,0,+1\}$ with a betting parameter $\lambda \in (0,1)$, and define the
\emph{pairwise e-process} as
\begin{equation}\label{def:pairwise-e-process}
  {E}_t^{(r,a)}(\lambda)
  := \prod_{i=1}^{t}\bigl(1 + \lambda Z_i^{(r,a)}\bigr)
  = (1+\lambda)^{N_t(r)}(1-\lambda)^{N_t(a)}.
\end{equation}
The closed form depends on the data only through $(N_t(r), N_t(a))$.
Under the null hypothesis, i.e. $p_r \leq p_a$, it holds that $\sup_{t \geq 1} E_t^{(r,a)}(\lambda) \leq 1/\alpha$ with probability at least $1-\alpha$; namely, it is a Nonnegative Super-Martingale (NSM) process.

We define the stochastic process $E_t$ as the pairwise mixture e-process against the
empirical runner-up. 
We extend $E_t^{(r,a)}(\lambda)$ by integrating $\lambda$ out.
Since the optimal $\lambda$ is $(p_r-p_a)/(p_r+p_a)$ (Proposition~\ref{prop:pairwise-oracle}) depends on unknown values, 

we therefore mix over a finite grid $\Lambda_{\mathrm{pw}}
\subset (0,1)$ with positive weights $(w_\lambda)$ summing to at most
$1$.
Namely, we define the pairwise mixture e-process as follows:
\begin{definition}[pairwise mixture e-process]    \label{def:mixture-e}
For each competitor $a \neq r$, define
\begin{align}    
  E_t = {E}_t^{(r,\widehat{a}_t)}
  \coloneqq \sum_{\lambda \in \Lambda_{\mathrm{pw}}}
     w_\lambda  E_t^{(r,\widehat{a}_t)}(\lambda), \quad \widehat a_t := \arg\max_{a \in \cA_t \setminus \{r\}} N_t(a).
\end{align}
\end{definition}
While $E_t^{(r,\widehat{a}_t)}$ is not a NSM process, $E_t^{(r,a)}$ is NSM since it is a convex combination of NSM processes:
\begin{corollary}
\label{cor:mixture-nsm}
Under $p_r \leq p_a$, for any $\alpha \in (0,1)$, we have $\sup_{t \geq 1} {E}_t^{(r,a)} \leq 1/\alpha$  with probability at least $1-\alpha$.
\end{corollary}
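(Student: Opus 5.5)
The plan is to show that the fixed-pair mixture $E_t^{(r,a)} = \sum_{\lambda\in\Lambda_{\mathrm{pw}}} w_\lambda E_t^{(r,a)}(\lambda)$ is a nonnegative supermartingale with respect to $(\cF_t)$ whose initial value is at most $1$. Ville's inequality \citep{ville1939} then gives
\[
\PP_P\bigl(\exists t\ge 1: E_t^{(r,a)}\ge 1/\alpha\bigr)\le \alpha\,\EE[E_0^{(r,a)}]\le\alpha .
\]
Here $a\neq r$ is a \emph{fixed}, data-independent competitor with $p_r\le p_a$. This is exactly the witness $a^\star$ of Definition~\ref{def:exact-mode}, not the data-dependent runner-up $\widehat a_t$.

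\textbf{Step 1: each fixed-$\lambda$ process is an NSM.} Fix $\lambda\in(0,1)$. Nonnegativity holds because $1+\lambda Z_i^{(r,a)}\in\{1-\lambda,1,1+\lambda\}\subset(0,2)$. The increments $Z_i^{(r,a)}$ are i.i.d.\ and independent of $\cF_{i-1}$ by Assumption~\ref{asn:dgp}, so
\[
\EE\bigl[E_t^{(r,a)}(\lambda)\mid\cF_{t-1}\bigr]
=E_{t-1}^{(r,a)}(\lambda)\bigl(1+\lambda(p_r-p_a)\bigr)
\le E_{t-1}^{(r,a)}(\lambda),
\]
since $p_r-p_a\le 0$ and $\lambda>0$. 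With the convention $E_0^{(r,a)}(\lambda)=1$, each $E_t^{(r,a)}(\lambda)$ is an NSM started at $1$.

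\textbf{Step 2: the mixture is an NSM.} The grid $\Lambda_{\mathrm{pw}}$ is finite and the weights satisfy $w_\lambda>0$ with $\sum_\lambda w_\lambda\le1$. By linearity of conditional expectation, $E_t^{(r,a)}$ is a nonnegative supermartingale with $E_0^{(r,a)}=\sum_\lambda w_\lambda\le 1$. Ville's maximal inequality for nonnegative supermartingales then yields $\PP(\sup_{t\ge1}E_t^{(r,a)}\ge 1/\alpha)\le\alpha$, so $\sup_{t\ge1}E_t^{(r,a)}<1/\alpha\le 1/\alpha$ with probability at least $1-\alpha$. If Ville's inequality is not taken as given, I would prove it via the optional stopping theorem applied at $\sigma=\inf\{t:E_t\ge 1/\alpha\}$ truncated at $T$, and then let $T\to\infty$ by monotone convergence.

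\textbf{Main obstacle.} There is no real technical obstacle; the one subtlety is conceptual. The statement must be read with $a$ fixed in advance. The process indexed by $\widehat a_t$ is not a supermartingale, and in the later Type-I argument it has to be handled separately: when $\widehat a_t$ is the empirical runner-up, $N_t(\widehat a_t)\ge N_t(a^\star)$ whenever $a^\star$ has been observed. This makes $E_t^{(r,\widehat a_t)}\le E_t^{(r,a^\star)}$ by monotonicity of $(1-\lambda)^{N}$, and that comparison is where the corollary gets used. If $a^\star$ is unseen, the case falls under component (iii). For the corollary itself I would only remark on this and confine the proof to Steps 1–2.
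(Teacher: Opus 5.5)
Your proposal is correct and follows essentially the same route as the paper: Lemma~\ref{lem:pairwise-nsm} gives the NSM property for each fixed $\lambda$, the finite mixture with total weight at most one is again an NSM with initial value at most one, and Ville's inequality \eqref{eq:ville} finishes the argument. You also stress that $a$ must be a fixed, data-independent competitor, with the empirical runner-up handled separately via Lemma~\ref{lem:monotone}; this matches exactly how the paper uses the corollary in the proof of Theorem~\ref{thm:main-iut}.
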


\subsection{Component (ii): Lower Confidence Bound on the Target Probability $p_r$}
\label{sec:lcb}

We develop a time-uniform lower confidence bound (LCB) on the target probability $p_r$ to handle \emph{hidden competitors}, i.e., categories $h \in \cA$ with $p_h > p_r$ that have never appeared in the sample.

For each candidate value $q \in (0,1]$, we test a hypothesis ``$p_r \leq q$'' by
betting on the centered indicator $\ind\{X_i = r\} - q$, which is an analogous of 
$\ind\{X_i=r\} - \ind\{X_i=a\}$ in Component~(i).  Fix a finite grid
$\Lambda_r \subset (0, \infty)$ with positive weights $(v_\lambda)$
summing to at most $1$, and for $q \in (0, 1]$, we consider
$
  \Lambda_r(q) := \{\lambda \in \Lambda_r : \lambda < 1/q\},
$
thus each factor $1 + \lambda(\ind\{X_i=r\} - q)$ is positive.  
Then, we define
the mixture e-process
\begin{equation}\label{def:lcb}
  {M}_t(q)
  := \sum_{\lambda \in \Lambda_r(q)}
     v_\lambda \prod_{i=1}^{t}
     \bigl(1 + \lambda(\ind\{X_i = r\} - q)\bigr),
\end{equation}
and it follows $\PP(\sup_t  M_t(q) \geq 1/\alpha) \leq \alpha$ for any  $\alpha \in (0,1)$ under $p_r \leq q$ (Corollary~\ref{cor:mixture-nsm}). 

Moreover, $q \mapsto M_t(q)$ is
nonincreasing (Lemma~\ref{lem:lcb-monotonicity}), so the set of
$q$ at which the wealth exceeds $1/\alpha$ is a downset, making the following definition well-posed.

\begin{definition}[Lower confidence bound on $p_r$]
\label{def:lcb-formal}
Given $\varepsilon > 0$, at time $t$, we define the lower confidence bound on $p_r$ as
\[
  L_t = L_t(r)
  := \sup\Bigl(\{0\} \cup
     \bigl\{q \in (0, \widehat p_t(r)] :
        M_t(q) \geq 3/\varepsilon\bigr\}\Bigr).
\]
\end{definition}

\begin{proposition}[LCB validity]
\label{prop:lcb-valid}
Under Assumption~\ref{asn:dgp}, it holds that
$\PP\bigl(\exists t \geq 1 : L_t(r) > p_r\bigr) \leq \varepsilon/3$.
\end{proposition}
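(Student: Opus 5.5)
The plan is to reduce the event $\{\exists t : L_t(r) > p_r\}$ to a single crossing event for one fixed e-process, namely $M_t(q)$ evaluated at the true value $q = p_r$. After that reduction, Ville's inequality (Corollary~\ref{cor:mixture-nsm} applied to the mixture) at level $\alpha = \varepsilon/3$ finishes the argument.

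\textbf{Case $p_r = 0$.} Then $N_t(r)=0$ almost surely, so $\widehat p_t(r) = 0$. The set $(0,\widehat p_t(r)]$ is empty, hence $L_t(r)=0$ and the event $\{L_t(r) > 0\}$ has probability zero.

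\textbf{Case $p_r > 0$.} Here I first show that $M_t(p_r)$ is a nonnegative supermartingale, indeed a martingale. For each $\lambda \in \Lambda_r(p_r)$, the factors $1+\lambda(\ind\{X_i=r\}-p_r)$ are positive because $\lambda < 1/p_r$. Their conditional mean given $\cF_{i-1}$ is $1 + \lambda(p_r - p_r) = 1$, so each product is a nonnegative martingale starting at $1$. Since $M_t(p_r)$ is a combination of these with weights summing to at most $1$, it is a nonnegative martingale with $M_0 \le 1$. Ville's inequality then gives
\[
\PP\bigl(\exists t: M_t(p_r) \ge 3/\varepsilon\bigr) \le \varepsilon/3 .
\]
This is exactly the statement recorded after \eqref{def:lcb}, with the null ``$p_r \le q$'' taken at its boundary $q=p_r$.

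\textbf{Reduction step.} Suppose $L_t(r) > p_r$ for some $t$. By definition of the supremum there is some $q \in (p_r, \widehat p_t(r)]$ with $M_t(q) \ge 3/\varepsilon$. By the monotonicity of $q\mapsto M_t(q)$ (Lemma~\ref{lem:lcb-monotonicity}) and $q > p_r$, we get $M_t(p_r) \ge M_t(q) \ge 3/\varepsilon$. Therefore
\[
\{\exists t: L_t(r)>p_r\}\subseteq\{\exists t: M_t(p_r)\ge 3/\varepsilon\},
\]
and the bound follows.

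\textbf{Main obstacle.} The delicate point is the monotonicity step, because the index set $\Lambda_r(q)$ itself depends on $q$. As $q$ decreases, more $\lambda$'s become admissible, and each added term is nonnegative. Each individual product is also nonincreasing in $q$, since $\partial_q \log(1+\lambda(x-q)) < 0$. Together these give $M_t(p_r) \ge M_t(q)$ for $q>p_r$, which is what Lemma~\ref{lem:lcb-monotonicity} supplies and which I would simply invoke.

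A minor subtlety is that a supremum could exceed $p_r$ without being attained. This is harmless: any element $q$ of the set with $q>p_r$ suffices, and such an element exists whenever the supremum exceeds $p_r$.
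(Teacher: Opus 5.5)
Your proposal is correct and follows essentially the same route as the paper. The paper likewise disposes of the case $p_r=0$, shows that $M_t(p_r)$ is a nonnegative martingale with $M_0(p_r)\le 1$ and applies Ville's inequality at level $\varepsilon/3$, and derives $\{\exists t: L_t(r)>p_r\}\subseteq\{\sup_t M_t(p_r)\ge 3/\varepsilon\}$ from the monotonicity of $q\mapsto M_t(q)$, using the nesting $\Lambda_r(q_2)\subseteq\Lambda_r(q_1)$ for $q_1<q_2$ exactly as you describe.
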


\subsection{Component (iii): Unseen Upper Bound}
\label{sec:unseen}

We develop an upper bound for the probability of any currently unseen category, simultaneously over all categories and
times. 
A category of mass $u \in (0,1)$ remains unseen after $t$ i.i.d.\ draws with probability
$(1-u)^t$, and a \emph{probability-weighted} union bound turns this
fact into a time-uniform \emph{per-category} upper bound, bypassing
the need to estimate the aggregate missing mass.

\begin{definition}[Unseen upper bound]
\label{def:unseen}
An upper bound of the unseen probabilities is defined as
\[
  U_t := \min\bigl\{u \in (0,1] :
    u^{-1}(1 - u)^{t} \leq \varepsilon/3\bigr\}.
\]
\end{definition}

The threshold balances two forces: $(1 - u)^t$ decays exponentially in
$u$, while the $u^{-1}$ factor
pays the union-bound cost over rare categories. As a result, we obtain the following guarantee:

\begin{proposition}[Simultaneous validity of the unseen bound]
\label{prop:unseen-valid}
Under Assumption~\ref{asn:dgp}, we obtain
\[
  \PP\bigl(
    \exists a \in \cA,\ \exists t \geq 1 :
    N_t(a) = 0 \ \text{and}\ p_a > U_t
  \bigr)
  \leq \varepsilon/3.
\]
\end{proposition}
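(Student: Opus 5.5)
The plan is to reduce the time-uniform, all-category event to a countable union of per-category events, each controlled through the deterministic behaviour of $U_t$.

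\textbf{Monotonicity of $U_t$.} The function $g_t(u) := u^{-1}(1-u)^t$ is strictly decreasing in $u\in(0,1]$, with $g_t(u)\to\infty$ as $u\downarrow 0$ and $g_t(1)=0$. Hence the set $\{u : g_t(u)\le \varepsilon/3\}$ is a closed interval $[U_t,1]$, and the minimum in Definition~\ref{def:unseen} is attained with $g_t(U_t)=\varepsilon/3$. Moreover $g_t(u)$ is nonincreasing in $t$ for each fixed $u$, so $U_t$ is nonincreasing in $t$. Both facts depend only on $t$ and $\varepsilon$, not on the data.

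\textbf{Per-category reduction.} Fix a category $a$ with $p_a>0$; categories with $p_a=0$ can never satisfy $p_a>U_t$, since $U_t>0$. Define $T_a := \min\{t\ge 1 : U_t < p_a\}$, a deterministic time, which is finite because $U_t\to 0$ as $t\to\infty$. The counts $N_t(a)$ are nondecreasing in $t$, so $\{N_t(a)=0\}$ is decreasing in $t$. Since $U_t$ is also nonincreasing, the event $\{\exists t: N_t(a)=0,\ p_a>U_t\}$ equals $\{N_{T_a}(a)=0\}$. This event has probability $(1-p_a)^{T_a}$. By the definition of $T_a$, we have $U_{T_a}<p_a$, so $p_a\in[U_{T_a},1]$ and therefore $g_{T_a}(p_a)\le\varepsilon/3$. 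This gives $(1-p_a)^{T_a}\le p_a\,\varepsilon/3$. The time-uniformity thus costs nothing: the monotonicity collapses the supremum over $t$ to a single deterministic time per category.

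\textbf{Probability-weighted union bound.} A union bound over the countable set $\cA$ gives
\[
\PP(\cdot)\le \sum_{a: p_a>0} p_a\,\varepsilon/3 \le \varepsilon/3,
\]
because $\sum_a p_a=1$. The only subtle step is the monotonicity argument that reduces the existential quantifier over $t$ to the single time $T_a$. The rest is direct. Independence across times is never needed beyond the i.i.d.\ computation $\PP(N_T(a)=0)=(1-p_a)^T$ from Assumption~\ref{asn:dgp}.
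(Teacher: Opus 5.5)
Your proposal is correct and follows the paper's proof essentially step for step. Like the paper, you use the monotonicity of $u\mapsto u^{-1}(1-u)^t$ and of $U_t$ to collapse, for each category $a$, the time-uniform event to $\{N_{t_*(a)}(a)=0\}$ at the deterministic first time $t_*(a)$ with $p_a>U_{t_*(a)}$; this gives $(1-p_a)^{t_*(a)}\le p_a\,\varepsilon/3$, and the proof finishes with the same probability-weighted union bound over $\cA$. The only cosmetic difference is that you argue $t_*(a)<\infty$ via $U_t\to0$, whereas the paper simply notes that the event is empty when $t_*(a)=\infty$.
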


\subsection{Stopping Rule}
\label{sec:stopping-rule}

We define the CITE stopping rule by combining these three stochastic processes above, which are uniform in time.
In particular, we design the stopping rule to follow the inclusion of the events as \eqref{eq:type1-decomp}.
Algorithm~\ref{alg:iut} shows the pseudo-code of the CITE stopping rule.

\begin{definition}[CITE stopping rule]
\label{def:stopping}
With given $\varepsilon > 0$, CITE stops at the first time both
% \begin{enumerate}[label=(\alph*),leftmargin=2.2em,itemsep=0pt]
%   \item 
  \textbf{(a) Pairwise rejection:}
        $E_t \geq 3/\varepsilon$
        at the empirical runner-up,
  and \textbf{(b) LCB exceeds unseen bound:} $L_t > U_t$,
% \end{enumerate}
hold simultaneously, i.e.\ we define the stopping time as
\begin{equation}\label{eq:tau-def}
  \tau := \inf\bigl\{t \geq 1 :
     E_t \geq 3/ \varepsilon 
    \text{ and } L_t > U_t \bigr\}.
\end{equation}
If $\tau < \infty$, the procedure certifies $r$ as the unique mode.
\end{definition}

\begin{algorithm}[htbp]
\caption{Stopping rule by CITE}
\label{alg:iut}
\hrule\smallskip
\textbf{Input:}
  target label~$r \in \cA$;
  parameter grid $\Lambda_{\mathrm{pw}} \subset (0,1)$ with weights $(w_\lambda)$;
  LCB grid $\Lambda_r \subset (0,\infty)$ with weights $(v_\lambda)$;
  error level $\varepsilon > 0$.
  %$\alpha_{\mathrm{pw}},\alpha_r,\alpha_u > 0$
  %with $\alpha_{\mathrm{pw}} + \alpha_r + \alpha_u \leq \varepsilon$.
  \\[2pt]
\textbf{Output:}
  stopping time~$\tau$ and certification decision.
\smallskip\hrule\smallskip
\begin{enumerate}[leftmargin=1.5em,label=\arabic*.]
  \item Initialise
        $N_0(a) \leftarrow 0$ for all $a \in \cA$;
        $E_0^{(r,a)}(\lambda) \leftarrow 1$ for all $a \neq r$,
        $\lambda \in \Lambda_{\mathrm{pw}}$.
  \item \textbf{For} $t = 1, 2, \ldots$ \textbf{do}
  \begin{enumerate}[leftmargin=1.5em,label=2\alph*.]
    \item Draw $X_t$ and set
          $N_t(a) \leftarrow N_{t-1}(a) + \ind\{X_t = a\}$
          for all $a \in \cA$.
    \item \textsc{Step(i): Pairwise e-processes.}\\
          If $\cA_t \setminus \{r\} = \emptyset$,  set
          $P_t^{\mathrm{pw}} \leftarrow \texttt{true}$ (the unseen
          condition $U_t$ then governs whether $\tau = t$) and skip
          to Step~(II); \\
          otherwise, set
          $\widehat a_t \leftarrow \arg\max_{a \in \cA_t \setminus \{r\}} N_t(a)$
          and, for each $\lambda \in \Lambda_{\mathrm{pw}}$, compute
          \[
            E_t =  E_t^{(r,\widehat a_t)}
            \leftarrow
            {\textstyle\sum_{\lambda}}
            w_\lambda
            E_t^{(r,\widehat a_t)}(\lambda) ,\qquad
            E_t^{(r,\widehat a_t)}(\lambda)
            \leftarrow
            (1+\lambda)^{N_t(r)}(1-\lambda)^{N_t(\widehat a_t)},
          \]
          and set
          $P_t^{\mathrm{pw}} \leftarrow \big\{  E_t \geq 3/\varepsilon \big\}$.
    \item \textsc{Step(ii): Lower confidence bound on $p_r$.}
          Set $\widehat p_t(r) \leftarrow N_t(r)/t$ and
          \[
            L_t = L_t(r) \leftarrow \sup\bigl(\{0\} \cup
            \{q \in (0, \widehat p_t(r)] :
             M_t(q) \geq 3/\varepsilon\}\bigr).
          \]
    \item \textsc{Step(iii): Unseen upper bound.}
          Set $U_t \leftarrow \min\{u \in (0,1] :
            u^{-1}(1-u)^t \leq \varepsilon/3\}$.
    \item \textbf{If} $P_t^{\mathrm{pw}}$ \textbf{and} $L_t > U_t$
          \textbf{then}
          return $(\tau \leftarrow t$, ``certify $r$ as unique mode'').
  \end{enumerate}
\end{enumerate}
\smallskip\hrule
\end{algorithm}

\section{Theoretical Guarantees}

We establish three theoretical guarantees for CITE.  First, the false
certification probability is controlled at the prescribed anytime-valid
level for the fixed target $r$ (Theorem~\ref{thm:main-iut}).  Second,
under the unique-mode alternative, the expected stopping time is bounded
by a category-set-size-free rate depending on $(\delta,p_r)$ but not on
$|\cA|$ (Theorem~\ref{thm:power-iut-clean}). Third, a combined
information-theoretic lower bound shows that this rate is optimal up to
constants and logarithmic terms
(Theorem~\ref{thm:minimax-combined-lower-bound}).

\subsection{Main Theorems}
\label{sec:main-theorems}

We first provide the sequential version of Type-I error control on the stopping time $\tau$ given in Definition \ref{def:stopping}.
\begin{theorem}[Type-I error control]
\label{thm:main-iut} Let $\varepsilon \in (0,1)$ be a prespecified significance level. 
Under Assumption~\ref{asn:dgp}, the stopping time $\tau$ of
Definition~\ref{def:stopping} satisfies
\[
\sup_{P \in H_{0,r}} \PP_P(\tau < \infty)
\leq \varepsilon.
\]
\end{theorem}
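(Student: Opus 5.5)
Fix $P\in H_{0,r}$ and a deterministic witness $a^\star\neq r$ with $p_{a^\star}\ge p_r$ (Definition~\ref{def:exact-mode}); if the supremum in $H_{0,r}$ is not attained, $p_r\le\sup_{a\neq r}p_a$ with all $p_a<p_r$ is impossible, so a witness always exists. The plan is to establish the inclusion \eqref{eq:type1-decomp} with the event (i) replaced by the \emph{fixed-competitor} event $\{\exists t: E_t^{(r,a^\star)}\ge 3/\varepsilon\}$. Then we bound the three events by $\varepsilon/3$ each, using Corollary~\ref{cor:mixture-nsm} with $\alpha=\varepsilon/3$ and $a=a^\star$, Proposition~\ref{prop:lcb-valid}, and Proposition~\ref{prop:unseen-valid}. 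A union bound then gives $\PP_P(\tau<\infty)\le\varepsilon$, uniformly in $P$.

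The core step is a deterministic case analysis on the sample path at $t=\tau<\infty$, where $E_t\ge 3/\varepsilon$ and $L_t>U_t$. We split according to whether the witness has been observed.

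\textbf{Case A: $N_t(a^\star)=0$.} Then $U_t\ge p_{a^\star}\ge p_r$ unless event (iii) occurs. On the complement of (iii) we get $L_t>U_t\ge p_r$, which is event (ii).

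\textbf{Case B: $N_t(a^\star)\ge1$.} Then $a^\star\in\cA_t\setminus\{r\}$, so the empirical runner-up satisfies $N_t(\widehat a_t)\ge N_t(a^\star)$. Because $(1-\lambda)^{n}$ is nonincreasing in $n$ for every $\lambda\in\Lambda_{\mathrm{pw}}\subset(0,1)$, the closed form \eqref{def:pairwise-e-process} gives
\[
E_t^{(r,a^\star)}=\sum_\lambda w_\lambda(1+\lambda)^{N_t(r)}(1-\lambda)^{N_t(a^\star)}\ \ge\ E_t^{(r,\widehat a_t)}=E_t\ \ge\ 3/\varepsilon,
\]
so event (i) occurs for the fixed competitor $a^\star$.

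The degenerate branch of Algorithm~\ref{alg:iut}, where $\cA_t\setminus\{r\}=\emptyset$ and the pairwise condition is set to true, falls under Case A, because then $N_t(a^\star)=0$ necessarily. Case A therefore never uses the pairwise condition, and the argument is unaffected by that convention.

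The main obstacle is that $E_t$ is evaluated at the data-dependent runner-up $\widehat a_t$, so it is not itself a supermartingale and Corollary~\ref{cor:mixture-nsm} cannot be applied to it directly. The monotonicity comparison in Case B is what transfers the rejection to the deterministic witness $a^\star$. It relies on $a^\star$ having been seen, which is exactly why Case A must be handled by the LCB and unseen bounds instead. A secondary check is that Propositions~\ref{prop:lcb-valid} and \ref{prop:unseen-valid} hold for every $P$ (they only use Assumption~\ref{asn:dgp}), so the bound is uniform over $H_{0,r}$.
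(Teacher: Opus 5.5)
Your proposal is correct and follows essentially the same route as the paper: fix a deterministic witness $a^\star$, split on whether $a^\star$ has been observed at $\tau$, and in the observed case transfer the pairwise rejection to the fixed-competitor e-process $E^{(r,a^\star)}$ via runner-up monotonicity (Lemma~\ref{lem:monotone}) and Ville's inequality. In the unseen case you obtain $p_{a^\star}\le U_\tau<L_\tau(r)\le p_r$ outside the LCB and unseen failure events, exactly as the paper does; your brief justification of the witness's existence (a non-attained positive supremum contradicts summability, cf.\ Lemma~\ref{lem:attained-supremum}) and your handling of the degenerate branch are sound additions.
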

The proof, deferred to Appendix~\ref{app:proof-thm:main-iut}, fixes a
null witness $a^\star\neq r$ satisfying $p_{a^\star}\ge p_r$.  If this
witness has been observed by the stopping time, runner-up monotonicity
implies that the fixed-competitor e-process
$ E_t^{(r,a^\star)}$ must have crossed $3/\varepsilon$.  If
the witness is still unseen, then the inequality $L_t>U_t$ can hold
only if either the LCB or unseen bound has failed.  This yields
\eqref{eq:type1-decomp} and a union bound over the three
$\varepsilon/3$ components.

The above result on the Type-I error control does not provide any guarantee regarding the time required to certify the true mode. Under the alternative, write the
modal gap as $\delta := p_r - p_{(2)} > 0$. We now show that
$\EE[\tau]$ is bounded by a function of $(\delta, p_r)$ alone,
 independent of $|\cA|$. 

\begin{condition}
\label{cond:grid}
$\Lambda_{\mathrm{pw}}$ contains some $\lambda_{\mathrm{pw}} \in [\delta/8, \delta/4]$,
and $\Lambda_r$ contains some $\lambda_r \in [1/32, 1/16]$.
\end{condition}

\begin{theorem}[Stopping-time bound]
\label{thm:power-iut-clean}
Suppose Assumption~\ref{asn:dgp} holds and
Condition~\ref{cond:grid} with $\delta > 0$, and the selected grid weights
$w_{\lambda_{\mathrm{pw}}}, v_{\lambda_r}$ are bounded below by a
constant independent of $\delta, p_r, \varepsilon$. Then, we have
$\PP(\tau < \infty) = 1$ and
\begin{equation}\label{eq:headline-rate}
  \EE[\tau]
  =
  O \left(
    \frac{\log(1/\varepsilon) + \log(1/\delta)}{\delta^2}
     +
    \frac{\log(1/\varepsilon) + \log(1/p_r)}{p_r}
  \right),
\end{equation}
with universal constants independent of $\delta$, $p_r$, $|\cA|$.
\end{theorem}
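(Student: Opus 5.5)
We bound $\EE[\tau]=\sum_{t\ge0}\PP(\tau>t)$ by splitting $\{\tau>t\}$ into the failure of each of the two stopping conditions. We then show that each failure probability decays geometrically in $t$ beyond a threshold $T_1$ or $T_2$. We set
\[
T_1\asymp\frac{\log(1/\varepsilon)+\log(1/\delta)}{\delta^2},\qquad T_2\asymp\frac{\log(1/\varepsilon)+\log(1/p_r)}{p_r}.
\]
This gives $\EE[\tau]\le T_1+T_2+\sum_{t>T_1\vee T_2}\bigl(\PP(E_t<3/\varepsilon)+\PP(L_t\le U_t)\bigr)$. Once the tail sums are shown to be $O(T_1+T_2)$, both $\PP(\tau<\infty)=1$ and the bound \eqref{eq:headline-rate} follow.

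\textbf{Condition (b).} First we handle the deterministic bound $U_t$. Taking $u=c\,\log(3t/\varepsilon)/t$ gives $u^{-1}(1-u)^t\le u^{-1}e^{-ut}\le\varepsilon/3$, so $U_t\le C\log(t/\varepsilon)/t$. For $t\ge T_2$ with suitable constants this is at most $p_r/4$; here we use the fact that $t\gtrsim\log(1/(\varepsilon p_r))/p_r$ implies $\log(t/\varepsilon)/t\lesssim p_r$.

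Next we handle $L_t$. By the monotonicity of $q\mapsto M_t(q)$ (Lemma~\ref{lem:lcb-monotonicity}), it suffices to show that $M_t(p_r/2)\ge 3/\varepsilon$ and $\widehat p_t(r)\ge p_r/2$ both hold outside an event of small probability. Keeping only the $\lambda_r$ term, we have
\[
\log M_t(p_r/2)\ge \log v_{\lambda_r}+\sum_i\log\bigl(1+\lambda_r(\ind\{X_i=r\}-p_r/2)\bigr).
\]
Since $\lambda_r\le1/16$, we may use $\log(1+x)\ge x-x^2$ for $|x|\le1/2$. Each summand then has mean at least $\lambda_r p_r/2-\lambda_r^2 p_r\gtrsim p_r$, and since the summands are bounded, a Bernstein/Chernoff bound for the Bernoulli count gives
\[
\PP\bigl(\log M_t(p_r/2)<\log(3/\varepsilon)\bigr)\le e^{-c p_r t}
\]
once $t\ge T_2$. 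A multiplicative Chernoff bound gives the same rate for $\widehat p_t(r)<p_r/2$. Summing over $t>T_2$ contributes $O(1/p_r)$.

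A subtlety is that $\lambda_r$ must lie in $\Lambda_r(q)$, i.e.\ $\lambda_r<1/q$. This holds trivially because $\lambda_r\le1/16<1\le 1/q$.

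\textbf{Condition (a).} This is the main obstacle, because $\widehat a_t$ is random and $|\cA|$ may be infinite, so a union bound over competitors must not introduce $|\cA|$. The plan is as follows. Keeping only the $\lambda_{\mathrm{pw}}$ term,
\[
\log E_t\ge\log w_{\lambda_{\mathrm{pw}}}+N_t(r)\log(1+\lambda)+N_t(\widehat a_t)\log(1-\lambda).
\]
Using $\log(1+\lambda)\ge\lambda-\lambda^2/2$ and $\log(1-\lambda)\ge-\lambda-\lambda^2$, it suffices that
\[
\lambda\bigl(N_t(r)-\max_{a\ne r}N_t(a)\bigr)-\lambda^2 t\ge\log(3/(\varepsilon w)).
\]
With $\lambda\in[\delta/8,\delta/4]$, it is enough to have $N_t(r)-\max_{a\neq r}N_t(a)\ge \delta t/2$ and $t\gtrsim\log(1/\varepsilon)/\delta^2$.

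To control $\max_{a\ne r}N_t(a)$ uniformly without a union bound over $|\cA|$, we bound the expected count of high-frequency competitors. The event $\{\exists a\neq r:\widehat p_t(a)\ge p_r-\delta/2\}$ is handled by summing $\PP(\widehat p_t(a)\ge p_a+\max(\delta/2,\,p_r-\delta/2-p_a))$ over $a$. By a Chernoff/KL bound, each term is at most roughly $\exp(-c\,t\,\Delta_a^2/(p_r))$, or alternatively at most $(e p_a/(p_r-\delta/2))^{t(p_r-\delta/2)}$ for small $p_a$. Grouping categories dyadically by $p_a$ and using $\sum_a p_a\le1$, the number of categories with $p_a\in[2^{-k-1},2^{-k}]$ is at most $2^{k+1}$, while their deviation probability decays like $2^{-k\,c\,t p_r}$. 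The sum is therefore dominated by the $O(1/p_r)$ heavy categories, each contributing $e^{-ct\delta^2}$ since $p_a\le p_r-\delta$ and the variance is at most $p_r$. This is where the $\log(1/\delta)$ (or $\log(1/p_r)$) term enters: the $O(1/p_r)\le O(1/\delta)$ multiplicity is absorbed into the threshold $T_1$. Combined with the Hoeffding bound $\PP(\widehat p_t(r)<p_r-\delta/4)\le e^{-t\delta^2/8}$, the failure probability of (a) is at most $C\delta^{-1}e^{-ct\delta^2}$. Its tail sum beyond $T_1$ is $O(1/\delta^2)$, completing the proof.
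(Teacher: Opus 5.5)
Your argument is correct. Your treatment of condition (b) is essentially the paper's Step~2: testing $q_0=p_r/2$ with the single grid point $\lambda_r$, Bernstein on $N_t(r)$, and $U_t\le p_r/4$ for $t\gtrsim p_r^{-1}\log(1/(\varepsilon p_r))$.

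For condition (a) you take a genuinely different route.
\begin{itemize}
\item \emph{The paper's route.} It works with the conservative statistic $S_t=N_t(r)\log(1+\lambda)+R_t\log(1-\lambda)$, where $R_t=\max_{a\neq r}N_t(a)$. It bounds $\EE[R_t]\le tp_{(2)}+2\sqrt{2t\log(t+1)}$ by symmetrization: the competitor indicator vectors have disjoint supports, so at most $t+1$ distinct ones appear. It then applies McDiarmid to $S_t$, obtaining tails $(et)^{-2}$ beyond $T_{\mathrm{pw}}$. There the $\log(1/\delta)$ is the price of the $\delta\sqrt{t\log t}$ fluctuation terms.
\item \emph{Your route.} You control $\PP(\max_{a\neq r}\widehat p_t(a)\ge\theta)$ directly, by a union bound weighted through $\sum_a p_a\le 1$. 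There are fewer than $e^2/\theta$ heavy competitors, each with Bernstein tail $e^{-ct\delta^2/p_r}$. The light competitors have Chernoff bounds $(ep_a/\theta)^{\theta t}\le (ep_a/\theta)e^{1-\theta t}$, which sum to $O(\theta^{-1}e^{-\theta t})$. Your $\log(1/\delta)$ pays for the $O(1/p_r)$ multiplicity.
\end{itemize}

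Your route is more elementary and yields exponential rather than polynomial tails. It is also variance-sensitive, since heavy-competitor deviations are at scale $\delta^2/p_r$. That is the kind of control one would need to approach the $p_r/\delta^2$ term of Theorem~\ref{thm:minimax-combined-lower-bound}, using a grid point of order $\delta/p_r$. The paper's proof is shorter and fully distribution-free, but its fluctuation term does not see $p_r$.

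One constant needs repair. The events $\widehat p_t(r)\ge p_r-\delta/4$ and $\max_{a\neq r}\widehat p_t(a)<p_r-\delta/2$ only give $N_t(r)-\max_{a\neq r}N_t(a)>\delta t/4$. With that, the bound $\lambda t(\delta/4-\lambda)$ can vanish at $\lambda=\delta/4$. Using thresholds $p_r-\delta/8$ and $p_r-5\delta/8$ instead gives the $\delta t/2$ you require. It still keeps a margin of $3\delta/8$ above $p_{(2)}$, so nothing else in your argument changes.
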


We next state the matching lower bound used to interpret
Theorem~\ref{thm:power-iut-clean}.  For $p\in(0,1/2]$ and
$\delta\in(0,p)$, define
\[
  \cP(p,\delta)
  :=
  \left\{
    P:
    p_r=p,\quad p_{(2)}\le p-\delta
  \right\}.
\]

\begin{theorem}[Combined minimax lower bound]
\label{thm:minimax-combined-lower-bound}
Fix $p\in(0,1/2]$ and $\delta\in(0,p)$.  Assume either that $\cA$ is
countably infinite, or that it contains at least $\left\lceil({1-p})/({p-\delta})\right\rceil+2$
distinct non-target categories.  Let $\tau'$ be any
level-$\varepsilon$ sequential certification procedure for the fixed
target $r$, i.e., $\sup_{Q\in H_{0,r}}\PP_Q(\tau'<\infty)\le\varepsilon$.
Assume that, for every $P\in\cP(p,\delta)$, $\PP_P(\tau'<\infty)=1$ and $\EE_P[\tau']<\infty$ hold.
Then, there is a universal constant $c>0$ such that 
\[
  \sup_{P\in\cP(p,\delta)}
  \EE_P[\tau']
  \ge
  c\log(1/\varepsilon)
  \left(
    \frac{p}{\delta^2}
    +
    \frac{1}{p}
  \right).
\]
\end{theorem}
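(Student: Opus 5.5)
The bound has two terms, and I would prove each by its own change-of-measure argument. Both use the standard sequential-testing lower bound, the data-processing inequality for the stopped KL divergence (Wald's identity). For the i.i.d.\ model and any stopping time $\tau'$ with $\EE_P[\tau']<\infty$, and any $Q\in H_{0,r}$ with $P\ll Q$ per draw,
\[
  \EE_P[\tau']\,\mathrm{KL}(P\|Q)
  \;=\; \mathrm{KL}\bigl(\PP_P|_{\cF_{\tau'}}\,\big\|\,\PP_Q|_{\cF_{\tau'}}\bigr)
  \;\ge\; \mathrm{kl}\bigl(\PP_P(\tau'<\infty),\PP_Q(\tau'<\infty)\bigr).
\]
Since $\PP_P(\tau'<\infty)=1$ and $\PP_Q(\tau'<\infty)\le\varepsilon$, the right side is at least $\log(1/\varepsilon)$. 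There is a technical point here, because $\{\tau'<\infty\}$ is not in any finite $\cF_t$. I would truncate at $\tau'\wedge T$, apply the inequality to the event $\{\tau'\le T\}$, and let $T\to\infty$ using $\PP_P(\tau'\le T)\to1$ and $\PP_Q(\tau'\le T)\le\varepsilon$. This gives $\EE_P[\tau']\ge \log(1/\varepsilon)/\mathrm{KL}(P\|Q)$. For $\varepsilon$ bounded away from $1$ I would use $\mathrm{kl}(1,\varepsilon)=\log(1/\varepsilon)$, absorbing constants. The proof then reduces to exhibiting, for each term, some $P\in\cP(p,\delta)$ and a nearby null $Q$ with small KL.

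\textbf{Gap term $p/\delta^2$.} Take $P$ with $p_r=p$ and runner-up $p_b=p-\delta$. Spread the remaining mass $1-2p+\delta$ over other categories, each of mass at most $p-\delta$. The cardinality assumption $\lceil(1-p)/(p-\delta)\rceil+2$ guarantees enough categories for this. Let $Q$ move mass $\delta/2$ from $r$ to $b$, so that $q_r=q_b=p-\delta/2$ and $Q\in H_{0,r}$. A $\chi^2$ bound gives
\[
  \mathrm{KL}(P\|Q)\le \frac{(\delta/2)^2}{p-\delta/2}+\frac{(\delta/2)^2}{p-\delta/2}\lesssim \frac{\delta^2}{p},
\]
using $p-\delta/2\ge p/2$. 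Hence $\EE_P[\tau']\gtrsim p\log(1/\varepsilon)/\delta^2$.

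\textbf{Hidden-competitor term $1/p$.} Under $P$, let a category $h$ have mass $0$, or a tiny mass $\eta$ to keep absolute continuity, with $\eta\to0$ at the end. Let $Q$ move mass $p$ onto $h$, taken proportionally from all categories by mixing: $Q=(1-p)P+p\,\delta_h$. Then $q_h=p\ge q_r=(1-p)p$, so $Q\in H_{0,r}$. The KL divergence is
\[
  \mathrm{KL}(P\|Q)=\sum_a p_a\log\frac{p_a}{(1-p)p_a}=\log\frac{1}{1-p}\le 2p
  \qquad\text{for } p\le 1/2.
\]
It is finite because $P$ charges $h$ with zero mass. This gives $\EE_P[\tau']\ge \log(1/\varepsilon)/(2p)$. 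The mixture construction keeps $P\in\cP(p,\delta)$ without needing an extra free category, beyond the one unused category $h$ that the cardinality assumption supplies.

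\textbf{Combining.} The supremum over $\cP(p,\delta)$ is at least the maximum of the two bounds, which is at least half their sum. The main obstacle is the rigorous stopped-KL step when $\tau'$ may be infinite under $Q$, together with verifying membership and absolute continuity in both constructions. I would handle the former by the truncation argument above.
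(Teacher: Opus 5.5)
Your proof is correct and follows the same structure as the paper's. There is a discrimination-hard instance and a hidden-competitor instance, each bounded by the stopped-KL/data-processing inequality against a nearby null, and the two bounds are combined via $\max\ge$ average. The differences lie only in the choice of nulls, and they work in your favor: your midpoint null with the $\chi^2$ bound $\mathrm{KL}(P\|Q)\le\delta^2/(2p-\delta)\le\delta^2/p$ holds for every $\delta\in(0,p)$, which removes the paper's case split at $\delta=p/2$ (forced there because the swap's KL $\delta\log\frac{p}{p-\delta}$ blows up as $\delta\to p$); your mixture null $(1-p)P+p\,\delta_h$ costs $\log\frac{1}{1-p}$ where the paper's split $Q(r)=Q(h)=p/2$ costs $p\log 2$, which is the same order; and your truncation at $\tau'\wedge T$ spells out the treatment of $\PP_Q(\tau'=\infty)>0$ that the paper leaves implicit.
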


Comparing the lower bound of Theorem~\ref{thm:minimax-combined-lower-bound} with the upper bound \eqref{eq:headline-rate}, we show that the CITE stopping time achieves the minimax optimal order up to logarithmic factors whenever $p_r$ is bounded away from zero. 

In the very small-$p_r$ regime, the current upper bound can be worse in the pairwise term; closing this gap is left open.

In Appendix~\ref{app:theory-extras}, we give extensions of the analysis: sharper bounds with refined constant (Theorem~\ref{thm:sharp-ratio}), %gives . %constant-level statement for an oracle pairwise benchmark.  It shows that, if the truerunner-up and the oracle tuning parameter are known, the pairwise e-process attains the leading constant determined by the KL distance to the closest null distribution.  This should be viewed as a benchmark for the pairwise component, not as a sharp-constant theorem for the full non-oracle CITE stopping time.
% contains three companion results: a
%growth-rate optimality result, in the sense of
%\citet{grunwald2024safe}, showing that 
a growth rate of the e-process
%the oracle fixed-$\lambda^\star$ pairwise bet maximizes expected log-growth within predictable linear pairwise bets, 
%while the finite mixture differs by the additive log-weight term $\log(1/w_{\lambda^\star})$ when $\lambda^\star$ is on the grid 
(Theorem~\ref{thm:grow}); 
a stopping time bound with an
geometric grid
%with a stopping-time bound uniform over$\delta\ge\delta_0$ and an extra $\log\log(1/\delta_0)$ term
(Theorem~\ref{thm:adaptive-grid}); and a top-$k$ extension for the Type-I error and the stopping-time bound 
%certifying a prespecified set of $k$ categories as the top-$k$ set
(Theorem~\ref{thm:top-k}).

\section{Extension to Confidence-Weighted Voting}
\label{sec:weighted}

\subsection{Setup and Weighted CITE}

We extend CITE to the confidence-weighted case: suppose we observe a pair $(X_i, W_i)$  with a
confidence score $W_i \in [0,1]$, e.g. the score  from logits or a verifier;
see \citep{taubenfeld2025confidence,fu2025deep}. 
In this case, our purpose is to certify the \emph{weighted
unique mode} $\arg\max_a \mu_a$ with $\mu_a := \EE[W_i \ind\{X_i = a\}]$
and weighted modal gap $\Delta_w := \mu_r - \mu_\star$, where
$\mu_\star := \sup_{a \neq r} \mu_a$.

\begin{assumption}[i.i.d. weighted categories]
\label{asn:weighted-dgp}
$(X_i, W_i)_{i \geq 1}$ are i.i.d.\ with $X_i \in \cA$ and $W_i \in [0,1]$.
\end{assumption}

We develop \textit{Weighted CITE} (W-CITE) for this setting.  The
weighted null and alternative are
\begin{align}
    H_{0,r}^{(w)} := \{P : \mu_r \leq \mu_\star\},  \mbox{
    ~~and~~} H_{1,r}^{(w)} := \{P : \mu_r > \mu_\star\}.
\end{align}
W-CITE extends the three components of Section~\ref{sec:iut} to the
weighted masses $(\mu_a)_{a\in\cA}$.

\paragraph{Component (i): Pairwise E-Values.}
We consider the weighted indicator
$\widetilde Z_i^{(r,a)} := W_i(\ind\{X_i = r\} - \ind\{X_i = a\})$ and a corresponding weighted pairwise e-process and its mixture as
\begin{equation}\label{def:weighted-pairwise-e-process}
  \widetilde E_t^{(r,a)}
  := \sum_{\lambda \in \Lambda_{\mathrm{pw}}}
     w_\lambda \widetilde E_t^{(r,a)}(\lambda), \qquad \widetilde E_t^{(r,a)}(\lambda)
  := \prod_{i=1}^{t}\bigl(1 + \lambda \widetilde Z_i^{(r,a)}\bigr).
\end{equation}
% With The weighted empirical runner-up is
% $
%   \widetilde a_t
%   := \arg\max_{a \in \cA_t \setminus \{r\}} \widehat\mu_t(a)$ with $
%   \widehat\mu_t(a) := t^{-1}\sum_{i=1}^t W_i\ind\{X_i=a\}$.
Because runner-up monotonicity need not hold with heterogeneous weights,
W-CITE uses the all-competitors check
$\min_{a \in \cA_t \setminus \{r\}} \widetilde E_t^{(r,a)} \geq 3/\varepsilon$.

\paragraph{Component (ii): LCB on Target Weighted Mass.}
The LCB targets $\mu_r$ via the increment
$W_i\ind\{X_i = r\}-q$.  Let $\widehat\mu_t(r)
  :=
  \frac1t\sum_{i=1}^t W_i\ind\{X_i=r\}$.
For $q\in(0,1]$, define
\[
  \widetilde M_t(q)
  := \sum_{\lambda \in \Lambda_r(q)}
     v_\lambda \widetilde M_t(q,\lambda),
  \qquad
  \widetilde M_t(q,\lambda)
  := \prod_{i=1}^{t}
     \bigl(1 + \lambda(W_i\ind\{X_i = r\} - q)\bigr),
\]
and set
$
  \widetilde L_t(r)
  := \sup(\{0\} \cup
     \{q \in (0, \widehat\mu_t(r)] :
       \widetilde M_t(q) \geq 3/\varepsilon\}).
$

\paragraph{Component (iii): Unseen Upper Bound.}
The unseen bound $U_t$ is unchanged since $\mu_a \leq p_a$.

\paragraph{Stopping Time.}
W-CITE stops when the weighted pairwise check passes for every observed
competitor and the weighted LCB exceeds the unseen bound,
$\widetilde L_t(r)>U_t$; if no non-target competitor has been observed,
the pairwise part is treated as vacuous.  This all-competitors check
raises the per-round cost from $O(1)$ to $O(|\cA_t|)$, since runner-up
monotonicity (Lemma~\ref{lem:monotone}) fails under heterogeneous
weights.  We write $\tau^{(w)}$ for the resulting stopping time, and
give the validity analysis in Appendix~\ref{app:weighted-power-details}.

\subsection{Theoretical Guarantee for Weighted CITE}
\label{sec:weighted-main}

We develop theoretical analysis for W-CITE, including the Type-I error and the stopping time.

\begin{theorem}[Type-I error, weighted case]
\label{thm:weighted-main}
Under Assumption~\ref{asn:weighted-dgp},
$\sup_{P \in H_{0,r}^{(w)}} \PP_P(\tau^{(w)} < \infty) \leq \varepsilon$.
\end{theorem}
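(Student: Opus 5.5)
The proof mirrors Theorem~\ref{thm:main-iut}, using the all-competitors check in place of runner-up monotonicity. Fix $P\in H_{0,r}^{(w)}$. We need a witness; if the supremum $\mu_\star$ is not attained, none exists, and this case needs separate treatment. First suppose some $a^\star\neq r$ has $\mu_{a^\star}\ge\mu_r$. Since $\sum_a \mu_a\le 1$, for any $\eta>0$ only finitely many $a$ have $\mu_a\ge\eta$. So if $\mu_\star>0$, the supremum is attained and a witness exists. If $\mu_\star=0$, then $\mu_r\le 0$, so $\mu_r=0$. In that case $\widehat\mu_t(r)=0$ almost surely for all $t$, hence $\widetilde L_t(r)=0<U_t$ and $\tau^{(w)}=\infty$ almost surely. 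We may therefore assume a deterministic witness $a^\star$ with $\mu_{a^\star}\ge\mu_r$ and $\mu_{a^\star}>0$.

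We then establish the weighted analogue of \eqref{eq:type1-decomp}:
\[
\{\tau^{(w)}<\infty\}\subseteq\{\exists t:\widetilde E_t^{(r,a^\star)}\ge 3/\varepsilon\}\cup\{\exists t:\widetilde L_t(r)>\mu_r\}\cup\{\exists a,t: N_t(a)=0,\ p_a>U_t\}.
\]
Let $t=\tau^{(w)}<\infty$. If $a^\star\in\cA_t$, the all-competitors check forces $\widetilde E_t^{(r,a^\star)}\ge 3/\varepsilon$; the minimum includes $a^\star$, and the vacuous case cannot occur since $a^\star$ has been observed. If instead $N_t(a^\star)=0$, then $\widetilde L_t(r)>U_t$. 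Either $U_t\ge\mu_{a^\star}\ge\mu_r$, giving $\widetilde L_t(r)>\mu_r$, or $U_t<\mu_{a^\star}\le p_{a^\star}$, giving the unseen failure event. Here we use $\mu_a=\EE[W\ind\{X=a\}]\le p_a$, because $W\in[0,1]$.

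We bound each event by $\varepsilon/3$. For the first, $\widetilde Z_i^{(r,a^\star)}\in[-1,1]$ and $\lambda<1$, so each factor $1+\lambda\widetilde Z_i$ is positive. Its conditional expectation is $1+\lambda(\mu_r-\mu_{a^\star})\le 1$ by independence. Hence each $\widetilde E_t^{(r,a^\star)}(\lambda)$ is an NSM, and so is the mixture with total weight at most $1$. Ville's inequality, as in Corollary~\ref{cor:mixture-nsm}, then gives probability at most $\varepsilon/3$. For the third event, we invoke Proposition~\ref{prop:unseen-valid}, which depends only on the marginal law of $(X_i)$ and so applies verbatim.

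The second event is the main step, and we adapt Proposition~\ref{prop:lcb-valid}. For $q\ge\mu_r$ and $\lambda<1/q$, the factor $1+\lambda(W_i\ind\{X_i=r\}-q)$ is positive with conditional mean $1+\lambda(\mu_r-q)\le1$. Thus $\widetilde M_t(q)$ is an NSM under ``$\mu_r\le q$''. We also need the weighted version of Lemma~\ref{lem:lcb-monotonicity}: $q\mapsto\widetilde M_t(q)$ is nonincreasing. Each factor decreases in $q$, and $\Lambda_r(q)$ shrinks as $q$ grows, so this holds. Consequently, if $\widetilde L_t(r)>\mu_r$, some $q>\mu_r$ has $\widetilde M_t(q)\ge3/\varepsilon$, and monotonicity transfers this to $q=\mu_r$.

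One subtlety arises at $q=\mu_r$: the grid $\Lambda_r(\mu_r)$ may contain $\lambda$ that were excluded at larger $q$. This only makes $\widetilde M_t(\mu_r)$ larger, which is the direction we need. If $\mu_r=0$, that case was already handled above. So the event is contained in $\{\sup_t\widetilde M_t(\mu_r)\ge3/\varepsilon\}$, which has probability at most $\varepsilon/3$ by Ville's inequality. A union bound over the three events gives $\varepsilon$.
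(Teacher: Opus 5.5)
Your proposal is correct and follows essentially the same route as the paper's proof. Both arguments take a deterministic witness from attainment of the positive supremum, split on whether $a^\star$ has been observed at $\tau^{(w)}$, use the all-competitors check plus Ville when it has, derive the LCB/unseen contradiction (via $\mu_a\le p_a$) when it has not, and finish with a union bound. The only cosmetic difference is the case $\mu_\star=0$: you show directly that $\tau^{(w)}=\infty$ almost surely, whereas the paper simply takes an arbitrary $a^\star\neq r$ as a trivially valid witness.
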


\begin{condition}[Weighted grid coverage]
\label{cond:weighted-grid}
$\Lambda_{\mathrm{pw}}$ contains some
$\lambda_{\mathrm{pw}} \in [\Delta_w/8, \Delta_w/4]$ and
$\Lambda_r$ contains some $\lambda_r \in [1/32, 1/16]$.
\end{condition}

\begin{theorem}[Weighted stopping-time bound]
\label{thm:weighted-clean-rate}
Suppose Assumption~\ref{asn:weighted-dgp} holds with $\Delta_w > 0$
and Condition~\ref{cond:weighted-grid}, and the grid weights
$w_{\lambda_{\mathrm{pw}}}, v_{\lambda_r}$ are bounded below by a
constant independent of $\Delta_w, \mu_r, \varepsilon$. Then we have
$\PP(\tau^{(w)} < \infty) = 1$ and
\begin{equation}\label{eq:weighted-headline-rate}
  \EE[\tau^{(w)}]
   = 
  O \left(
    \frac{\log(1/\varepsilon) + \log(1/\Delta_w)}{\Delta_w^2}
     + 
    \frac{\log(1/\varepsilon) + \log(1/\mu_r)}{\mu_r}
  \right).
\end{equation}
\end{theorem}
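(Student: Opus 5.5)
We bound $\EE[\tau^{(w)}]$ through the tail-sum identity $\EE[\tau^{(w)}]=\sum_{t\ge0}\PP(\tau^{(w)}>t)$. We fix a deterministic horizon
\[
T_0 = C\left(\frac{\log(1/\varepsilon)+\log(1/\Delta_w)}{\Delta_w^2}+\frac{\log(1/\varepsilon)+\log(1/\mu_r)}{\mu_r}\right)
\]
and show that $\PP(\tau^{(w)}>t)$ is summably small for $t\ge T_0$. This tail estimate also gives $\PP(\tau^{(w)}<\infty)=1$. For $t\ge T_0$ we have the inclusion
\[
\{\tau^{(w)}>t\}\subseteq\Bigl\{\min_{a\in\cA_t\setminus\{r\}}\widetilde E_t^{(r,a)}<3/\varepsilon\Bigr\}\cup\{\widetilde L_t(r)\le U_t\},
\]
so it suffices to bound the two failure probabilities separately. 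We follow the unweighted proof of Theorem~\ref{thm:power-iut-clean} wherever it applies verbatim.

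\textbf{LCB versus unseen bound.} First, we solve the defining inequality of $U_t$ directly. This gives $U_t\le c_1(\log(3/\varepsilon)+\log t)/t$, which falls below $\mu_r/4$ once $t\gtrsim(\log(1/\varepsilon)+\log(1/\mu_r))/\mu_r$. Next, we lower bound $\widetilde L_t(r)$ using the single term $\lambda_r\in[1/32,1/16]$ of the mixture at $q=\mu_r/2$. Since $W_i\ind\{X_i=r\}\in[0,1]$ and $\lambda_r q\le 1/32$, we have $\log(1+\lambda_r x)\ge\lambda_r x-\lambda_r^2x^2$ for the increments $x=W_i\ind\{X_i=r\}-q$. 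Hence $\log\widetilde M_t(q,\lambda_r)\ge\lambda_r\sum_i x_i-\lambda_r^2\sum_i x_i^2$. The drift is $\lambda_r(\mu_r-q)t=\lambda_r\mu_r t/2$, and the second-moment term is bounded by $\lambda_r^2\EE[(W\ind\{X=r\}-q)^2]\le\lambda_r^2(\mu_r+q^2)$. The latter is at most a quarter of the drift, because $\lambda_r\le1/16$ and because $\EE[W^2\ind\{X=r\}]\le\mu_r$ (using $W\le1$). This last point is where weights help: the variance is controlled by $\mu_r$, not by $p_r$. Bernstein/Chernoff bounds on $\sum_i x_i$ and on $\widehat\mu_t(r)\ge q$ then give $\widetilde M_t(\mu_r/2)\ge v_{\lambda_r}e^{c\mu_r t}\ge3/\varepsilon$ except with probability $e^{-c'\mu_r t}$. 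Because of the monotonicity in $q$ (the weighted analogue of Lemma~\ref{lem:lcb-monotonicity}, whose proof uses only $W\ge0$), this yields $\widetilde L_t(r)\ge\mu_r/2>U_t$ on the good event.

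\textbf{All-competitors pairwise check.} This is the main obstacle, because the minimum runs over all of $\cA_t$ and runner-up monotonicity is no longer available. We plan to use the same single-$\lambda$ argument. For every $a\ne r$ we have $\EE[\widetilde Z^{(r,a)}]=\mu_r-\mu_a\ge\Delta_w$, while $\EE[(\widetilde Z^{(r,a)})^2]\le\mu_r+\mu_a$. With $\lambda_{\mathrm{pw}}\in[\Delta_w/8,\Delta_w/4]$, the log-wealth satisfies
\[
\log\widetilde E_t^{(r,a)}(\lambda_{\mathrm{pw}})\ge\lambda_{\mathrm{pw}}S_t^{(a)}-\lambda_{\mathrm{pw}}^2 Q_t^{(a)},
\]
where $S_t^{(a)}$ is the sum of the increments and $Q_t^{(a)}$ the sum of their squares. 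To avoid a union bound over infinitely many $a$, we decompose $S_t^{(a)}=\sum_iW_i\ind\{X_i=r\}-\sum_iW_i\ind\{X_i=a\}$. The first sum does not depend on $a$ and concentrates around $\mu_r t$. For the second sum, we need a uniform upper bound on $\sup_{a\ne r}\sum_iW_i\ind\{X_i=a\}$ by $(\mu_\star+\Delta_w/4)t$. We plan to obtain it by splitting competitors into \emph{heavy} ones ($p_a\ge\Delta_w/8$, of which there are at most $8/\Delta_w$; handle these with a union bound costing $\log(1/\Delta_w)$, which matches the rate) and \emph{light} ones. For light competitors, a count $N_t(a)$ exceeding $(\mu_\star+\Delta_w/4)t$ requires a binomial deviation. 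We then sum the Chernoff tails weighted by $p_a$, in the spirit of the probability-weighted union bound behind Proposition~\ref{prop:unseen-valid}; the bound is a factor of $p_a$ times an exponentially small term, and $\sum_a p_a\le1$. The quadratic term satisfies $Q_t^{(a)}\le N_t(r)+N_t(a)$ and is controlled similarly. Together these give $\min_a\log\widetilde E_t^{(r,a)}\ge\log w_{\lambda_{\mathrm{pw}}}+c\Delta_w^2t$ except with probability $C\Delta_w^{-1}e^{-c\Delta_w^2t}$. This exceeds $\log(3/\varepsilon)$ once $t\ge T_0$.

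\textbf{Assembly.} Summing the two exponential tails over $t\ge T_0$ gives $O(\Delta_w^{-2}+\mu_r^{-1})$, and adding $T_0$ yields \eqref{eq:weighted-headline-rate}. We expect the delicate point to be the light-competitor uniform bound. If the weighted union bound proves too lossy, our fallback is to bound $\sup_a N_t(a)$ via the DKW-type inequality for the empirical measure over singletons, which is dimension-free.
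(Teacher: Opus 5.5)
Your proposal is correct. Its LCB--unseen half is essentially the paper's argument: one grid point $\lambda_r$ at $q_0=\mu_r/2$, Bernstein on $\sum_i W_i\ind\{X_i=r\}$ with the variance controlled by $\mu_r$ via $W^2\le W$, $U_t\le\mu_r/4$, and the tail-sum identity. Obtaining $\PP(\tau^{(w)}<\infty)=1$ directly from the tail bound also makes the paper's separate SLLN argument unnecessary.

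The all-competitors pairwise step takes a genuinely different route. The paper works with the exact log-factors $A_i,B_i(a)$ and the single statistic $S_t^{(w)}=\sum_iA_i-\max_{a\neq r}\sum_iB_i(a)$. It bounds $\EE\max_a\sum_iB_i(a)\le tv_\lambda^\star+O(\lambda\sqrt{t\log(t+1)})$ by symmetrization over at most $t+1$ distinct disjoint-support vectors (Lemma~\ref{lem:weighted-max-rival}), then applies McDiarmid to $S_t^{(w)}$ as a whole, giving tails $(et)^{-2}$ past $T^{(w)}_{\mathrm{pw}}$. There the $\log(1/\Delta_w)$ comes from the $\sqrt{t\log t}$ term. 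You instead use $\log(1+y)\ge y-y^2$; the quadratic term is even deterministically at most $\lambda^2t\le\lambda\Delta_w t/4$. You concentrate the target and competitor sums separately and get uniformity from a heavy/light split: a Hoeffding union bound over the at most $8/\Delta_w$ heavy labels (the source of your $\log(1/\Delta_w)$), plus a probability-weighted union bound over light labels in the spirit of Proposition~\ref{prop:unseen-valid}. Your route is more elementary, avoids empirical-process symmetrization, and yields exponentially decaying tails for $\tau^{(w)}$. The paper's route is a one-shot argument with no case split that reuses the unweighted Lemma~\ref{lem:max-count}.

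The one step to execute carefully is the light-label bound, because the factor $p_a$ appears only with the optimized-tilt (KL) Chernoff bound. Set $x=\Delta_w/4$ and $p_a\le x/2$.
\begin{itemize}
\item A fixed-tilt bound $\exp\{-sxt+p_at(e^s-1)\}$ does not vanish as $p_a\to0$, so it cannot be summed over infinitely many labels.
\item The crude bound $\binom{t}{k}p_a^{k}$ with $k=\lceil xt\rceil$ grows exponentially in $xt$ at $p_a=x/2$.
\item The KL bound gives $\PP(N_t(a)\ge xt)\le(p_a/x)^{xt}e^{(x-p_a)t}\le2(p_a/x)e^{-(\log2-1/2)xt}$ once $xt\ge2$. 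This sums over light labels to $O(\Delta_w^{-1}e^{-c\Delta_w t})$.
\end{itemize}
Alternatively, pack the light labels into $O(1/\Delta_w)$ groups of mass at most $\Delta_w/8$ and union-bound over groups. Your DKW fallback on an enumeration of $\cA$ also works. Either suffices because light labels only need the count bound $N_t(a)\le\Delta_w t/4$, not a weighted statement.
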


The weighted guarantee mirrors the unweighted theory \eqref{eq:headline-rate} with
$(p_r,\delta)$ replaced by $(\mu_r,\Delta_w)$. The bound suggests that weighting
can improve stopping time when it increases the effective gap
$\Delta_w$ without reducing the target weighted mass $\mu_r$ too much.

\section{Empirical Validation}
\label{sec:experiments}

We evaluate CITE on simulated multinomial distributions and on
LLM self-consistency benchmarks. The experiments are designed to evaluate the following
quantities: empirical false certification (Type-I error) under null
targets, empirical power measured by certification probability under
alternatives, and stopping time or sample efficiency, both on synthetic
multinomial models and on LLM answer distributions. Baselines are \textbf{Bonferroni}
(fixed-sample union bound), \textbf{KR}~\citep{kim2025locally}
(fixed-target sample-split test), and
\textbf{MMC}~\citep{cordero2025certified} (sequential
leader-tracking), all at level $\varepsilon = 0.05$. 
We do not include adaptive stopping methods without theoretical guarantees of Type-I error control.
Detailed protocols, per-setting tables, and ablations on component scaling
and the confidence-weighted W-CITE are deferred to
Appendices~\ref{app:sim} and~\ref{app:llm}.

\paragraph{Simulations.}
We evaluate CITE on five multinomial settings
(Appendix~\ref{app:sim-settings}).  First, we check false
certification under the fixed-target null.  In Case~B, across all five
settings and all sample sizes, CITE has empirical
false-certification rate zero, consistent with
Theorem~\ref{thm:main-iut}; the same is true for W-CITE and
fixed-sample Bonferroni.  By contrast, KR shows small nonzero
false-certification rates on Settings~3--4, up to $0.002$, and MMC
shows nonzero false certifications in the same regimes, reaching
$0.006$ on Setting~3 and $0.014$ on Setting~4
(Table~\ref{tab:type1-sim}).

Second, we evaluate certification power and stopping time under the
alternative.  In the easy concentrated Setting~2, all methods saturate
quickly: at $N=64$, CITE certifies with rate $0.890$ and mean stopping
time $\bar\tau\approx36$, while Bonferroni and KR achieve rates
$0.928$ and $0.946$, respectively.  In the diffuse Setting~1, CITE is
more efficient than Bonferroni before saturation, with certification
rates $0.188$ vs.\ $0.100$ at $N=64$ and $0.892$ vs.\ $0.804$ at
$N=128$, reaching full certification by $N=256$ with
$\bar\tau\approx90$.  In the moderate LLM-like Setting~5, CITE reaches
rate $0.544$ at $N=256$ and $0.932$ at $N=512$
with $\bar\tau\approx234$, whereas MMC remains at rate at most
$0.002$ through $N=2048$.  KR often has higher finite-sample power, but
it is a fixed-sample baseline rather than an anytime-valid stopping
rule.  In the near-tie Setting~3 and very diffuse Setting~4, all
methods remain low-power within the evaluated budget range.

Finally, the component-level simulations are consistent with the two
terms in Theorem~\ref{thm:power-iut-clean}.  When $p_r$ is varied at
fixed gap, the LCB--unseen component follows the predicted
$p_r^{-1}$ dependence; in the reported sweep,
$\bar\tau_{\mathrm{lu}}p_r$ stays between roughly $14$ and $20$.  When
$\delta$ is varied at fixed $p_r=0.24$, the LCB--unseen component stays
approximately constant, around $\bar\tau_{\mathrm{lu}}\approx70$, while
the pairwise component is the part most affected by the modal gap
(Appendix~\ref{app:sim-bottleneck}).

\paragraph{LLM self-consistency.}
We evaluate the performance of Qwen3-30B
(\texttt{Qwen3-\allowbreak 30B-\allowbreak A3B-\allowbreak Instruct-\allowbreak2507})~\citep{qwen2025qwen3} and
gpt-oss-20b
(\texttt{openai/gpt-oss-20b})~\citep{openai2025gptoss} on
AIME~2026~\citep{dekoninck2026matharena},
FrontierScience-Olympiad~\citep{openai2025frontierscience}, and
the clinical benchmark
ER-REASON~\citep{mehandru2025erreason}.
Figure~\ref{fig:iut-impact} plots mean accuracy against the
sample budget; the horizontal axis is the mean stopping time for the
sequential methods (CITE, W-CITE, MMC) and the prefix sample
size~$N$ for the fixed-sample baselines (Bonferroni, KR).

\begin{figure}[t]
\centering
\includegraphics[width=\textwidth]{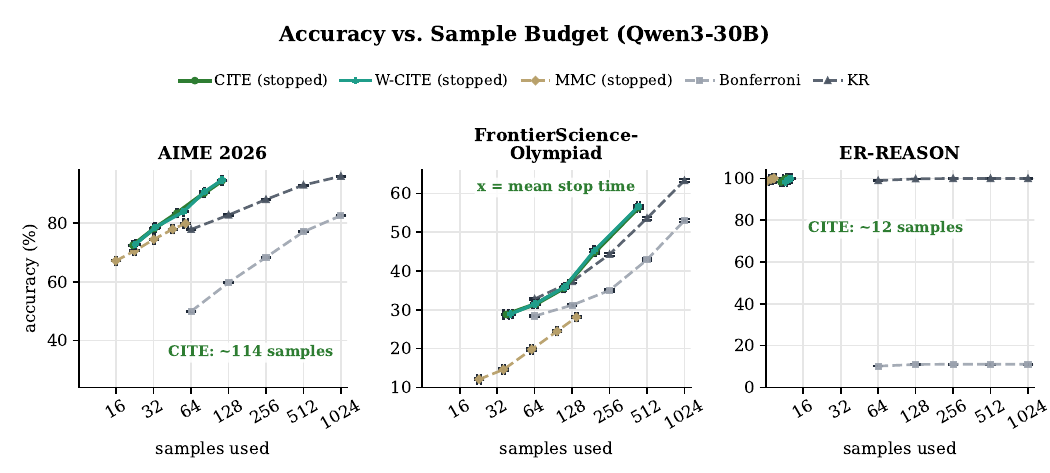}\\
\includegraphics[width=\textwidth]{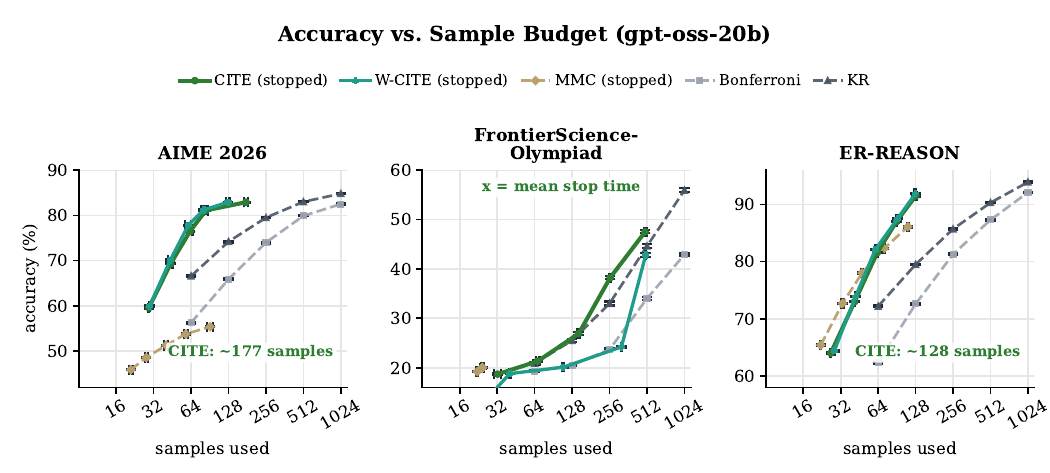}
\caption{Mean accuracy vs.\ sample budget on Qwen3-30B (top) and
gpt-oss-20b (bottom), across AIME~2026,
FrontierScience-Olympiad, and ER-REASON. For the sequential
methods (CITE, W-CITE, MMC) the horizontal axis is the mean
stopping time; for the fixed-sample baselines (Bonferroni, KR)
it is the prefix sample size~$N$.}
\label{fig:iut-impact}
\end{figure}

When the target is set to the empirical runner-up rather than the mode,
the maximum false-certification rate over all evaluated methods, problems,
and budgets is $0.052$ on gpt-oss-20b/ER-REASON and at most
$0.012$ on every other tested (model, dataset) pair; both lie
within Monte Carlo error of the nominal level $\varepsilon = 0.05$
given the multiplicity of comparisons. On Qwen3-30B/ER-REASON,
fixed-sample Bonferroni saturates at certification rate $0.111$
for $N \ge 256$, whereas CITE attains certification rate
$\ge0.98$ at mean stopping time $\bar\tau\approx12$.  This gives a
direct stopping-time diagnostic: in this concentrated-answer regime,
CITE certifies after roughly a dozen samples on average, while the
fixed-sample baseline remains weak even at much larger budgets.  At the largest evaluated budget $N = 1024$, the CITE
certification rate exceeds that of MMC by a factor of
approximately $2$ on FrontierScience-Olympiad (where the
empirical answer set contains about $250$ distinct values per
problem on gpt-oss-20b), while on ER-REASON the two methods
differ by at most $0.054$ across all budgets, attained on
gpt-oss-20b. Relative to the KR baseline, the CITE certification
rate is lower by at most $0.087$, attained on
Qwen3-30B/FrontierScience at $N \in \{256, 512\}$, while
retaining anytime validity, whereas KR, by construction, is not valid under adaptive stopping.
Per-problem and per-budget tables and the corresponding results
for the W-CITE are deferred to
Appendix~\ref{app:llm}.

\section{Conclusion}

We introduced CITE for sequentially certifying the
unique mode of a discrete distribution on a countable, possibly
infinite category set.  %Decomposing the unique-mode null into pairwise,target-LCB, and unseen-mas s component nulls budgeted 
Our theory guarantees that anytime-valid Type-I error control,  %(Theorem~\ref{thm:main-iut}) and an
the stopping time has
%$\log(1/\varepsilon)/\delta^{2}$
%to leading order (Theorem~\ref{thm:power-iut-clean}), 
the category-set-size free order, and the order matches  %multiplicity penalty.  
the information-theoretic lower bound %$\Omega(\log(1/\varepsilon) p_r/\delta^{2})$
%(Theorem~\ref{thm:minimax-combined-lower-bound}) coincides 
up to logarithmic factors. 
%with the upper bound up to universal constants when $p_r$ is bounded below.  
%Appendix~\ref{app:sharp-constants} further shows that an oraclepairwise benchmark achieves the sharp leading constant$4p_r/\delta^{2}$ in the small-gap regime.  
%The decomposition 
We also extend CITE to confidence-weighted voting and develop its weighted version W-CITE. % and recovers the unweighted CITE when $W \equiv 1$ (Proposition~\ref{rop:recovery}).
%The separation of the pairwise and unseen-mass component nulls is essential in the diffuse-tail regime ($p_{(1)} > p_{(2)}$ but $p_{(1)} \leq \sum_{j \geq 2} p_{(j)}$), where leader-tracking certificates such as MMC certify with probability at most the Type-I level $\varepsilon$ on a provable least-favourable family (Proposition~\ref{prop:strict}; Appendix~\ref{sec:mmc}).
In our experiments, both numerical simulations and real LLM environments demonstrated that CITE achieves high accuracy while controlling Type-I errors.

\paragraph{Limitations and future work.}
CITE provides statistical guarantee that a prespecified label is the unique mode of the model's response distribution. This is logically distinct from, and complementary to, \emph{answer correctness}: the two compose, and neither subsumes the other, so a downstream system can combine our anytime-valid mode certificate with any separate correctness signal. Within this scope, a natural extension is to certify the empirically observed top category $\widehat r_t \in \arg\max_a N_t(a)$, i.e., a data-adaptive target; this requires a Bonferroni-type correction over the random sequence of empirical leaders, paralleling the construction of~\citet{cordero2025certified}, and whether the category-size-free rate of Theorem~\ref{thm:power-iut-clean} survives this correction is left open.

% Bibliography inlined for arXiv; no BibTeX run required.

\newpage

\appendix

\section{Related Works}
\label{app:related}

\paragraph{Self-consistency and test-time compute for LLM reasoning.}
Self-consistency and related inference-time scaling methods improve
LLM reasoning by sampling multiple reasoning trajectories and
aggregating their terminal answers, typically by majority voting
\citep{wei2022chain,zhou2023least,wang2023selfconsistency,yao2023tree,
brown2025monkeys,snell2025scaling}.  These methods motivate our
statistical formulation: after normalization, repeated LLM outputs are
viewed as i.i.d. draws from an unknown categorical distribution over
terminal response labels.  Our work is not a new prompting, search, or
reranking method.  Instead, it studies when one may stop sampling and
certify, with a prescribed error level, that a prespecified response
label is the unique mode of this distribution.  This is a guarantee
about the model-induced response distribution, and is distinct from
semantic correctness of the answer.

\paragraph{Adaptive self-consistency and early stopping.}
A growing line of work reduces the cost of self-consistency by adapting
the number of sampled trajectories to the apparent difficulty of each
query.  Adaptive-Consistency uses posterior agreement among sampled
answers as a lightweight stopping signal \citep{aggarwal2023let};
Early-Stopping Self-Consistency stops when a low-entropy window
indicates sufficient agreement \citep{li2024escape}; difficulty-adaptive
and dynamic variants allocate inference budgets using difficulty
estimates, answer-distribution uncertainty, or reasoning-path quality
\citep{wang2025penny,wang2025dynscaling}; and confidence-aware methods
use response-level confidence or verifier-like scores to improve
aggregation and stopping
\citep{taubenfeld2025confidence,aghazadeh2025cges,fu2025deep}.  These
methods are primarily designed to improve the accuracy--cost tradeoff.
By contrast, CITE is designed for statistical certification: it controls
the probability of falsely certifying the target as the unique mode
under arbitrary data-driven stopping, at a user-specified level
$\varepsilon$.

\paragraph{Certified self-consistency.}
The closest LLM-specific statistical work is the Martingale Majority
Certificate (MMC) of \citet{cordero2025certified}, which develops
finite-sample and anytime-valid certificates for self-consistency and
uses martingale arguments to determine when enough samples have been
drawn.  
MMC provides an important first step toward certified self-consistency, but it certifies a stronger majority/residual-tail condition than the fixed-target unique-mode property considered here.
Our target differs in two respects.  First, CITE tests a
fixed-target unique-mode alternative,
$p_r > \sup_{a\ne r} p_a$, whereas MMC-style certificates are naturally
tied to a stronger majority or residual-tail domination condition.
These two notions coincide only in non-diffuse regimes.  In LLM
self-consistency, however, the modal answer may have substantially
larger mass than every individual competitor while still being smaller
than the aggregate mass of the long tail. 
Proposition~\ref{prop:strict} shows that on a provable least-favourable family in this regime, the MMC certifies with probability at most the nominal level $\varepsilon$.
Second, CITE explicitly
handles the possibility that a high-mass competitor has not yet been
observed.  This unseen-category component is essential when the set of
normalized free-form answers is unknown and grows during sampling.

\paragraph{Anytime-valid inference, e-values, and confidence sequences.}
Our construction builds on the classical martingale view of sequential
testing \citep{ville1939} and the modern theory of e-values,
e-processes, testing by betting, and confidence sequences
\citep{howard2020chernoff,howard2021confidence,shafer2021betting,
vovkWang2021evalues,lindon2022anytime,grunwald2024safe,ramdas2025hypothesis}.  These
tools provide tests and confidence statements that remain valid under
continuous monitoring and data-dependent stopping.  CITE instantiates
this machinery for a composite categorical unique-mode problem: it
combines pairwise e-processes against observed competitors with a
time-uniform lower confidence bound for the target mass and an upper
bound for unseen competitors.  The resulting intersection-union test
yields a certification rule whose Type-I error is valid uniformly over
all stopping times.

\paragraph{Best-arm identification and discrete argmin inference.}
The fixed-target unique-mode problem is related to fixed-confidence
best-arm identification and pure-exploration bandits
\citep{garivier2016optimal,kaufmann2016complexity,kaufmann2021mixture}.
Those works characterize the sample complexity of identifying the best
arm under sequential sampling and often rely on a known finite set of
arms.  Our setting is different: each LLM query returns a single
categorical draw from the entire response distribution, the category set
is not known in advance, and a previously unseen response can become a
relevant competitor.  The closest fixed-target finite-dimensional
inference problem is the dimension-agnostic discrete argmin test of
\citet{kim2025locally}, which we include as a baseline.  CITE differs by
providing an anytime-valid stopping rule for a countable and unknown
category set, rather than a fixed-sample or sample-split test over a
given finite vector.

\paragraph{Unseen species, missing mass, and support-agnostic inference.}
The need to reason about unobserved response labels connects our work to
the classical unseen-species and missing-mass literature, including
Good--Turing and Good--Toulmin estimation
\citep{good1953population,goodtoulmin1956new}, concentration bounds for
the missing mass
\citep{mcallesterortiz2003missingmass,berendkontorovich2013missingmass},
and minimax prediction of unseen species
\citep{orlitsky2016unseen}.  These works estimate or bound quantities
such as the aggregate probability of unseen outcomes or the number of
new species.  CITE uses a different object: for certification of a
unique mode, it is enough to rule out the existence of a single unseen
competitor whose mass could exceed the target.  The unseen component of
CITE is therefore tailored to sequential modal certification rather than
to estimating the entire missing mass or support size.

\paragraph{Confidence-weighted aggregation.}
Recent work shows that response-level confidence can improve
self-consistency by weighting or prioritizing high-confidence reasoning
paths \citep{taubenfeld2025confidence,aghazadeh2025cges,fu2025deep}.
Our weighted extension formalizes a complementary question: when the
observations are pairs $(X_i,W_i)$ with bounded confidence weights, can
one certify that a target label is the unique weighted mode?  W-CITE
extends the same anytime-valid and category-agnostic guarantee to this
weighted setting, with the weighted gap replacing the unweighted modal
gap in the stopping-time analysis.

\section{Proofs for Section~\ref{sec:iut}}
\label{app:proofs-sec:iut}
\label{app:ville}

Let $\alpha_{\mathrm{pw}}, \alpha_r, \alpha_u > 0$ be error levels of each component satisfying
$\alpha_{\mathrm{pw}}+\alpha_r+\alpha_u\le\varepsilon$.
The main text uses the equal split
\[
  \alpha_{\mathrm{pw}}
  =
  \alpha_r
  =
  \alpha_u
  =
  \varepsilon/3 .
\]
For readability, the Appendix proofs keep the symbols
$\alpha_{\mathrm{pw}},\alpha_r,\alpha_u$ and substitute the equal split
only at the end.

We prepare some supportive results as follow.

For an $(\cF_t)$-adapted nonnegative process $(M_t)_{t \geq 0}$
with $\EE_P[M_t \mid \cF_{t-1}] \leq M_{t-1}$ and $M_0 \leq 1$
(an NSM), Ville's inequality \citep{ville1939} gives the
time-uniform tail bound
\begin{equation}\label{eq:ville}
  P\Bigl(\sup_{t \geq 0} M_t \geq 1/\alpha\Bigr) \leq \alpha,
  \qquad \alpha \in (0,1],
\end{equation}
which we invoke repeatedly in what follows.

\begin{lemma}[Attainment of a positive supremum]
\label{lem:attained-supremum}
Let $(x_a)_{a\in\cB}$ be a nonnegative summable sequence indexed by a
countable set $\cB$.  If $\sup_{a\in\cB}x_a>0$, then the supremum is
attained.
\end{lemma}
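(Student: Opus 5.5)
Set $s:=\sup_{a\in\cB}x_a>0$. The idea is that summability forces all but finitely many terms to lie strictly below $s$, so the supremum is really a maximum over a finite set.

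First, the case of finite $\cB$ is immediate, since a maximum over finitely many reals is attained. So assume $\cB$ is countably infinite. Let $S:=\sum_{a\in\cB}x_a<\infty$, and consider
\[
  F:=\{a\in\cB: x_a\ge s/2\}.
\]
Every term indexed by $F$ contributes at least $s/2$ to the sum, so $|F|\,(s/2)\le S$. Hence $|F|\le 2S/s<\infty$.

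Next, $F$ is nonempty. Because $s/2<s$, the definition of the supremum gives some $a_0$ with $x_{a_0}>s/2$, and this $a_0$ lies in $F$.

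Now compare the supremum over $F$ with the supremum over its complement. For $a\notin F$ we have $x_a<s/2$, so
\[
  \sup_{a\notin F}x_a\le s/2<s.
\]
Since $\cB=F\cup(\cB\setminus F)$, we get
\[
  s=\max\Bigl(\max_{a\in F}x_a,\ \sup_{a\notin F}x_a\Bigr),
\]
where the inner maximum over $F$ is attained because $F$ is finite and nonempty. The second entry is at most $s/2<s$, so the outer maximum must equal $\max_{a\in F}x_a$. This value is attained at some $a^\star\in F$, which proves the lemma.

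There is no real obstacle here. The only points needing care are that the threshold $s/2$ must be strictly positive, so that summability bounds $|F|$, and that $F$ must be shown nonempty. Both follow directly from $s>0$. Nonnegativity of the $x_a$ is what makes the counting bound $|F|\,(s/2)\le S$ valid.
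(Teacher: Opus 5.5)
Your proof is correct and uses the same idea as the paper: summability and nonnegativity allow only finitely many indices with $x_a \ge s/2$, so the supremum is a maximum over a finite nonempty set. The paper argues by contradiction (non-attainment would give infinitely many distinct indices with $x_{a_n}>s/2$), while you argue directly, which also spells out why non-attainment forces that set to be infinite, a step the paper leaves implicit.
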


\begin{proof}
Let $s:=\sup_{a\in\cB}x_a>0$.  If no index attained $s$, then for each
$n$ one could choose a distinct $a_n\in\cB$ with $x_{a_n}>s/2$.  This
would contradict $\sum_{a\in\cB}x_a<\infty$.
\end{proof}

\begin{lemma}[Pairwise NSM]
\label{lem:pairwise-nsm}
Fix $a\neq r$ and $\lambda\in(0,1)$.  Under any $P$ with
$p_r\le p_a$, the process
$(E_t^{(r,a)}(\lambda))_{t\ge0}$ is an NSM with
$E_0^{(r,a)}(\lambda)=1$.
\end{lemma}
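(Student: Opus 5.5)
The plan is to verify the three defining properties of a nonnegative supermartingale directly from the product representation \eqref{def:pairwise-e-process}, with respect to the natural filtration $(\cF_t)$.

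\textbf{Adaptedness, nonnegativity, initial value.} $E_t^{(r,a)}(\lambda)=\prod_{i\le t}(1+\lambda Z_i^{(r,a)})$ is a deterministic function of $X_1,\dots,X_t$, so it is $\cF_t$-measurable. With the empty-product convention, $E_0^{(r,a)}(\lambda)=1$. Since $Z_i^{(r,a)}\in\{-1,0,+1\}$ and $\lambda\in(0,1)$, every factor lies in $\{1-\lambda,1,1+\lambda\}\subset(0,\infty)$. Hence the process is strictly positive, and each $E_t^{(r,a)}(\lambda)\le(1+\lambda)^t$ is bounded, so integrable.

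\textbf{Supermartingale step.} Write $E_t^{(r,a)}(\lambda)=E_{t-1}^{(r,a)}(\lambda)\,(1+\lambda Z_t^{(r,a)})$. The first factor is $\cF_{t-1}$-measurable and nonnegative, and $X_t$ is independent of $\cF_{t-1}$ by Assumption~\ref{asn:dgp}. Therefore
\[
\EE_P\bigl[E_t^{(r,a)}(\lambda)\mid\cF_{t-1}\bigr]
= E_{t-1}^{(r,a)}(\lambda)\bigl(1+\lambda\,\EE_P[Z_t^{(r,a)}]\bigr)
= E_{t-1}^{(r,a)}(\lambda)\bigl(1+\lambda(p_r-p_a)\bigr).
\]
Under $p_r\le p_a$ and $\lambda>0$ the bracket is at most $1$, which gives the supermartingale inequality.

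There is no real obstacle here. The only points needing care are positivity, which requires $\lambda<1$ so that the factor $1-\lambda$ stays positive, and the use of independence rather than mere stationarity to compute the conditional expectation. If one then wants Corollary~\ref{cor:mixture-nsm}, it follows by linearity: a nonnegative combination of NSMs with weights summing to at most $1$ is again an NSM with initial value at most $1$, and Ville's inequality \eqref{eq:ville} applies.
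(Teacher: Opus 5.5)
Your proof is correct and follows the paper's argument: $E_0^{(r,a)}(\lambda)=1$ as an empty product, positivity from $1-\lambda>0$, and the one-step identity $\EE_P[E_t^{(r,a)}(\lambda)\mid\cF_{t-1}]=E_{t-1}^{(r,a)}(\lambda)\bigl(1+\lambda(p_r-p_a)\bigr)\le E_{t-1}^{(r,a)}(\lambda)$, which uses the independence of $X_t$ from $\cF_{t-1}$. You also check adaptedness and integrability explicitly, which the paper leaves implicit.
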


\begin{lemma}[Runner-up monotonicity]
\label{lem:monotone}
For any $a,b\in\cA_t\setminus\{r\}$ with
$N_t(a)\ge N_t(b)$ and any $\lambda\in(0,1)$, it holds that
\[
  E_t^{(r,a)}(\lambda)
  \le
  E_t^{(r,b)}(\lambda).
\]
Consequently, we have
\[
E_t^{(r,a)}
  \le
E_t^{(r,b)}.
\]
\end{lemma}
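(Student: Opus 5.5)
The plan is to use the closed form \eqref{def:pairwise-e-process}, which writes the pairwise wealth as a function of the two counts only:
\[
  E_t^{(r,a)}(\lambda)=(1+\lambda)^{N_t(r)}(1-\lambda)^{N_t(a)} .
\]
For fixed $t$ and $\lambda$, the target-dependent factor $(1+\lambda)^{N_t(r)}$ is common to every competitor $a$. The comparison between competitors $a$ and $b$ therefore reduces to comparing $(1-\lambda)^{N_t(a)}$ with $(1-\lambda)^{N_t(b)}$.

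First I establish the pointwise inequality. Since $\lambda\in(0,1)$, the base satisfies $0<1-\lambda<1$, so $n\mapsto(1-\lambda)^n$ is nonincreasing on the nonnegative integers. The hypothesis $N_t(a)\ge N_t(b)$ then gives $(1-\lambda)^{N_t(a)}\le(1-\lambda)^{N_t(b)}$. Multiplying both sides by the strictly positive quantity $(1+\lambda)^{N_t(r)}$ preserves the inequality and yields $E_t^{(r,a)}(\lambda)\le E_t^{(r,b)}(\lambda)$. The restriction $a,b\in\cA_t\setminus\{r\}$ is not actually needed for this step; the inequality holds for any non-target labels.

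Next I pass to the mixture. Definition~\ref{def:mixture-e} gives $E_t^{(r,a)}=\sum_{\lambda\in\Lambda_{\mathrm{pw}}}w_\lambda E_t^{(r,a)}(\lambda)$, a finite sum with positive weights $w_\lambda$. Since the pointwise inequality holds for every $\lambda$ in the grid, multiplying each instance by $w_\lambda>0$ and summing gives $E_t^{(r,a)}\le E_t^{(r,b)}$.

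I do not expect a genuine obstacle; the only care needed is the sign conditions ($\lambda<1$ so the base lies in $(0,1)$, and positive weights). I would also add the remark used later in the proof of Theorem~\ref{thm:main-iut}. Since $\widehat a_t$ maximizes $N_t(\cdot)$ over $\cA_t\setminus\{r\}$, we have $N_t(\widehat a_t)\ge N_t(a^\star)$ for any observed witness $a^\star$. Hence $E_t\le E_t^{(r,a^\star)}$, so the event $E_t\ge 3/\varepsilon$ forces the fixed-competitor process to cross $3/\varepsilon$.
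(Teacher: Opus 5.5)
Your proposal is correct and follows the paper's own proof: the common factor $(1+\lambda)^{N_t(r)}$ leaves only $(1-\lambda)^{N_t(a)}\le(1-\lambda)^{N_t(b)}$ to check, which holds because $0<1-\lambda<1$. Summing with the nonnegative weights $w_\lambda$ then gives the mixture inequality.
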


\subsection{Proof of Lemma~\ref{lem:pairwise-nsm}}
\label{app:proof-lem:pairwise-nsm}
 
\begin{proof}
We have $E_0 = 1$ (empty product) and $E_t \geq 0$ since each factor
$1 + \lambda Z_i \geq 1 - \lambda > 0$.  For the supermartingale
property:
\begin{align*}
  \EE[E_t^{(r,a)}(\lambda) \mid \cF_{t-1}]
  &= E_{t-1}^{(r,a)}(\lambda)
     \cdot (1 + \lambda \EE[Z_t^{(r,a)}]) \\
  &= E_{t-1}^{(r,a)}(\lambda)
     \cdot (1 + \lambda(p_r - p_a)),
\end{align*}
using the i.i.d.\ property.  Under $p_r \leq p_a$, the factor
$1 + \lambda(p_r - p_a) \leq 1$.
\end{proof}

\subsection{Proof of Corollary~\ref{cor:mixture-nsm}}
\label{app:proof-cor:mixture-nsm}
 
\begin{proof}
For each $\lambda \in \Lambda_{\mathrm{pw}}$,
$(E_t^{(r,a)}(\lambda))_{t \geq 0}$ is an NSM under $p_r \leq p_a$
by Lemma~\ref{lem:pairwise-nsm}.  Since
\[
  {E}_t^{(r,a)}
  =
  \sum_{\lambda\in\Lambda_{\mathrm{pw}}}
  w_\lambda E_t^{(r,a)}(\lambda)
\]
is a nonnegative weighted sum with total weight at most one, it is also
an NSM and satisfies $ E_0^{(r,a)}\le1$.  Ville's inequality
\eqref{eq:ville} then gives
\[
  \PP_P\left(
    \sup_{t\ge0}
     E_t^{(r,a)}
    \ge
    \alpha^{-1}
  \right)
  \le
  \alpha .
\]
\end{proof}

\subsection{Proof of Lemma~\ref{lem:monotone}}
\label{app:proof-lem:monotone}

\begin{proof}
By~\eqref{def:pairwise-e-process},
$E_t^{(r,a)}(\lambda) = (1+\lambda)^{N_t(r)}(1-\lambda)^{N_t(a)}$.
The factor $(1+\lambda)^{N_t(r)}$ depends only on $r$ and is common to both
$E_t^{(r,a)}(\lambda)$ and $E_t^{(r,b)}(\lambda)$.
Since $0 < 1 - \lambda < 1$ and $N_t(a) \geq N_t(b)$,
\[
  (1-\lambda)^{N_t(a)} \leq (1-\lambda)^{N_t(b)},
\]
whence $E_t^{(r,a)}(\lambda) \leq E_t^{(r,b)}(\lambda)$.  Multiplying
by the nonnegative weights $w_\lambda$ and summing over
$\lambda\in\Lambda_{\mathrm{pw}}$ gives
\[
   E_t^{(r,a)}
  \le
   E_t^{(r,b)}.
\]
\end{proof}

\subsection{Proof of Proposition~\ref{prop:lcb-valid}}
\label{app:proof-prop:lcb-valid}

\begin{proof}
If $p_r = 0$, then $N_t(r) = 0$ a.s.\ for all $t$, so $\widehat p_t(r) = 0$
and $L_t(r) = 0 = p_r$ a.s.;
the event $\{\exists t : L_t(r) > p_r\}$ has probability $0$.
Assume $p_r > 0$ in the remainder.

\smallskip
\noindent\textbf{Step 1: Monotonicity in $q$.}
For any $0 < q_1 < q_2 \leq 1$, $\Lambda_r(q_2) \subseteq \Lambda_r(q_1)$
and each factor
$1 + \lambda(\ind\{X_i=r\} - q)$ is nonincreasing in $q$; hence
$q \mapsto  M_t(q)$ is nonincreasing.

\smallskip
\noindent\textbf{Step 2: NSM at $q = p_r$.}
Set $M_t(q,\lambda) := \prod_{i=1}^t (1 + \lambda(\ind\{X_i=r\} - q))$.
For $\lambda \in \Lambda_r(p_r)$, every factor
$1 + \lambda(\ind\{X_i=r\} - p_r) \geq 1 - \lambda p_r > 0$,
and $\EE[1 + \lambda(\ind\{X_i=r\} - p_r)] = 1$, so
$(M_t(p_r, \lambda))_{t \geq 0}$ is a nonnegative martingale with
initial value $1$.  The mixture
\[
   M_t(p_r)
  := \sum_{\lambda \in \Lambda_r(p_r)} v_\lambda M_t(p_r, \lambda)
\]
is therefore an NSM with
$ M_0(p_r) = \sum_{\lambda \in \Lambda_r(p_r)} v_\lambda \leq 1$
(Corollary-style argument analogous to Corollary~\ref{cor:mixture-nsm}).
Ville's inequality~\eqref{eq:ville} gives
\begin{equation}\label{eq:ville-lcb-pr}
  \PP\Bigl(\sup_{t \geq 0}  M_t(p_r) \geq 1/\alpha_r\Bigr)
  \leq \alpha_r.
\end{equation}

\smallskip
\noindent\textbf{Step 3: Event inclusion.}
Suppose $L_t(r) > p_r$ for some $t \geq 1$.  By the definition of
$L_t(r)$, there exists $q \in (p_r, \widehat p_t(r)]$ with
$ M_t(q) \geq 1/\alpha_r$.  Step~1 then implies
$ M_t(p_r) \geq  M_t(q) \geq 1/\alpha_r$.  Hence
\[
  \{\exists t \geq 1 : L_t(r) > p_r\}
  \subseteq
  \{\sup_{t \geq 0}  M_t(p_r) \geq 1/\alpha_r\},
\]
and the desired bound follows from~\eqref{eq:ville-lcb-pr}.
\end{proof}

\subsection{Proof of Proposition~\ref{prop:unseen-valid}}
\label{app:proof-prop:unseen-valid}
 
\begin{proof}
For a fixed integer $t \geq 1$, define
$f_t(u) := u^{-1}(1-u)^t$ for $u \in (0,1]$.
Since $(\log f_t)'(u) = -1/u - t/(1-u) < 0$,
$f_t$ is strictly decreasing on $(0,1]$.
 
Now fix $a \in \cA$ with $p_a > 0$.
Since $U_t$ is nonincreasing in $t$, define
$t_*(a) := \inf\{t \geq 1 : p_a > U_t\}$ (possibly $+\infty$).
If $t_*(a)=\infty$, then $p_a \leq U_t$ for all $t$, so the event
$\{\exists t : N_t(a)=0, p_a > U_t\}$ is empty.
Otherwise, $t_*(a) < \infty$ and $p_a > U_{t_*(a)}$.
For any $t \geq t_*(a)$ with $p_a > U_t$, we have $N_t(a)=0$ only if
$N_{t_*(a)}(a)=0$ (since counts are nondecreasing), so the event
$\{\exists t : N_t(a)=0, p_a > U_t\}$ is contained in $\{N_{t_*(a)}(a)=0\}$.
By definition of $U_{t_*(a)}$, we have
$f_{t_*(a)}(U_{t_*(a)}) \leq \alpha_u$.
Since $p_a > U_{t_*(a)}$ and $f_{t_*(a)}$ is strictly decreasing,
$f_{t_*(a)}(p_a) = p_a^{-1}(1-p_a)^{t_*(a)} < f_{t_*(a)}(U_{t_*(a)}) \leq \alpha_u$, hence
\[
  \PP\bigl(N_{t_*(a)}(a)=0\bigr)
  = (1-p_a)^{t_*(a)}
  < \alpha_u p_a.
\]
A probability-weighted union bound over the countable category set gives
\[
  \PP\Bigl(
    \exists a \in \cA, \exists t \geq 1 :
    N_t(a)=0 \text{ and } p_a > U_t
  \Bigr)
  \leq
  \sum_{a \in \cA} \alpha_u p_a
  =
  \alpha_u.
  \qedhere
\]
\end{proof}

\subsection{Proof of Theorem~\ref{thm:main-iut}}
\label{app:proof-thm:main-iut}

\begin{proof}
Fix any $P\in H_{0,r}$. %By the witness argument in
%Appendix~\ref{app:conventions}, 
We choose a deterministic
$a^\star\neq r$ such that
\[
  p_{a^\star}\ge p_r .
\]
We decompose the Type-I event according to whether $a^\star$ has been
observed by time $\tau$:
\begin{equation}\label{eq:decomp}
  \{\tau < \infty\}
   \subseteq 
  \underbrace{\{\tau < \infty,\ a^\star \in \cA_\tau\}}_{E_1}
   \cup 
  \underbrace{\{\tau < \infty,\ a^\star \notin \cA_\tau\}}_{E_2}.
\end{equation}

\medskip
\noindent\textbf{Step 1 (bounding $\PP(E_1)$).}
On $E_1$, the witness $a^\star$ is observed at time $\tau$, so
$\cA_\tau\setminus\{r\}\neq\emptyset$.  Since CITE stops at $\tau$, the
pairwise part of the stopping rule gives
\[
   E_\tau^{(r,\widehat a_\tau)}
  =
  E_\tau
  \ge
  \alpha_{\mathrm{pw}}^{-1}.
\]
By the definition of the empirical runner-up,
$N_\tau(\widehat a_\tau)\ge N_\tau(a^\star)$.  Lemma~\ref{lem:monotone}
then gives
\[
   E_\tau^{(r,\widehat a_\tau)}
  \le
   E_\tau^{(r,a^\star)}.
\]
Thus
\[
   E_\tau^{(r,a^\star)}
  \ge
  \alpha_{\mathrm{pw}}^{-1}.
\]
Since $a^\star$ is deterministic under $P$ and
$p_r\le p_{a^\star}$, Corollary~\ref{cor:mixture-nsm} and Ville's
inequality yield
\begin{equation}\label{eq:e1-bound}
  \PP_P(E_1)
  \le
  \PP_P\left(
    \sup_{t\ge0}
     E_t^{(r,a^\star)}
    \ge
    \alpha_{\mathrm{pw}}^{-1}
  \right)
  \le
  \alpha_{\mathrm{pw}}.
\end{equation}
\medskip
\noindent\textbf{Step 2 (bounding $\PP(E_2)$).}
On $E_2$, CITE stops at $\tau$, so $L_\tau(r)>U_\tau$, and
$a^\star$ is unseen at $\tau$.  Define the two failure events
\begin{align*}
  F_r &:= \{\exists t \geq 1 : L_t(r) > p_r\}, \\
  F_u &:= \{\exists a \in \cA,\ \exists t \geq 1 :
            N_t(a) = 0 \text{ and } p_a > U_t\}.
\end{align*}
Propositions~\ref{prop:lcb-valid} and~\ref{prop:unseen-valid} give
$\PP_P(F_r) \leq \alpha_r$ and $\PP_P(F_u) \leq \alpha_u$.
On $E_2 \setminus (F_r \cup F_u)$, we have
$L_\tau(r)\le p_r$ and, because $a^\star$ is unseen at $\tau$,
$p_{a^\star}\le U_\tau$.  Combining these inequalities with
$L_\tau(r)>U_\tau$ yields
\[
  p_{a^\star}
  \le
  U_\tau
  <
  L_\tau(r)
  \le
  p_r,
\]
which contradicts $p_{a^\star}\ge p_r$.  Hence,
$E_2 \subseteq F_r \cup F_u$, and
\begin{equation}\label{eq:e2-bound}
  \PP_P(E_2)  \leq  \PP_P(F_r) + \PP_P(F_u)  \leq  \alpha_r + \alpha_u.
\end{equation}

\medskip
\noindent\textbf{Step 3 (union bound).}
Combining \eqref{eq:e1-bound}--\eqref{eq:e2-bound},
\[
  \PP_P(\tau < \infty)
   \leq  \PP_P(E_1) + \PP_P(E_2)
   \leq  \alpha_{\mathrm{pw}} + \alpha_r + \alpha_u
   \leq  \varepsilon.
  \qedhere
\]
\end{proof}

\section{Power Analysis for CITE}
\label{app:power-iut}

Throughout this section, $X_1,X_2,\ldots$ are i.i.d.\ on the countable
category set $\cA$, and the fixed target $r$ is the unique mode.  Write
\[
  p_{(2)}
  :=
  \sup_{a\neq r}p_a,
  \qquad
  \delta
  :=
  p_r-p_{(2)}
  >
  0.
\]
For the analysis, define the conservative competitor count
\[
  R_t
  :=
  \max_{a\neq r}N_t(a),
\]
where zero-count competitors are included in the maximum, and define
\[
  E_t^\circ
  :=
  \sum_{\lambda\in\Lambda_{\mathrm{pw}}}
  w_\lambda
  (1+\lambda)^{N_t(r)}
  (1-\lambda)^{R_t}.
\]
The corresponding conservative stopping time is
\[
  \tau^\circ
  :=
  \inf\left\{
    t\ge1:
    E_t^\circ\ge\alpha_{\mathrm{pw}}^{-1}
    \text{ and }
    L_t(r)>U_t
  \right\}.
\]
This stopping time is no earlier than the implemented CITE stopping
time: replacing the observed runner-up count by
$R_t=\max_{a\neq r}N_t(a)$ can only make the pairwise condition harder,
while the LCB--unseen condition is unchanged.  Hence
\[
  \tau\le\tau^\circ
  \quad\text{pathwise},
\]
and it suffices to upper-bound $\EE[\tau^\circ]$.

\subsection{Auxiliary lemmas for the clean stopping-time bound}

\begin{lemma}[A deterministic logarithmic inequality]
\label{lem:det-log}
Let $A>0$, $C>0$, and $t>0$.  If
\[
  t
  \ge
  2A\log\bigl((AC)\vee e\bigr),
\]
then
\[
  t
  \ge
  A\log(Ct).
\]
\end{lemma}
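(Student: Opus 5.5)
The plan is to rescale time and reduce everything to a one-variable inequality. Put $x := t/A > 0$ and $L := \log\bigl((AC)\vee e\bigr)$. Then $L \ge 1$ and $\log(AC) \le L$. The hypothesis reads $x \ge 2L$, and the conclusion $t \ge A\log(Ct)$ is equivalent, after dividing by $A>0$, to
\[
  x \ge \log(Ct) = \log(AC) + \log x .
\]
Since $\log(AC)\le L$, it suffices to show $x - \log x \ge L$ whenever $x \ge 2L$.

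\textbf{Step 1 (monotonicity).} The map $g(x) := x - \log x$ has $g'(x) = 1 - 1/x \ge 0$ for $x \ge 1$. Because $2L \ge 2 > 1$, $g$ is nondecreasing on $[2L,\infty)$. So it is enough to check the endpoint, $g(2L) \ge L$, which is the same as $L \ge \log(2L)$, i.e.\ $e^{L} \ge 2L$.

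\textbf{Step 2 (endpoint inequality).} For $L \ge 1$, the elementary bound $e^{L-1} \ge L$ (from $e^{y}\ge 1+y$ with $y = L-1$) gives
\[
  e^{L} \ge eL \ge 2L .
\]
Combining the two steps yields $x - \log x \ge g(2L) \ge L \ge \log(AC)$, which is the claim.

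\textbf{Expected obstacle.} There is no real obstacle. The only care needed is the role of the truncation $\vee e$: it guarantees $L\ge 1$, which makes Step 1 valid (we need $x \ge 1$) and Step 2 valid (we need $e^{L}\ge 2L$), even when $AC$ is small. The case $Ct \le 1$ needs no separate treatment: then the right-hand side $A\log(Ct)$ is nonpositive and the conclusion is immediate, but the argument above already covers it.
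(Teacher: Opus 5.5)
Your proof is correct and follows the paper's route: the same rescaling $u=t/A$, the same split $\log(Ct)=\log(AC)+\log u$, and the same bound $\log(AC)\le \log\bigl((AC)\vee e\bigr)$. The only difference is the last step. The paper bounds each piece by $u/2$ directly, the first from the hypothesis and the second via $\log u\le u/2$; you instead use monotonicity of $u-\log u$ and check the endpoint inequality $e^{L}\ge 2L$.
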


\begin{proof}
Set $u=t/A$.  The assumption gives
$u\ge2\log((AC)\vee e)\ge2$, so
$\log(CA)\le u/2$ and $\log u\le u/2$.  Therefore
\[
  \log(Ct)
  =
  \log(CA)+\log u
  \le
  u
  =
  t/A .
\]
\end{proof}

\begin{lemma}[Expected maximum competitor count]
\label{lem:max-count}
For every integer $t\ge1$,
\[
  \EE[R_t]
  \le
  t p_{(2)}
  +
  2\sqrt{2t\log(t+1)}.
\]
\end{lemma}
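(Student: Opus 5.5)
The plan is to bound $\EE[R_t]$ by $tp_{(2)}$ plus the expected supremum of the centered deviations $N_t(a)-tp_a$ over $a\neq r$. For every $a\neq r$ we have $p_a\le p_{(2)}$, so
\[
  R_t
  =
  \max_{a\neq r}N_t(a)
  \le
  tp_{(2)}
  +
  \sup_{a\neq r}\bigl(N_t(a)-tp_a\bigr)_+ .
\]
This reduces the claim to showing $\EE\bigl[\sup_{a\ne r}(N_t(a)-tp_a)_+\bigr]\le 2\sqrt{2t\log(t+1)}$. The difficulty is that $\cA$ may be countably infinite, so a plain union bound over categories is not directly available. I would not use one. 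Instead I would split the competitors into "heavy" and "light" categories according to their mass, or equivalently pass to a union bound weighted by $p_a$.

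Set a threshold $x=\sqrt{2t\log(t+1)}$ and integrate the tail: $\EE[S]\le x+\int_x^\infty \PP(S>s)\,ds$, where $S$ is the supremum above. The probability $\PP(S>s)$ is controlled by the per-category bound $\PP(N_t(a)-tp_a>s)$. Hoeffding's inequality gives $e^{-2s^2/t}$, but summing this over infinitely many $a$ diverges. I would therefore use Bernstein or Chernoff instead. For a fixed $a$ and $\theta\le 1$, the moment generating function satisfies
\[
  \EE e^{\theta(N_t(a)-tp_a)}\le \exp\bigl(tp_a(e^\theta-1-\theta)\bigr)\le \exp(tp_a\theta^2).
\]
This gives
\[
  \PP\bigl(N_t(a)-tp_a>s\bigr)\le \exp(-\theta s+tp_a\theta^2).
\]
The light categories are handled by bounding $e^{-\theta s}\EE e^{\theta N_t(a)}-\dots$ more crudely. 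Simpler still: for $s\ge1$, the event $\{N_t(a)\ge s\}$ is contained in the event that some $s$-subset of draws all equal $a$, so $\PP(N_t(a)\ge s)\le\binom{t}{\lceil s\rceil}p_a^{\lceil s\rceil}\le t^{\lceil s\rceil}p_a^{\lceil s\rceil}\cdot\frac{1}{\lceil s\rceil!}$. Summing over $a$ gives $\sum_a p_a^{k}\le p_{(2)}^{k-1}\le 1$. Combining the two regimes, the sum over $a$ stays finite because $\sum_a p_a\le1$. I would exploit this weighting, as in the proof of Proposition~\ref{prop:unseen-valid}.

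A cleaner alternative that I would try first is a single exponential-moment bound on the maximum:
\[
  \EE\Bigl[\sup_a e^{\theta(N_t(a)-tp_a)}\Bigr]\le 1+\sum_a\Bigl(\EE e^{\theta(N_t(a)-tp_a)}-1\Bigr)_+ .
\]
Each summand is at most $e^{tp_a\theta^2}-1\le tp_a\theta^2e^{t\theta^2}$, so the whole sum is at most $t\theta^2e^{t\theta^2}$. Jensen's inequality then yields
\[
  \EE S\le \frac{1}{\theta}\log\bigl(1+t\theta^2e^{t\theta^2}\bigr)\le \frac{\log(1+t)}{\theta}+t\theta
\]
for $\theta\le1$, after using $t\theta^2\le t$. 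Choosing $\theta=\sqrt{\log(t+1)/t}$, which is at most $1$, gives a bound of $2\sqrt{t\log(t+1)}$. This is within the target $2\sqrt{2t\log(t+1)}$.

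The main obstacle is the countable union. Unweighted Hoeffding fails here, and the key is the inequality $e^{y}-1\le ye^{y}$ applied with $y=tp_a\theta^2$. This converts the sum over $a$ into $\sum_a p_a\le1$. Everything after that step is routine.
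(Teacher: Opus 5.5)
The load-bearing step that fails is your union bound for the exponential moment,
\[
  \EE\Bigl[\sup_{a\ne r}e^{\theta Y_a}\Bigr]\le 1+\sum_{a\ne r}\bigl(\EE e^{\theta Y_a}-1\bigr)_+,
  \qquad Y_a:=N_t(a)-tp_a .
\]
The positive part sits \emph{outside} the expectation, and the inequality is false. What holds pathwise is $e^{\theta S}=\max\{1,\sup_a e^{\theta Y_a}\}\le 1+\sum_a(e^{\theta Y_a}-1)_+$. Taking expectations produces $\EE[(e^{\theta Y_a}-1)_+]$, which exceeds $\EE e^{\theta Y_a}-1$ by $\EE[(1-e^{\theta Y_a})_+]$. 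That surplus is first order in $\theta$. For a light category with $tp_a\ll1$ one has $\EE[(e^{\theta Y_a}-1)_+]\approx tp_a(e^\theta-1)$, whereas your per-summand bound is $e^{tp_a\theta^2}-1\approx tp_a\theta^2$. So the per-summand bound fails, and the display itself already fails for $t=1$, $\theta=1$, $p_r=1/2$, $p_b=p_c=1/4$: the left side is $\approx1.45$ and the right side $\approx1.23$. In a diffuse tail the light categories together contribute order $\theta t$, which with your $\theta$ is $\sqrt{t\log(t+1)}$, the same order as the bound you are proving. So this is not a cosmetic slip. Your heavy/light sketch is never completed, so it cannot serve as a fallback.

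The route is repairable. Since $Y_a\ge -tp_a$ and $1-e^{-x}\le x$, you get $\EE[(1-e^{\theta Y_a})_+]\le\theta tp_a$. Summing with $\sum_{a\ne r}p_a\le 1$ and your own bound on $\EE e^{\theta Y_a}-1$ gives
\[
  \EE e^{\theta S}\le 1+t\theta^2e^{t\theta^2}+t\theta .
\]
Take $L:=\log(t+1)$ and $\theta=\sqrt{L/t}\le1$. The right side is then at most $1+(t+1)L+\sqrt{tL}\le(t+1)(2+L)$. Jensen gives $\EE S\le\sqrt{t/L}\,\{L+\log(2+L)\}\le 2\sqrt{2tL}$, because $\log(2+L)\le(2\sqrt2-1)L$ for $L\ge\log 2$. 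This recovers exactly the stated constant. The sharper $2\sqrt{t\log(t+1)}$ you claimed comes out of the corrected argument only for $t\ge3$.

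For comparison, the paper symmetrizes and applies Massart's finite-class lemma conditionally on the sample. There the indicator vectors $(\ind\{X_i=a\})_{i\le t}$ take at most $t+1$ distinct values, each of norm at most $\sqrt t$, so the countable class is tamed by counting empirical patterns. Your corrected Chernoff argument is more elementary and tames the class through the weighting $\sum_a p_a\le1$ instead. The price is exactly the light-category term you overlooked.
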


\begin{proof}
For $a\neq r$, set $f_a(x):=\ind\{x=a\}$.  Since
$p_a\le p_{(2)}$,
\[
  \EE[R_t]-tp_{(2)}
  \le
  \EE\sup_{a\neq r}
  \sum_{i=1}^t
  \{f_a(X_i)-\EE f_a(X_i)\}.
\]
By symmetrization, the right-hand side is at most
\[
  2\EE\EE_\sigma
  \sup_{a\neq r}
  \sum_{i=1}^t\sigma_i f_a(X_i),
\]
where $\sigma_i$ are i.i.d.\ Rademacher variables independent of the
sample.  Conditional on the sample, the vectors
\[
  (f_a(X_1),\ldots,f_a(X_t)),
  \qquad a\neq r,
\]
have disjoint supports and hence at most $t+1$ distinct values, each
with Euclidean norm at most $\sqrt t$.  The standard exponential
maximal inequality gives
\[
  \EE_\sigma
  \sup_{a\neq r}
  \sum_{i=1}^t\sigma_i f_a(X_i)
  \le
  \sqrt{2t\log(t+1)}.
\]
Combining the displays proves the claim.
\end{proof}

\begin{lemma}[LCB monotonicity and inversion]
\label{lem:lcb-monotonicity}
Fix $t\ge1$.  For
\[
  M_t(q,\lambda)
  :=
  \prod_{i=1}^t
  \bigl\{1+\lambda(\ind\{X_i=r\}-q)\bigr\},
\]
the following hold.
\begin{enumerate}[label=\textup{(\alph*)},leftmargin=2.2em]
\item The map $q\mapsto M_t(q)$ is nonincreasing on $(0,1]$.
\item If $ M_t(q)>1$, then $\widehat p_t(r)>q$.
\item If $ M_t(q)\ge\alpha_r^{-1}$, then $L_t(r)\ge q$.
\end{enumerate}
\end{lemma}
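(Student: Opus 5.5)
The three claims are elementary and I will prove them in order, using only the definitions \eqref{def:lcb} of $M_t(q)$ and Definition~\ref{def:lcb-formal} of $L_t(r)$, with the threshold $3/\varepsilon$ written as $\alpha_r^{-1}$.

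\textbf{Part (a).} I will repeat Step~1 of the proof of Proposition~\ref{prop:lcb-valid}. Take $0<q_1<q_2\le1$. The grid shrinks as $q$ grows: $\Lambda_r(q_2)\subseteq\Lambda_r(q_1)$, since $\lambda<1/q_2$ implies $\lambda<1/q_1$. For $\lambda\in\Lambda_r(q_2)$, every factor $1+\lambda(\ind\{X_i=r\}-q)$ stays positive on $[q_1,q_2]$, because it is at least $1-\lambda q_2>0$, and it is nonincreasing in $q$. So the product $M_t(q,\lambda)$ of positive nonincreasing factors is nonincreasing, giving $M_t(q_2,\lambda)\le M_t(q_1,\lambda)$. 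Summing with $v_\lambda>0$ over $\Lambda_r(q_2)$ and then adding the nonnegative terms indexed by $\Lambda_r(q_1)\setminus\Lambda_r(q_2)$ yields $M_t(q_2)\le M_t(q_1)$. The only care needed is that terms dropped from the grid contribute nonnegatively, which holds because $\lambda<1/q_1$ keeps those products positive.

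\textbf{Part (b).} I will argue by contraposition: assume $\widehat p_t(r)\le q$ and show $M_t(q)\le1$. With $n:=N_t(r)$, we have $M_t(q,\lambda)=(1+\lambda(1-q))^{n}(1-\lambda q)^{t-n}$. Taking logarithms and using $\log(1+x)\le x$ (valid since both bases are positive for $\lambda\in\Lambda_r(q)$) gives
\[
\log M_t(q,\lambda)\le \lambda\bigl(n(1-q)-(t-n)q\bigr)=\lambda t\bigl(\widehat p_t(r)-q\bigr)\le0 .
\]
Hence each $M_t(q,\lambda)\le1$, and so $M_t(q)\le\sum_\lambda v_\lambda\le1$. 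This contradicts $M_t(q)>1$, so $\widehat p_t(r)>q$.

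\textbf{Part (c).} Suppose $M_t(q)\ge\alpha_r^{-1}$. Since $\alpha_r^{-1}=3/\varepsilon>1$, part (b) gives $q<\widehat p_t(r)$, so $q\in(0,\widehat p_t(r)]$. Together with $M_t(q)\ge3/\varepsilon$, this places $q$ in the set defining $L_t(r)$, so $L_t(r)\ge q$.

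None of the steps is a real obstacle. The one point to watch is the grid dependence on $q$ in (a), which is handled above by the positivity of the dropped terms.
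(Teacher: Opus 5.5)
Your proof is correct and follows the paper's argument: (a) and (c) match it step for step, and in (b) your termwise bound $\log(1+x)\le x$ is the tangent-line form of the paper's observation that $\phi(\lambda)=t^{-1}\log M_t(q,\lambda)$ is concave with $\phi(0)=0$ and $\phi'(0)=\widehat p_t(r)-q\le 0$. Your explicit check that the terms with $\lambda\in\Lambda_r(q_1)\setminus\Lambda_r(q_2)$ contribute nonnegatively is a detail the paper leaves implicit.
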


\begin{proof}
For (a), if $q_1<q_2$, then
$\Lambda_r(q_2)\subseteq\Lambda_r(q_1)$, and each remaining factor
$1+\lambda(\ind\{X_i=r\}-q)$ is nonincreasing in $q$.  Hence
$ M_t(q_1)\ge M_t(q_2)$.

For (b), suppose $\widehat p_t(r)\le q$.  For fixed
$\lambda\in\Lambda_r(q)$, write
\[
  \phi(\lambda)
  :=
  \widehat p_t(r)\log(1+\lambda(1-q))
  +
  (1-\widehat p_t(r))\log(1-\lambda q).
\]
Then $\log M_t(q,\lambda)=t\phi(\lambda)$,
$\phi(0)=0$, $\phi'(0)=\widehat p_t(r)-q\le0$, and $\phi$ is concave on
$[0,1/q)$.  Therefore $M_t(q,\lambda)\le1$ for every admissible
$\lambda$, so
\[
   M_t(q)
  \le
  \sum_{\lambda\in\Lambda_r(q)}v_\lambda
  \le
  1.
\]
This proves the contrapositive.

For (c), if $ M_t(q)\ge\alpha_r^{-1}>1$, then (b) gives
$\widehat p_t(r)>q$, so $q$ belongs to the set defining $L_t(r)$.
Hence $L_t(r)\ge q$.
\end{proof}

\begin{lemma}[Quadratic lower bound for $\log(1+x)$]
\label{lem:power-log-lower}
For every $x\in[-1/2,1/2]$,
\[
  \log(1+x)\ge x-x^2 .
\]
\end{lemma}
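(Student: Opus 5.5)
The claim is $\log(1+x)\ge x-x^2$ for $x\in[-1/2,1/2]$. I will study the auxiliary function
\[
  g(x) := \log(1+x) - x + x^2, \qquad x\in[-1/2,1/2],
\]
and show $g\ge 0$ by checking $g(0)=0$ and reading off the sign of $g'$ on each side of the origin. This function is well defined because $1+x\ge 1/2>0$ on the interval.

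Differentiating gives
\[
  g'(x)=\frac{1}{1+x}-1+2x=\frac{-x+2x(1+x)}{1+x}=\frac{x(1+2x)}{1+x}.
\]
On $[-1/2,1/2]$ the denominator $1+x$ is positive and the factor $1+2x$ is nonnegative. Hence $g'$ has the same sign as $x$: it is $\le 0$ on $[-1/2,0]$ and $\ge 0$ on $[0,1/2]$. So $g$ is nonincreasing on $[-1/2,0]$ and nondecreasing on $[0,1/2]$, and its minimum over the interval is attained at $x=0$. Since $g(0)=0$, we get $g\ge 0$ throughout, which is the claim.

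The only point needing care is the left endpoint $x=-1/2$. There $1+2x=0$, so $g'(-1/2)=0$, which is still consistent with monotonicity. As a sanity check, $g(-1/2)=-\log 2+1/2+1/4\approx 0.057>0$. The restriction $x\ge -1/2$ is exactly what makes the factor $1+2x$ nonnegative. I expect no real obstacle here; the lemma will later be applied with $x=\lambda Z_i$ and $x=\lambda(\ind\{X_i=r\}-q)$, whose magnitudes are at most $1/2$ under Condition~\ref{cond:grid}.
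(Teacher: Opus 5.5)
Your proof is correct and is essentially the paper's argument: the paper also sets $h(x)=\log(1+x)-x+x^2$, notes $h(0)=0$, computes $h'(x)=x(1+2x)/(1+x)$, and reads off the sign on $[-1/2,0]$ and $[0,1/2]$ to conclude $h\ge 0$. Your remarks on the endpoint $x=-1/2$ and the numerical check are fine but not needed.
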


\begin{proof}
Let $h(x):=\log(1+x)-x+x^2$.  Then $h(0)=0$ and
$h'(x)=x(1+2x)/(1+x)$, which is nonpositive on $[-1/2,0]$ and
nonnegative on $[0,1/2]$.  Thus $h(x)\ge0$ on $[-1/2,1/2]$.
\end{proof}

\begin{proposition}[Small-bet pairwise drift]
\label{prop:power-smallbet-pair}
Assume $\delta>0$ and
\[
  0<\lambda\le
  \frac{\delta}{2(p_r+p_{(2)})}.
\]
Then, for every $a\neq r$ with $p_a\le p_{(2)}$,
\[
  \EE\left[
    \log\{1+\lambda(\ind\{X=r\}-\ind\{X=a\})\}
  \right]
  \ge
  \frac{\lambda\delta}{2}.
\]
\end{proposition}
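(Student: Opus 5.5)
The proof is a one-line second-order expansion: apply the quadratic lower bound of Lemma~\ref{lem:power-log-lower} pointwise to $x=\lambda Z$, where $Z:=\ind\{X=r\}-\ind\{X=a\}\in\{-1,0,1\}$, and then take expectations.

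\textbf{Step 1: the lemma applies.} Lemma~\ref{lem:power-log-lower} requires $|\lambda Z|\le 1/2$. Since $p_{(2)}\ge0$, we have $\delta=p_r-p_{(2)}\le p_r\le p_r+p_{(2)}$. The assumption on $\lambda$ then gives $\lambda\le \delta/(2(p_r+p_{(2)}))\le 1/2$. As $|Z|\le1$, it follows that $\lambda Z\in[-1/2,1/2]$ almost surely, and hence
\[
  \log(1+\lambda Z)\ge \lambda Z-\lambda^2 Z^2 .
\]

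\textbf{Step 2: take expectations.} Because $a\neq r$, the events $\{X=r\}$ and $\{X=a\}$ are disjoint, so $Z^2=\ind\{X=r\}+\ind\{X=a\}$. Therefore $\EE[Z]=p_r-p_a$ and $\EE[Z^2]=p_r+p_a$, which gives
\[
  \EE[\log(1+\lambda Z)]\ge \lambda(p_r-p_a)-\lambda^2(p_r+p_a).
\]

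\textbf{Step 3: use the constraints.} From $p_a\le p_{(2)}$ we get $p_r-p_a\ge\delta$ and $p_r+p_a\le p_r+p_{(2)}$. Substituting both into the last display,
\[
  \EE[\log(1+\lambda Z)]\ge \lambda\delta-\lambda^2(p_r+p_{(2)}).
\]
Finally, $\lambda(p_r+p_{(2)})\le\delta/2$ by the assumption on $\lambda$, so $\lambda^2(p_r+p_{(2)})\le \lambda\delta/2$, and the right-hand side is at least $\lambda\delta-\lambda\delta/2=\lambda\delta/2$.

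No step is a genuine obstacle. The only point needing care is Step 1: the constraint $\lambda\le1/2$ is not assumed separately but follows from the hypothesis on $\lambda$ via $\delta\le p_r+p_{(2)}$.
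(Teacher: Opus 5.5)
Your proof is correct and follows the paper's argument step for step: deduce $\lambda\le 1/2$ from $\delta\le p_r+p_{(2)}$, apply Lemma~\ref{lem:power-log-lower} to $\lambda(\ind\{X=r\}-\ind\{X=a\})$, take expectations using disjointness of $\{X=r\}$ and $\{X=a\}$, then absorb the quadratic term with $\lambda(p_r+p_{(2)})\le\delta/2$.
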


\begin{proof}
Since $\delta\le p_r+p_{(2)}$, the assumed bound gives
$\lambda\le1/2$.  Let
\[
  \xi
  :=
  \lambda(\ind\{X=r\}-\ind\{X=a\})
  \in[-1/2,1/2].
\]
By Lemma~\ref{lem:power-log-lower},
\[
  \EE[\log(1+\xi)]
  \ge
  \lambda(p_r-p_a)-\lambda^2(p_r+p_a).
\]
Using $p_a\le p_{(2)}$ gives
\[
  \EE[\log(1+\xi)]
  \ge
  \lambda\delta-\lambda^2(p_r+p_{(2)})
  \ge
  \lambda\delta/2 .
\]
\end{proof}

\subsection{Proof of Theorem~\ref{thm:power-iut-clean}}
\label{app:proof-thm:power-iut-clean}

\begin{proof}
We prove the bound for $\tau^\circ$; since $\tau\le\tau^\circ$
pathwise, the same expectation bound holds for the implemented CITE
stopping time.

\smallskip
\noindent\textit{Step 1: pairwise condition.}
Let $\lambda:=\lambda_{\mathrm{pw}}\in[\delta/8,\delta/4]$ be the grid
point from Condition~\ref{cond:grid}, with weight $w_\lambda$, and set
\[
  B_{\mathrm{pw}}
  :=
  \log\{1/(\alpha_{\mathrm{pw}}w_\lambda)\}.
\]
For this fixed $\lambda$, define
\[
  S_t
  :=
  N_t(r)\log(1+\lambda)
  +
  R_t\log(1-\lambda).
\]
Since $E_t^\circ\ge w_\lambda\exp(S_t)$, the pairwise condition
$E_t^\circ\ge\alpha_{\mathrm{pw}}^{-1}$ holds whenever
$S_t\ge B_{\mathrm{pw}}$.

Using Lemma~\ref{lem:max-count} and $\log(1-\lambda)<0$,
\[
  \EE[S_t]
  \ge
  t\{p_r\log(1+\lambda)+p_{(2)}\log(1-\lambda)\}
  -
  C\lambda\sqrt{t\log(t+1)} .
\]
Because $p_r+p_{(2)}\le1$, $\lambda\le\delta/4$, and
$\log(1+x)\ge x-x^2$ on $[-1/2,1/2]$,
\[
  p_r\log(1+\lambda)+p_{(2)}\log(1-\lambda)
  \ge
  \lambda\delta-\lambda^2(p_r+p_{(2)})
  \ge
  c\delta^2
\]
for a universal constant $c>0$.  Hence
\[
  \EE[S_t]
  \ge
  c_0\delta^2 t
  -
  C_0\delta\sqrt{t\log(t+1)} .
\]
Changing one observation changes both $N_t(r)$ and $R_t$ by at most one,
so $S_t$ is $C_1\delta$-Lipschitz in each sample.  McDiarmid's
inequality gives
\[
  \PP\{S_t<\EE[S_t]-x\}
  \le
  \exp\left\{
    -\frac{c_1x^2}{t\delta^2}
  \right\}.
\]
Taking $x=C_2\delta\sqrt{t\log(et)}$ and using
Lemma~\ref{lem:det-log}, we obtain, for a universal constant $C$,
\begin{equation}\label{eq:clean-pw-tail}
  \PP\left(E_t^\circ<\alpha_{\mathrm{pw}}^{-1}\right)
  \le
  (et)^{-2},
  \qquad
  t\ge
  T_{\mathrm{pw}}
  :=
  C\frac{B_{\mathrm{pw}}+\log(1/\delta)}{\delta^2}.
\end{equation}

\smallskip
\noindent\textit{Step 2: LCB--unseen condition.}
Set $q_0:=p_r/2$.  Condition~\ref{cond:grid} gives
$\lambda_r\in[1/32,1/16]\subset\Lambda_r(q_0)$ with weight
$v_{\lambda_r}$.  The unseen bound satisfies $U_t\le p_r/4$ once
\[
  t
  \ge
  C p_r^{-1}\log\{1/(\alpha_u p_r)\}.
\]
Let
\[
  A_t
  :=
  \left\{
    \frac34 p_rt
    \le
    N_t(r)
    \le
    \frac54 p_rt
  \right\}.
\]
Bernstein's inequality gives
\[
  \PP(A_t^c)\le 2\exp(-c_2p_rt).
\]
On $A_t$, the single LCB factor
\[
  M_t(q_0,\lambda_r)
  =
  \prod_{i=1}^t
  \{1+\lambda_r(\ind\{X_i=r\}-q_0)\}
\]
satisfies
\[
  \log M_t(q_0,\lambda_r)
  \ge
  c_3p_rt
\]
for a universal constant $c_3>0$.  Therefore
\[
  \widetilde M_t(q_0)
  \ge
  v_{\lambda_r}M_t(q_0,\lambda_r)
  \ge
  \alpha_r^{-1}
\]
whenever
\[
  t
  \ge
  C p_r^{-1}\log\{1/(\alpha_r v_{\lambda_r})\}.
\]
By Lemma~\ref{lem:lcb-monotonicity}, this implies
$L_t(r)\ge q_0>U_t$.  Combining this with the Bernstein tail and
Lemma~\ref{lem:det-log}, we obtain
\begin{equation}\label{eq:clean-rare-tail}
  \PP\{L_t(r)\le U_t\}
  \le
  (et)^{-2}
\end{equation}
for all
\[
  t\ge
  T_{\mathrm{rare}}
  :=
  C
  \frac{
    \log(1/\alpha_r)
    +
    \log(1/\alpha_u)
    +
    \log(1/v_{\lambda_r})
    +
    \log(1/p_r)
  }{p_r}.
\]

\smallskip
\noindent\textit{Step 3: combine.}
Let
\[
  T_{\mathrm{cl}}
  :=
  \max\{T_{\mathrm{pw}},T_{\mathrm{rare}}\}.
\]
For all $t\ge T_{\mathrm{cl}}$,
\[
  \PP(\tau^\circ>t)
  \le
  2(et)^{-2}.
\]
The tail-sum identity gives
\[
  \EE[\tau^\circ]
  \le
  T_{\mathrm{cl}}+1.
\]
Finally, substituting
$\alpha_{\mathrm{pw}}=\alpha_r=\alpha_u=\varepsilon/3$ and using the
assumption that $w_\lambda$ and $v_{\lambda_r}$ are bounded below by
constants,
\[
  \EE[\tau]
  \le
  \EE[\tau^\circ]
  =
  O\left(
    \frac{\log(1/\varepsilon)+\log(1/\delta)}{\delta^2}
    +
    \frac{\log(1/\varepsilon)+\log(1/p_r)}{p_r}
  \right),
\]
with constants independent of $\delta,p_r$, and $|\cA|$.
\end{proof}

\subsection{Oracle pairwise growth rate}

\begin{proposition}[Closed form for the oracle pairwise growth rate]
\label{prop:pairwise-oracle}
For $\lambda\in(0,1)$, define
\[
  \rho(\lambda)
  :=
  p_r\log(1+\lambda)+p_{(2)}\log(1-\lambda).
\]
Then $\rho$ is strictly concave on $(0,1)$.  If $p_{(2)}>0$, its unique
maximizer is
\[
  \lambda^\star
  =
  \frac{p_r-p_{(2)}}{p_r+p_{(2)}}
  =
  \frac{\delta}{p_r+p_{(2)}},
\]
and the maximal value is
\[
  \rho^\star
  :=
  \rho(\lambda^\star)
  =
  p_r\log\frac{2p_r}{p_r+p_{(2)}}
  +
  p_{(2)}\log\frac{2p_{(2)}}{p_r+p_{(2)}}.
\]
If $p_{(2)}=0$, then $\rho(\lambda)=p_r\log(1+\lambda)$ is increasing
on $(0,1)$ and
\[
  \sup_{0<\lambda<1}\rho(\lambda)=p_r\log2.
\]
\end{proposition}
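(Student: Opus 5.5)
The argument is one-variable calculus applied to $\rho(\lambda)=p_r\log(1+\lambda)+p_{(2)}\log(1-\lambda)$ on $(0,1)$. Both logarithms are finite on the open interval, so $\rho$ is smooth there. Differentiating twice gives
\[
  \rho''(\lambda)=-\frac{p_r}{(1+\lambda)^2}-\frac{p_{(2)}}{(1-\lambda)^2}.
\]
This is strictly negative because $p_r>0$: under the standing alternative $p_r>p_{(2)}\ge0$. Strict concavity on $(0,1)$ follows.

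\textbf{Case $p_{(2)}>0$.} Solve the first-order condition
\[
  \rho'(\lambda)=\frac{p_r}{1+\lambda}-\frac{p_{(2)}}{1-\lambda}=0 .
\]
Clearing denominators gives $p_r(1-\lambda)=p_{(2)}(1+\lambda)$, hence
\[
  \lambda^\star=\frac{p_r-p_{(2)}}{p_r+p_{(2)}}=\frac{\delta}{p_r+p_{(2)}}.
\]
We need $\lambda^\star\in(0,1)$. Positivity holds because $\delta>0$. The bound $\lambda^\star<1$ holds because $p_{(2)}>0$ makes the numerator $p_r-p_{(2)}$ strictly smaller than the denominator $p_r+p_{(2)}$. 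Since $\rho$ is strictly concave, a stationary point in the open interval is the unique global maximizer.

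For the maximal value, substitute into each logarithm:
\[
  1+\lambda^\star=\frac{2p_r}{p_r+p_{(2)}},\qquad 1-\lambda^\star=\frac{2p_{(2)}}{p_r+p_{(2)}} .
\]
Plugging these into $\rho$ gives the stated expression for $\rho^\star$ directly.

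\textbf{Case $p_{(2)}=0$.} Here $\rho(\lambda)=p_r\log(1+\lambda)$, which is strictly increasing because $p_r>0$. Its supremum over $(0,1)$ is the limit as $\lambda\uparrow1$, namely $p_r\log2$. This supremum is not attained in the open interval.

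There is no real obstacle. The only points needing care are confirming $\lambda^\star<1$, which is exactly where $p_{(2)}>0$ is used, and noting that $p_r>0$ is what makes the concavity strict.
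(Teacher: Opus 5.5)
Your proof is correct and matches the paper's argument essentially line for line: you show $\rho''<0$ for strict concavity, solve $\rho'(\lambda)=0$, and substitute $1\pm\lambda^\star$ into $\rho$, handling the $p_{(2)}=0$ case by monotonicity. Your explicit checks that $\lambda^\star\in(0,1)$ and that strict concavity relies on $p_r>0$ are small points the paper leaves implicit.
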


\begin{proof}
We have
\[
  \rho'(\lambda)
  =
  \frac{p_r}{1+\lambda}
  -
  \frac{p_{(2)}}{1-\lambda},
  \qquad
  \rho''(\lambda)
  =
  -\frac{p_r}{(1+\lambda)^2}
  -
  \frac{p_{(2)}}{(1-\lambda)^2}
  <
  0.
\]
Thus $\rho$ is strictly concave.  When $p_{(2)}>0$, solving
$\rho'(\lambda)=0$ gives
$\lambda^\star=(p_r-p_{(2)})/(p_r+p_{(2)})$, and substituting this value
gives the stated formula for $\rho^\star$.  The case $p_{(2)}=0$ is
immediate.
\end{proof}

\begin{lemma}[Stopped KL identity for the swapped alternative]
\label{lem:stopped-kl-swap}
Assume $p_{(2)}>0$.  Fix
\[
  a^\star \in \arg\max_{a\neq r} p_a,
\]
and let $Q$ be obtained from $P$ by swapping the masses of $r$ and
$a^\star$:
\[
  q_r = p_{(2)},
  \qquad
  q_{a^\star}=p_r,
  \qquad
  q_a = p_a
  \quad\text{for } a \notin \{r,a^\star\}.
\]
Let $P^\tau$ and $Q^\tau$ denote the laws of the stopped experiment
$(\tau,X_1,\dots,X_\tau)$ under $P$ and $Q$.  If
$\EE_P[\tau] < \infty$, then
\[
  D(P^\tau\|Q^\tau)
  =
  \EE_P[\tau]
  \delta \log\frac{p_r}{p_{(2)}}.
\]
\end{lemma}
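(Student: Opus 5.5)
The plan is to apply the chain rule for KL divergence to the stopped experiment and then use Wald's identity. First I will check that $Q$ is a valid distribution. It is a permutation of the masses of $P$, so it is one. Both $p_r>0$ and $p_{(2)}>0$ hold, so $P$ and $Q$ have the same support, which is needed for the log-likelihood ratio to be finite. Because $\tau$ is a stopping time for $(\cF_t)$, the law of $(\tau,X_1,\dots,X_\tau)$ under $P$ has density on $\{\tau=n\}$ equal to $\prod_{i=1}^n p_{X_i}$, and likewise under $Q$. The indicator $\ind\{\tau=n\}$ is a deterministic function of $(X_1,\dots,X_n)$ and is the same under both measures, so it cancels in the ratio. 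The log-likelihood ratio of the stopped laws is therefore
\[
  \Lambda_\tau := \sum_{i=1}^{\tau}\log\frac{p_{X_i}}{q_{X_i}},
\]
and $D(P^\tau\|Q^\tau)=\EE_P[\Lambda_\tau]$, interpreted on $\{\tau<\infty\}$. Since $\EE_P[\tau]<\infty$, we have $\tau<\infty$ $P$-a.s.

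Next I will compute the per-sample increment. The summand $Y_i:=\log(p_{X_i}/q_{X_i})$ vanishes unless $X_i\in\{r,a^\star\}$. It equals $\log(p_r/p_{(2)})$ when $X_i=r$ and $\log(p_{(2)}/p_r)$ when $X_i=a^\star$. Its mean is
\[
  \EE_P[Y_i]=(p_r-p_{(2)})\log\frac{p_r}{p_{(2)}}=\delta\log\frac{p_r}{p_{(2)}} = D(P\|Q),
\]
and $|Y_i|\le|\log(p_r/p_{(2)})|$ is bounded.

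The final step is Wald's identity. The $Y_i$ are i.i.d. and integrable, $\tau$ is a stopping time for the natural filtration, and $\EE_P[\tau]<\infty$, so $\EE_P[\sum_{i\le\tau}Y_i]=\EE_P[\tau]\EE_P[Y_1]$. This gives the claimed identity.

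The only delicate point is the rigorous identification of $D(P^\tau\|Q^\tau)$ with $\EE_P[\Lambda_\tau]$. The stopped law lives on the countable space $\bigsqcup_n\{n\}\times\cA^n$, where the KL divergence is the sum of $P^\tau(\omega)\log(P^\tau(\omega)/Q^\tau(\omega))$. One must check this sum is well defined, i.e., that the positive and negative parts do not both diverge. Bounded increments together with $\EE_P[\tau]<\infty$ give $\EE_P|\Lambda_\tau|\le\EE_P[\tau]\,|\log(p_r/p_{(2)})|<\infty$, which settles it. If $\PP_Q(\tau=\infty)>0$, the $Q$-mass lost at $\{\tau=\infty\}$ does not affect the divergence. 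The reason is that $P^\tau$ puts zero mass there, so the sum is taken over atoms with $P^\tau>0$ only, and those atoms all have $Q^\tau>0$ by the common support.
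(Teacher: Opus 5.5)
Your proof is correct and takes the same route as the paper. You cancel the stopping indicator in the likelihood ratio of the stopped laws, apply Wald's identity to the bounded i.i.d.\ increments $\log(P(X_i)/Q(X_i))$, and compute their mean as $\delta\log(p_r/p_{(2)})$. Your remarks on absolute integrability and on the $Q$-mass at $\{\tau=\infty\}$ spell out points the paper leaves implicit. The latter is relevant because that mass is positive in the later application, where $Q(\tau<\infty)\le\varepsilon$.
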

 
\begin{proof}
Let $\ell(x) := \log(P(x)/Q(x))$.  Since $\ell$ is bounded
(both $P$ and $Q$ have positive masses on the same support), the
stopping factor cancels in the likelihood ratio
$dP^\tau/dQ^\tau = \prod_{i \leq \tau} (P(X_i)/Q(X_i))$, and Wald's
identity for i.i.d.\ bounded summands yields
\[
  D(P^\tau \| Q^\tau)
   =  \EE_P \left[\sum_{i=1}^\tau \ell(X_i)\right]
   =  \EE_P[\tau]  \EE_P[\ell(X_1)].
\]
A direct calculation gives
$\EE_P[\ell(X_1)] = p_r \log(p_r/p_{(2)}) + p_{(2)} \log(p_{(2)}/p_r)
= \delta \log(p_r/p_{(2)})$, completing the proof.
\end{proof}
 
\begin{corollary}[Information-theoretic lower bound]
\label{cor:power-lower-bound}
Assume $p_{(2)}>0$.  Let $\varepsilon$ be the Type-I error level of a sequential test for
$H_{0,r}$, and assume that under the alternative $P$ one has
\[
  \PP_P(\tau<\infty)=1
  \qquad\text{and}\qquad
  \EE_P[\tau]<\infty.
\]
Then
\[
  \EE_P[\tau]
  \geq
  \frac{\log(1/\varepsilon)}
       {\delta\log(p_r/p_{(2)})}.
\]
In particular, if $\delta/p_r \to 0$, then
\[
  \EE_P[\tau]
  \geq
  \frac{p_r\log(1/\varepsilon)}{\delta^2}
  \bigl(1+o(1)\bigr).
\]
\end{corollary}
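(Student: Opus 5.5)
The plan is to combine Lemma~\ref{lem:stopped-kl-swap} with the standard data-processing inequality for the stopped experiment, applied to the event $\{\tau<\infty\}$.

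\emph{Step 1: the swapped law is a null.} The swapped law $Q$ has $q_r=p_{(2)}<p_r=q_{a^\star}$, so $Q\in H_{0,r}$. The level-$\varepsilon$ guarantee therefore gives $\PP_Q(\tau<\infty)\le\varepsilon$. Under $P$ we have $\PP_P(\tau<\infty)=1$ by assumption.

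\emph{Step 2: data processing.} Consider the indicator of $\{\tau<\infty\}$, which is measurable with respect to the stopped experiment. Data processing gives
\[
D(P^\tau\|Q^\tau)\ge \mathrm{kl}\bigl(\PP_P(\tau<\infty),\PP_Q(\tau<\infty)\bigr)=\mathrm{kl}(1,\PP_Q(\tau<\infty))=\log\frac{1}{\PP_Q(\tau<\infty)}\ge\log(1/\varepsilon),
\]
where $\mathrm{kl}$ denotes the Bernoulli divergence.

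The main subtlety sits here. On $\{\tau=\infty\}$ the stopped experiment is the whole infinite sequence. Because $\PP_P(\tau=\infty)=0$, however, $P^\tau$ is concentrated on finite histories, and the likelihood-ratio computation in Lemma~\ref{lem:stopped-kl-swap} already handles this. $Q^\tau$ may put mass on $\{\tau=\infty\}$, but this only helps the inequality: the coarse-graining to the event $\{\tau<\infty\}$ remains valid as a Markov kernel. I would justify the step by writing $D(P^\tau\|Q^\tau)=\EE_P[\log(dP^\tau/dQ^\tau)]$ restricted to $\{\tau<\infty\}$ and applying Jensen to get $\ge -\log Q^\tau(\tau<\infty)$.

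\emph{Step 3: combine with the KL identity.} Lemma~\ref{lem:stopped-kl-swap} gives $\EE_P[\tau]\,\delta\log(p_r/p_{(2)})\ge\log(1/\varepsilon)$. Since $p_r>p_{(2)}$, dividing yields the first claim.

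\emph{Step 4: asymptotic form.} Write $p_{(2)}=p_r-\delta$. Then
\[
\log\frac{p_r}{p_{(2)}}=-\log\Bigl(1-\frac{\delta}{p_r}\Bigr)=\frac{\delta}{p_r}\bigl(1+O(\delta/p_r)\bigr),
\]
so that
\[
\delta\log\frac{p_r}{p_{(2)}}=\frac{\delta^2}{p_r}\bigl(1+o(1)\bigr)
\]
as $\delta/p_r\to0$. Inverting gives the second display.
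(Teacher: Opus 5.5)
Your proposal is correct and follows the paper's proof essentially step for step. The four ingredients match: the swapped law $Q\in H_{0,r}$; data processing on the event $\{\tau<\infty\}$, giving $\log(1/\varepsilon)\le \mathrm{kl}(1,Q(\tau<\infty))\le D(P^\tau\|Q^\tau)$; the KL identity of Lemma~\ref{lem:stopped-kl-swap}; and a first-order expansion of $\log(p_r/p_{(2)})$. For the last step you expand $-\log(1-\delta/p_r)$ where the paper expands $\log(1+\delta/p_{(2)})$, and the two are equivalent. Your Jensen argument on $\{\tau<\infty\}$, which handles the fact that $Q^\tau$ may charge $\{\tau=\infty\}$, adds rigor that the paper leaves implicit.
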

 
\begin{proof}
Let $Q$ be the swap from Lemma~\ref{lem:stopped-kl-swap}, so
$Q \in H_{0,r}$ and Type-I validity gives
$Q(\tau < \infty) \leq \varepsilon$.  Set $A := \{\tau < \infty\}$, so
$P(A) = 1$.  By data processing followed by
Lemma~\ref{lem:stopped-kl-swap},
\[
  \log\frac{1}{\varepsilon}
   \leq  \mathrm{kl}\bigl(P(A), Q(A)\bigr)
   \leq  D(P^\tau \| Q^\tau)
   =  \EE_P[\tau]  \delta \log(p_r/p_{(2)}),
\]
which is the first claim.  For the small-gap expansion,
$\delta \log(p_r/p_{(2)}) = \delta \log(1 + \delta/p_{(2)})
= \delta^2/p_r + o(\delta^2/p_r)$ as $\delta/p_r \to 0$.
\end{proof}

\begin{lemma}[Forward I-projection of $P$ onto $H_{0,r}$]
\label{lem:iproj}
Assume $p_{(2)} > 0$ and pick any $a^\star \in \arg\max_{a \neq r} p_a$
(this set may have ties).  Every forward I-projection
\[
  P^\star
   \in 
  \arg\min_{Q \in H_{0,r}} D(P\|Q)
\]
arises by midpointing $r$ with one of the strongest competitors; for
the choice $a^\star$ above, the corresponding projection is the
\emph{midpoint distribution}
\[
  p_r^\star  =  p_{a^\star}^\star  =  \frac{p_r + p_{(2)}}{2}, \qquad
  p_a^\star  =  p_a \quad\text{for } a \notin \{r, a^\star\},
\]
and the per-sample KL divergence is
\[
  D(P \| P^\star)
   = 
  p_r \log \frac{2 p_r}{p_r + p_{(2)}}
   + 
  p_{(2)} \log \frac{2 p_{(2)}}{p_r + p_{(2)}}.
\]
In particular, as $\delta/p_r \to 0$,
\[
  D(P \| P^\star)  =  \frac{\delta^2}{4 p_r}\bigl(1 + o(1)\bigr).
\]
\end{lemma}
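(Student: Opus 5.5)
The plan is to split the composite null $H_{0,r}$ into convex pieces indexed by the competitor that witnesses the null, solve the I-projection exactly on each piece using the chain rule for KL divergence, and then optimize over the witness.

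\emph{Decomposition of the null.} Throughout, $P$ is in the alternative, so $p_r>p_{(2)}>0$. For each $b\neq r$ set $H_b:=\{Q: q_r\le q_b\}$. Clearly $H_b\subseteq H_{0,r}$. Conversely, take $Q\in H_{0,r}$ with $D(P\|Q)<\infty$. Then $q_r>0$ because $p_r>0$, so $\sup_{a\neq r}q_a\ge q_r>0$. By Lemma~\ref{lem:attained-supremum} this supremum is attained at some $b$, hence $Q\in H_b$. It therefore suffices to compute $\inf_{Q\in H_b}D(P\|Q)$ for each $b$ and then minimize over $b$.

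\emph{Exact projection onto $H_b$.} Fix $b\neq r$ and split $\cA$ into the cells $\{r,b\}$ and the singletons $\{a\}$ for $a\notin\{r,b\}$. The chain rule for KL divergence gives
\[
  D(P\|Q)=D(\bar P\|\bar Q)+(p_r+p_b)\,\mathrm{kl}\!\left(\tfrac{p_r}{p_r+p_b},\tfrac{q_r}{q_r+q_b}\right),
\]
where $\bar P,\bar Q$ are the coarsened laws on the cells. The coarse term is nonnegative, with equality iff $q_r+q_b=p_r+p_b$ and $q_a=p_a$ for every other $a$. In the second term the first argument is $s:=p_r/(p_r+p_b)$. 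The constraint $Q\in H_b$ forces the second argument $u$ to satisfy $u\le 1/2$.
\begin{itemize}
  \item If $p_b<p_r$, then $s>1/2$. Since $u\mapsto\mathrm{kl}(s,u)$ is decreasing on $(0,s]$, it is minimized over $u\le 1/2$ uniquely at $u=1/2$.
  \item If $p_b\ge p_r$, then $P\in H_b$ and the infimum is $0$. This case cannot occur here, since $p_r>p_{(2)}\ge p_b$.
\end{itemize}
Both terms are thus minimized simultaneously, and uniquely, by the midpoint distribution: $q_r=q_b=(p_r+p_b)/2$ with all other masses unchanged. The minimal value is
\[
  g(p_b):=p_r\log\frac{2p_r}{p_r+p_b}+p_b\log\frac{2p_b}{p_r+p_b}.
\]

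\emph{Optimizing over $b$.} Differentiating in $p_b$ gives $g'(p_b)=\log\frac{2p_b}{p_r+p_b}<0$ for $p_b<p_r$. So $g$ is strictly decreasing on $[0,p_r)$, and the infimum over $b$ is achieved exactly at those $b$ with $p_b=p_{(2)}$. Such $b$ exist by Lemma~\ref{lem:attained-supremum}, since $p_{(2)}>0$. Any other $b$ has $p_b<p_{(2)}$ and hence a strictly larger value. Combined with the uniqueness of the midpoint within each $H_b$, every I-projection is the midpointing of $r$ with some strongest competitor. For the chosen $a^\star$ the minimal divergence is $g(p_{(2)})$, which is the stated formula.

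\emph{Asymptotics.} Write $m:=(p_r+p_{(2)})/2$ and $x:=\delta/2$, so that $p_r=m+x$ and $p_{(2)}=m-x$. Then
\[
  D(P\|P^\star)=(m+x)\log\!\left(1+\tfrac{x}{m}\right)+(m-x)\log\!\left(1-\tfrac{x}{m}\right).
\]
The first-order terms cancel, and a second-order Taylor expansion yields $x^2/m\,(1+O(x/m))$. Substituting $x=\delta/2$ gives $\delta^2/(4m)\,(1+O(x/m))$. Since $x/m=\delta/(p_r+p_{(2)})\le\delta/p_r\to0$, the error term is $o(1)$, and $m/p_r=1-\delta/(2p_r)\to1$, so $D(P\|P^\star)=\delta^2/(4p_r)\,(1+o(1))$.

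The main point needing care is the reduction of $H_{0,r}$ to $\bigcup_b H_b$ when $\cA$ is infinite. There the supremum over competitors need not be attained, and one must rule out projections with $q_r=0$ via finiteness of $D(P\|Q)$ before invoking Lemma~\ref{lem:attained-supremum}. The remaining steps are routine.
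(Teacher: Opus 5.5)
Your proof is correct, and its skeleton matches the paper's. You split $H_{0,r}$ into the faces $\{q_r\le q_b\}$, project onto each face, and minimize $g(p_b)$ over competitors using $g'(p_b)=\log\frac{2p_b}{p_r+p_b}<0$. You then expand around the midpoint $m=(p_r+p_{(2)})/2$; the paper writes the same expansion as $q^\star f(x)$ with $f(x)=(1+x)\log(1+x)+(1-x)\log(1-x)$.

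Where you genuinely differ is the per-face projection. The paper asserts that the face minimizer lies on the boundary $\{q_r=q_b\}$. It then solves first-order conditions of a Lagrangian on the simplex and cites the Hessian $\mathrm{diag}(p/q^2)$ for uniqueness. Your grouping identity $D(P\|Q)=D(\bar P\|\bar Q)+(p_r+p_b)\,\mathrm{kl}(s,u)$ instead decouples the problem into two terms that are minimized simultaneously. In one stroke this gives existence of the minimizer, its location on the boundary, and its uniqueness. It needs no Lagrange multipliers on a countably infinite simplex.

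Your route also makes the ``every I-projection'' clause immediate. Any minimizer lying in some $H_b$ must have $p_b=p_{(2)}$ and must coincide with that face's unique minimizer. The Lagrangian route is more mechanical and adapts to other linear constraints, but it leaves the boundary claim and the infinite-dimensional issues implicit.

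You also justify $H_{0,r}=\bigcup_b H_b$, which the paper takes for granted. Finiteness of $D(P\|Q)$ is not actually needed for that step. If $\sup_{a\neq r}q_a=0$, then $q_r=1$, which contradicts $q_r\le 0$. So for every $Q\in H_{0,r}$ the supremum is positive, and it is attained by Lemma~\ref{lem:attained-supremum}.
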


\begin{proof}
$H_{0,r} = \bigcup_{a\neq r}\{q_r \leq q_a\}$ is a (possibly countable)
union of half-spaces, hence not itself convex.  We minimise
$D(P\|\cdot)$ over each face $\{q_r \leq q_a\}$ separately (on each
face $D(P\|\cdot)$ is strictly convex and strictly positive since
$P \notin H_{0,r}$), and then take the smallest of the per-face
minima.  The optimum on each face lies on its boundary
$\{q_r = q_a\}$.  We organise the proof in three steps.

\smallskip
\noindent\emph{Step 1 (per-pair midpoint projection).}
For each $a \neq r$ with $p_a < p_r$, the I-projection of $P$ onto
the affine subspace $\{q_r = q_a\}$ is given in closed form by
\begin{equation}\label{eq:per-pair-proj}
  q_r = q_a = \frac{p_r + p_a}{2},
  \qquad
  q_b = p_b \quad \text{for } b \notin \{r, a\}.
\end{equation}
The Lagrangian on the simplex with multiplier $\mu$ has
first-order conditions $q_b = p_b/\mu$ and $q_r = q_a = (p_r+p_a)/(2\mu)$;
the normalisation $\sum_b q_b = 1/\mu = 1$ forces $\mu = 1$,
yielding~\eqref{eq:per-pair-proj}, and the positive-definite Hessian
$\mathrm{diag}(p/q^2)$ ensures uniqueness.  The KL value of this
projection is
\begin{equation}\label{eq:Phi-pa}
  \Phi(p_a)
   :=  p_r \log \frac{2 p_r}{p_r+p_a}
        + p_a \log \frac{2 p_a}{p_r+p_a}.
\end{equation}

\smallskip
\noindent\emph{Step 2 (the optimal competitor is $a^\star$).}
Differentiating~\eqref{eq:Phi-pa} gives
$\Phi'(p_a) = \log \frac{2 p_a}{p_r+p_a} \leq 0$ on $(0, p_r)$, so
$\Phi$ is non-increasing in $p_a$ and is minimised over $\{p_a : a \neq r\}$
at $p_a = p_{(2)} = p_{a^\star}$.  After midpointing on $a^\star$, the
remaining masses satisfy
$q_b = p_b \leq p_{(2)} < (p_r+p_{(2)})/2 = q_r = q_{a^\star}$ for
$b \notin \{r, a^\star\}$, confirming that the candidate lies in
$H_{0,r}$.  Hence midpointing $r$ with the chosen $a^\star$ yields a
global forward I-projection $P^\star$ with
$D(P\|P^\star) = \Phi(p_{(2)})$ (in case of ties at $p_{(2)}$ each
choice of strongest competitor gives a distinct minimiser with the
same KL value), proving the first two displays.

\smallskip
\noindent\emph{Step 3 (small-gap expansion).}
Write $q^\star := (p_r+p_{(2)})/2 = p_r - \delta/2$ and
$x := \delta/(2 q^\star) \in (0, 1)$, so $p_r/q^\star = 1+x$ and
$p_{(2)}/q^\star = 1-x$.  The closed form factorises as
$\Phi(p_{(2)}) = q^\star f(x)$, where
\[
  f(x)  :=  (1+x) \log(1+x) + (1-x) \log(1-x).
\]
$f$ is even with $f''(x) = 2/(1-x^2) = 2 + 2 x^2 + O(x^4)$, hence
$f(x) = x^2 + x^4/6 + O(x^6)$ about $0$.  Substituting,
\[
  D(P \| P^\star)
   =  q^\star x^2 + O(q^\star x^4)
   =  \frac{\delta^2}{4 q^\star} + O \left(\frac{\delta^4}{q^{\star 3}}\right),
\]
and $q^\star = p_r(1 - \delta/(2 p_r)) = p_r(1+o(1))$ as $\delta/p_r \to 0$
yields the third display.
\end{proof}

\begin{lemma}[Stopped KL identity for the I-projection]
\label{lem:stopped-kl-iproj}
Under the assumptions of Lemma~\ref{lem:iproj}, if $\EE_P[\tau] < \infty$,
then
\[
  D(P^\tau\|(P^\star)^\tau)
   = 
  \EE_P[\tau]  D(P\|P^\star).
\]
\end{lemma}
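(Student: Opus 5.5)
This identity follows the same route as Lemma~\ref{lem:stopped-kl-swap}, with the swapped distribution $Q$ replaced by the midpoint distribution $P^\star$ of Lemma~\ref{lem:iproj}. The plan has four steps.

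\textbf{Step 1: $P$ and $P^\star$ are mutually absolutely continuous, and the log-likelihood ratio is bounded.} $P^\star$ differs from $P$ only at $r$ and $a^\star$, where both masses equal $(p_r+p_{(2)})/2>0$. Since $p_{(2)}>0$, also $p_r>0$ and $p_{a^\star}>0$. Define
\[
  \ell(x):=\log\bigl(P(x)/P^\star(x)\bigr).
\]
Then $\ell(x)=0$ for $x\notin\{r,a^\star\}$, and $\ell$ takes only the two finite values
\[
  \ell(r)=\log\frac{2p_r}{p_r+p_{(2)}},\qquad
  \ell(a^\star)=\log\frac{2p_{(2)}}{p_r+p_{(2)}}.
\]
Hence $|\ell|\le K<\infty$, and the supports of $P$ and $P^\star$ coincide.

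\textbf{Step 2: Write the density of the stopped experiment.} Consider the law of $(\tau,X_1,\dots,X_\tau)$ on $\{\tau<\infty\}$. Because $\tau$ is an $(\cF_t)$-stopping time, on $\{\tau=n\}$ the event is determined by $(X_1,\dots,X_n)$. The event $\{\tau=n\}$ therefore contributes the same indicator under both measures and cancels. This gives
\[
  \frac{dP^\tau}{d(P^\star)^\tau}=\prod_{i\le\tau}\frac{P(X_i)}{P^\star(X_i)}.
\]
Note that $\EE_P[\tau]<\infty$ implies $P(\tau<\infty)=1$. Taking $\EE_P$ of the logarithm then gives
\[
  D\bigl(P^\tau\|(P^\star)^\tau\bigr)=\EE_P\Bigl[\sum_{i=1}^\tau\ell(X_i)\Bigr].
\]
One subtlety: if $P^\star(\tau<\infty)<1$, the stopped law under $P^\star$ also puts mass on $\{\tau=\infty\}$. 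I will handle this by working with laws restricted to $\{\tau<\infty\}$. Since $P$ gives that event full mass, the KL formula is unchanged.

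\textbf{Step 3: Apply Wald's identity.} The variables $\ell(X_i)$ are i.i.d. under $P$ and bounded, and $\EE_P[\tau]<\infty$. Wald's identity therefore gives
\[
  \EE_P\Bigl[\sum_{i\le\tau}\ell(X_i)\Bigr]=\EE_P[\tau]\,\EE_P[\ell(X_1)].
\]
The integrability condition holds because $\sum_{i\le\tau}|\ell(X_i)|\le K\tau$.

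\textbf{Step 4: Identify the per-sample term.} Directly,
\[
  \EE_P[\ell(X_1)]=p_r\ell(r)+p_{(2)}\ell(a^\star)=D(P\|P^\star).
\]
This matches the closed form in Lemma~\ref{lem:iproj}.

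The only delicate point is Step~2: justifying the cancellation of the stopping factor and the treatment of $\{\tau=\infty\}$ under $P^\star$. I will handle it by the restriction argument above. Everything else is routine.
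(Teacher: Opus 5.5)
Your proposal is correct and follows the same route as the paper. Both arguments use mutual absolute continuity with a bounded log-likelihood ratio, cancel the stopping-event factor in $dP^\tau/d(P^\star)^\tau$, and apply Wald's identity to get $\EE_P[\tau]\,\EE_P[\ell(X_1)]=\EE_P[\tau]\,D(P\|P^\star)$; the paper phrases the last step as Fubini plus independence, deferring to Lemma~\ref{lem:stopped-kl-swap}, and leaves implicit the possible atom at $\{\tau=\infty\}$ under $P^\star$, which you treat explicitly.
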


\begin{proof}
$P$ and $P^\star$ are mutually absolutely continuous on $\cA$
(since $q^\star \in (0,1)$ when $p_r \in (0,1)$, and other masses
agree), and the log-likelihood
$\ell(x) := \log(P(x)/P^\star(x))$ is bounded.  The stopping-event
factor cancels in the likelihood ratio, so Fubini and independence
give $D(P^\tau\|(P^\star)^\tau) = \EE_P[\tau]  \EE_P[\ell(X_1)]
= \EE_P[\tau]  D(P\|P^\star)$,
exactly as in Lemma~\ref{lem:stopped-kl-swap}.
\end{proof}

\begin{corollary}[Sharp information-theoretic lower bound]
\label{cor:power-lower-bound-tight}
Under the hypotheses of Corollary~\ref{cor:power-lower-bound},
\[
  \EE_P[\tau]
   \geq 
  \frac{\log(1/\varepsilon)}{D(P\|P^\star)}
   = 
  \frac{4 p_r \log(1/\varepsilon)}{\delta^2}\bigl(1 + o(1)\bigr)
  \qquad (\delta/p_r \to 0).
\]
\end{corollary}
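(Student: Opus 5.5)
The plan is to mirror the proof of Corollary~\ref{cor:power-lower-bound}, replacing the swapped alternative $Q$ by the forward I-projection $P^\star$ of Lemma~\ref{lem:iproj}.

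First, we must check that $P^\star\in H_{0,r}$. Fix $a^\star\in\arg\max_{a\neq r}p_a$; this maximizer exists by Lemma~\ref{lem:attained-supremum}, since $p_{(2)}>0$. By construction $p^\star_r=p^\star_{a^\star}$, so $p^\star_r\le\sup_{a\ne r}p^\star_a$, and Step~2 of Lemma~\ref{lem:iproj} confirms that the candidate lies in $H_{0,r}$. Type-I validity of the procedure then gives $P^\star(\tau<\infty)\le\varepsilon$. On the other side, the hypothesis gives $P(\tau<\infty)=1$.

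Next, set $A:=\{\tau<\infty\}$ and apply the data-processing inequality to the stopped laws $P^\tau$ and $(P^\star)^\tau$, using the event $A$. This gives
\[
\mathrm{kl}\bigl(P(A),P^\star(A)\bigr)\le D\bigl(P^\tau\|(P^\star)^\tau\bigr).
\]
The left side is computed explicitly:
\[
\mathrm{kl}(1,x)=\log(1/x)\ge\log(1/\varepsilon)\quad\text{for } x\le\varepsilon.
\]
The right side equals $\EE_P[\tau]\,D(P\|P^\star)$ by Lemma~\ref{lem:stopped-kl-iproj}, which applies because $\EE_P[\tau]<\infty$ and the log-likelihood ratio is bounded. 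To justify that boundedness, note that only the coordinates $r$ and $a^\star$ change, both to the value $(p_r+p_{(2)})/2\in(0,1)$, and both have positive mass under $P$.

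Dividing through by $D(P\|P^\star)$, which is strictly positive because $P\notin H_{0,r}$, gives the first inequality. The asymptotic equality then follows directly from the expansion $D(P\|P^\star)=\frac{\delta^2}{4p_r}(1+o(1))$ in Lemma~\ref{lem:iproj}, after inverting the $(1+o(1))$ factor.

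The only delicate step is the stopped KL identity in the case where $\tau$ may be infinite on a $P$-null set. Since $P(\tau<\infty)=1$, the stopped law is well defined. Wald's identity applies because $\ell$ is bounded and $\EE_P[\tau]<\infty$, which is already handled in Lemma~\ref{lem:stopped-kl-iproj}. Hence I expect no genuine obstacle beyond verifying membership of $P^\star$ in the null.
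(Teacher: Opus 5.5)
Your proposal is correct and is essentially the paper's proof. The paper likewise reruns the data-processing argument of Corollary~\ref{cor:power-lower-bound} with the I-projection $P^\star$ in place of the swap distribution, uses $P^\star(\tau<\infty)\le\varepsilon$ together with Lemma~\ref{lem:stopped-kl-iproj}, and reads off the constant from the expansion in Lemma~\ref{lem:iproj}; your explicit checks that $p^\star_r=p^\star_{a^\star}$ puts $P^\star$ in $H_{0,r}$ and that the log-likelihood ratio is bounded only spell out what the paper leaves implicit.
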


\begin{proof}
Apply the same data-processing argument as in
Corollary~\ref{cor:power-lower-bound} with $Q = P^\star$ in place of
the swap distribution.  Since $P^\star \in H_{0,r}$
(Lemma~\ref{lem:iproj}), Type-I validity gives
$P^\star(\tau<\infty) \leq \varepsilon$, and the data-processing
inequality combined with Lemma~\ref{lem:stopped-kl-iproj} yields
$\log(1/\varepsilon) \leq \EE_P[\tau]  D(P\|P^\star)$.
The asymptotic constant follows from Lemma~\ref{lem:iproj}.
\end{proof}

\subsection{Information-Theoretic Lower Bound for the Unseen-Category Component}
\label{app:unseen-lower-bound}
 
\begin{lemma}[Stopped KL identity for a hidden-competitor alternative]
\label{lem:stopped-kl-hidden}
Fix a distribution $P$ on a countable category set $\cA$ with $p_r \in (0,1)$,
and let $h \in \cA$ satisfy $h \neq r$ and $p_h = 0$.
Define a second distribution $Q$ by
\[
  q_r = \frac{p_r}{2},
  \qquad
  q_h = \frac{p_r}{2},
  \qquad
  q_a = p_a
  \quad\text{for } a \notin \{r,h\}.
\]
Let $P^\tau$ and $Q^\tau$ denote the laws of the stopped experiment
$(\tau,X_1,\dots,X_\tau)$ under $P$ and $Q$.
If $\EE_P[\tau] < \infty$, then
\[
  D(P^\tau\|Q^\tau)
  =
  \EE_P[\tau]p_r\log 2.
\]
\end{lemma}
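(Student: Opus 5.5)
The plan is to follow the same template as Lemma~\ref{lem:stopped-kl-swap} and Lemma~\ref{lem:stopped-kl-iproj}. First I would write down the stopped likelihood ratio explicitly, then identify the per-sample log-likelihood ratio, which here is bounded and essentially a scaled indicator, and finally apply Wald's identity. The one new feature compared with the earlier lemmas is that $P$ and $Q$ are \emph{not} mutually absolutely continuous, since $q_h=p_r/2>0=p_h$. Only the direction $P\ll Q$ is needed, and it holds: every $a$ with $p_a>0$ has $q_a>0$, because $q_r=p_r/2>0$ and $q_a=p_a$ for $a\notin\{r,h\}$.

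\textbf{Step 1: the stopped likelihood ratio.} Since $\EE_P[\tau]<\infty$, we have $\tau<\infty$ $P$-a.s. For each $n$, the event $\{\tau=n\}$ is determined by $(X_1,\dots,X_n)$, so the indicator $\ind\{\tau=n\}$ appears identically in both stopped laws. Concretely, for any finite sequence $(x_1,\dots,x_n)$ with $\tau=n$ on it,
\[
  P^\tau(n,x_{1:n})=\ind\{\tau=n\}\prod_{i\le n}P(x_i)
  \quad\text{and}\quad
  Q^\tau(n,x_{1:n})=\ind\{\tau=n\}\prod_{i\le n}Q(x_i).
\]
Therefore, on $P$-a.e.\ outcome,
\[
  \frac{dP^\tau}{dQ^\tau}=\prod_{i=1}^{\tau}\frac{P(X_i)}{Q(X_i)} .
\]
Whatever mass $Q^\tau$ places on $\{\tau=\infty\}$ is irrelevant, because $P^\tau$ places none there.

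\textbf{Step 2: the per-sample log-ratio.} Define $\ell(x):=\log(P(x)/Q(x))$ on the support of $P$. Then $\ell(r)=\log 2$, and $\ell(a)=0$ for every other $a$ in the support of $P$. The category $h$ never occurs under $P$, so it contributes nothing. Hence $\ell=\log 2\cdot\ind\{x=r\}$ $P$-a.s., which is bounded, and
\[
  \EE_P[\ell(X_1)]=p_r\log 2 .
\]

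\textbf{Step 3: Wald's identity.} We have
\[
  D(P^\tau\|Q^\tau)=\EE_P\Bigl[\sum_{i\le\tau}\ell(X_i)\Bigr].
\]
By Wald's identity for i.i.d.\ bounded summands with $\EE_P[\tau]<\infty$, the right-hand side equals $\EE_P[\tau]\,p_r\log 2$. Equivalently, the right-hand side is $\log 2\,\EE_P[N_\tau(r)]$, and the optional stopping theorem applied to the martingale $N_t(r)-p_rt$ gives the same value.

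The only delicate point is Step 1, namely justifying the likelihood-ratio formula despite the lack of mutual absolute continuity. I would handle it exactly as above: work with densities with respect to counting measure on finite sequences, and use $P\ll Q$ together with $P(\tau<\infty)=1$.
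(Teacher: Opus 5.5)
Your proposal is correct and follows the paper's route: the paper also invokes the Wald argument of Lemma~\ref{lem:stopped-kl-swap}, handles the failure of mutual absolute continuity by assigning an arbitrary finite value to $\ell(h)$ on the $P$-null event $\{X=h\}$, and computes $\EE_P[\ell(X_1)]=p_r\log 2$. Your explicit check that only $P^\tau\ll Q^\tau$ is needed, via $P\ll Q$ and $P(\tau<\infty)=1$, is a more careful version of the step the paper leaves implicit.
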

 
\begin{proof}
By the same Wald argument as Lemma~\ref{lem:stopped-kl-swap},
$D(P^\tau \| Q^\tau) = \EE_P[\tau] \cdot \EE_P[\ell(X_1)]$ with
$\ell(x) := \log(P(x)/Q(x))$, where we set $\ell(h) := 0$ (an arbitrary
finite value on the $P$-null event $\{X = h\}$ that does not affect any
$P$-expectation).  By construction $\ell(r) = \log 2$ and
$\ell(x) = 0$ for $x \neq r$, so
$\EE_P[\ell(X_1)] = p_r \log 2$.
\end{proof}
 
\begin{corollary}[Information-theoretic lower bound for the unseen-category component]
\label{cor:unseen-lower-bound}
Fix $p \in (0,1)$ and $\delta \in (0,p)$, and assume the category set
$\cA$ contains at least $\lceil (1-p)/(p-\delta) \rceil + 2$ distinct
non-target categories (so that an unused ``hidden'' label is
available; this holds automatically on countably infinite category sets).
Define
\[
  \cP(p,\delta)
  :=
  \left\{
    P :
    p_r = p,
    p_{(2)} \le p-\delta
  \right\}.
\]
Let $\tau$ be any stopping time corresponding to a sequential test such that
\[
  \sup_{Q \in H_{0,r}} \PP_Q(\tau<\infty) \le \varepsilon,
\]
and assume that, for every $P \in \cP(p,\delta)$,
\[
  \PP_P(\tau<\infty)=1
  \qquad\text{and}\qquad
  \EE_P[\tau]<\infty.
\]
Then
\[
  \sup_{P \in \cP(p,\delta)} \EE_P[\tau]
  \ge
  \frac{\log(1/\varepsilon)}{p\log 2}.
\]
\end{corollary}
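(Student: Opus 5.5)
The plan is to exhibit one member $P\in\cP(p,\delta)$ that leaves some category $h\neq r$ with $p_h=0$. Then the hidden-competitor null $Q$ of Lemma~\ref{lem:stopped-kl-hidden} sits at stopped-KL distance $\EE_P[\tau]\,p\log2$ from $P$. A data-processing argument, identical to the one in Corollary~\ref{cor:power-lower-bound}, then finishes.

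\emph{Step 1 (construction of $P$).} Set $m:=\lceil (1-p)/(p-\delta)\rceil$. Place mass $p$ on $r$ and spread the remaining $1-p$ equally over $m$ distinct non-target categories, each receiving $(1-p)/m\le p-\delta$. Since $\cA$ has at least $m+2$ non-target categories, at least one more non-target label $h$ remains with $p_h=0$. Then $p_r=p$ and $p_{(2)}\le p-\delta$, so $P\in\cP(p,\delta)$. We also have $p_r=p\in(0,1)$, because $p\le1/2$ or $p<1$; if $p=1$ were allowed, $m$ would be $0$ and the construction would still go through. By hypothesis, $\PP_P(\tau<\infty)=1$ and $\EE_P[\tau]<\infty$.

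\emph{Step 2 (the null $Q$).} Define $Q$ as in Lemma~\ref{lem:stopped-kl-hidden}: $q_r=q_h=p/2$, with all other masses unchanged. Then $q_r\le q_h$, so $Q\in H_{0,r}$, and Type-I validity gives $Q(\tau<\infty)\le\varepsilon$. Note that $P\ll Q$: the only mass that changes is at $r$, where $Q$ is still positive. Hence the KL divergence is finite, and Lemma~\ref{lem:stopped-kl-hidden} applies, giving $D(P^\tau\|Q^\tau)=\EE_P[\tau]\,p\log2$.

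\emph{Step 3 (data processing).} Let $A=\{\tau<\infty\}$, so $P(A)=1$ and $Q(A)\le\varepsilon$. The data-processing inequality gives
\[
\log(1/\varepsilon)\le \mathrm{kl}(1,Q(A))\le D(P^\tau\|Q^\tau)=\EE_P[\tau]\,p\log2 .
\]
Hence $\EE_P[\tau]\ge \log(1/\varepsilon)/(p\log2)$, and the supremum over $\cP(p,\delta)$ is at least this value.

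The main point requiring care is that $P^\tau$ and $Q^\tau$ are defined as laws of the stopped sequence, with $\tau$ finite $P$-a.s. This makes $\mathrm{kl}(P(A),Q(A))$ a legitimate coarsening of the stopped experiment, and it means the event $\{\tau=\infty\}$ needs no separate treatment under $Q$. Everything else is routine. The one-sided support condition ($P\ll Q$ but not conversely) is what makes the KL finite, and Lemma~\ref{lem:stopped-kl-hidden} already handles it.
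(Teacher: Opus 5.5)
Your proposal is correct and follows the paper's proof essentially verbatim. It uses the same $P$ (mass $p$ on $r$, $(1-p)/m$ on each of $m$ non-target labels, an unused label $h$), the same hidden-competitor null $Q$ via Lemma~\ref{lem:stopped-kl-hidden}, and the same data-processing step on $A=\{\tau<\infty\}$. The only slip is cosmetic: $Q$ changes the mass at $h$ (from $0$ to $p/2$) as well as at $r$, but since $P(h)=0$ this leaves $P\ll Q$ intact.
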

 
\begin{proof}
Pick $m \geq (1-p)/(p-\delta)$ distinct categories $b_1, \dots, b_m, h
\in \cA \setminus \{r\}$ and set $P(r) = p$, $P(h) = 0$,
$P(b_j) = (1-p)/m$, so $P \in \cP(p, \delta)$.
The hidden-competitor distribution
$Q(r) = Q(h) = p/2$, $Q(b_j) = (1-p)/m$ lies in $H_{0,r}$, so
$Q(\tau < \infty) \leq \varepsilon$.  Data processing combined with
Lemma~\ref{lem:stopped-kl-hidden} then gives
$\log(1/\varepsilon) \leq D(P^\tau \| Q^\tau) = \EE_P[\tau]  p \log 2$.
\end{proof}

\begin{proof}[Proof of Theorem~\ref{thm:minimax-combined-lower-bound}]
Fix $p\in(0,1/2]$, $\delta\in(0,p)$, and a level-$\varepsilon$
sequential certifier $\tau'$ satisfying the assumptions of the
corollary.  For notational simplicity, write $\tau:=\tau'$ within this
proof, and let
\[
  \cP(p,\delta)
  :=
  \left\{
    P:
    p_r=p,\quad p_{(2)}\le p-\delta
  \right\}.
\]
We combine two distinct least favorable subfamilies of $\cP(p,\delta)$.
 
\medskip
\noindent\textbf{Step 1: Discrimination-hard instance.}
Because $p \le 1/2$, the quantity $1-2p+\delta$ is nonnegative.
Choose an integer
\[
  m \ge \frac{1-2p+\delta}{p-\delta},
\]
and distinct categories
\[
  a^\star,b_1,\dots,b_m \in \cA\setminus\{r\}.
\]
Define $P_{\mathrm{pw}}$ by
\[
  P_{\mathrm{pw}}(r)=p,
  \qquad
  P_{\mathrm{pw}}(a^\star)=p-\delta,
  \qquad
  P_{\mathrm{pw}}(b_j)=\frac{1-2p+\delta}{m}
  \quad (j=1,\dots,m),
\]
and $P_{\mathrm{pw}}(a)=0$ otherwise.
Then $P_{\mathrm{pw}} \in \cP(p,\delta)$ and its runner-up satisfies
$p_{(2)}=p-\delta>0$.
Corollary~\ref{cor:power-lower-bound} therefore gives
\[
  \EE_{P_{\mathrm{pw}}}[\tau]
  \ge
  \frac{\log(1/\varepsilon)}
       {\delta \log\bigl(p/(p-\delta)\bigr)}.
\]
If $\delta \le p/2$, then
\[
  \log\frac{p}{p-\delta}
  =
  -\log\left(1-\frac{\delta}{p}\right)
  \le
  \frac{\delta/p}{1-\delta/p}
  \le
  \frac{2\delta}{p},
\]
whence
\[
  \EE_{P_{\mathrm{pw}}}[\tau]
  \ge
  \frac{p\log(1/\varepsilon)}{2\delta^2}.
\]
 
\medskip
\noindent\textbf{Step 2: Exploration-hard instance.}
Corollary~\ref{cor:unseen-lower-bound} provides another distribution
$P_{\mathrm{un}} \in \cP(p,\delta)$ such that
\[
  \EE_{P_{\mathrm{un}}}[\tau]
  \ge
  \frac{\log(1/\varepsilon)}{p\log 2}.
\]
 
\medskip
\noindent\textbf{Step 3: Combine the two bounds.}
If $\delta \le p/2$, then $\max\{x,y\} \ge (x+y)/2$ applied to the
two lower bounds gives the result directly.
If $\delta > p/2$, then $p/\delta^2 \le 4/p$, so
$p/\delta^2 + 1/p \le 5/p$, and
Corollary~\ref{cor:unseen-lower-bound} alone gives
$\sup_P \EE_P[\tau] \ge \log(1/\varepsilon)/(p\log 2)
\ge (5\log 2)^{-1}\log(1/\varepsilon)(p/\delta^2+1/p)$.
Since $(5\log 2)^{-1} > 1/4$, the same constant works in both cases.
\end{proof}

\section{Detailed Power Analysis for the W-CITE}
\label{app:weighted-power-details}
 
This appendix contains the proofs for the confidence-weighted extension
in Section~\ref{sec:weighted-main}.  We keep the notation parallel to
the unweighted analysis, replacing $(p_r,p_{(2)},\delta)$ by
$(\mu_r,\mu_\star,\Delta_w)$ where appropriate.

\begin{lemma}[Expected maximum weighted competitor penalty]
\label{lem:weighted-max-rival}
Fix $\lambda \in (0,1)$ and define
$g_\lambda(w) := -\log(1-\lambda w)$ for $w \in [0,1]$.
For each competitor $a \neq r$, let
$H_t^{(\lambda)}(a) := \sum_{i=1}^t g_\lambda(W_i)\ind\{X_i=a\}$,
$v_\lambda(a) := \EE[g_\lambda(W)\ind\{X=a\}]$, and set
$R_t^{(\lambda)} := \max_{a \neq r} H_t^{(\lambda)}(a)$,
$v_\lambda^\star := \sup_{a \neq r} v_\lambda(a)$.
Then, for every integer $t \geq 1$,
\[
  \EE[R_t^{(\lambda)}]
  \leq
  t v_\lambda^\star
  +
  2\sqrt{2}
  \bigl(-\log(1-\lambda)\bigr)
  \sqrt{t\log(t+1)}.
\]
\end{lemma}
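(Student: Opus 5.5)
The argument follows the proof of Lemma~\ref{lem:max-count}, with the indicator $f_a(x)=\ind\{x=a\}$ replaced by the weighted function $f_a(x,w):=g_\lambda(w)\ind\{x=a\}$. Two properties of $g_\lambda$ are all we need. First, $g_\lambda$ is nonnegative and nondecreasing on $[0,1]$. Second, it is bounded by $B_\lambda:=-\log(1-\lambda)$, since $0\le g_\lambda(w)\le g_\lambda(1)$.

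\textbf{Step 1 (centering).} Since $v_\lambda(a)\le v_\lambda^\star$ for every $a\neq r$, we have
\[
  \EE[R_t^{(\lambda)}]-tv_\lambda^\star
  \le
  \EE\sup_{a\neq r}\sum_{i=1}^t\bigl\{f_a(X_i,W_i)-\EE f_a(X_i,W_i)\bigr\}.
\]
The supremum runs over a countable class. Its measurability is handled by truncating to finite subclasses and applying monotone convergence.

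\textbf{Step 2 (symmetrization).} Standard symmetrization bounds the right-hand side by
\[
  2\,\EE\,\EE_\sigma\sup_{a\neq r}\sum_{i=1}^t\sigma_i f_a(X_i,W_i),
\]
where the $\sigma_i$ are i.i.d.\ Rademacher variables independent of the sample.

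\textbf{Step 3 (finite-class maximal inequality).} Condition on the sample and consider the vectors $v^{(a)}:=(f_a(X_i,W_i))_{i\le t}$ for $a\neq r$. Each $v^{(a)}$ is supported on the disjoint index set $\{i:X_i=a\}$. Every unobserved $a$ therefore gives the zero vector, and at most $t$ distinct nonzero vectors arise. The class thus has at most $t+1$ distinct elements. Each has Euclidean norm at most $B_\lambda\sqrt t$, because every coordinate lies in $[0,B_\lambda]$. The sub-Gaussian (exponential) maximal inequality over $N=t+1$ vectors of norm at most $\sigma$ gives $\EE_\sigma\max\le\sigma\sqrt{2\log N}$. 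Here this is $B_\lambda\sqrt{2t\log(t+1)}$.

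\textbf{Step 4 (combine).} Multiplying by the factor $2$ from Step~2 yields
\[
  \EE[R_t^{(\lambda)}]
  \le
  tv_\lambda^\star+2\sqrt{2}\,B_\lambda\sqrt{t\log(t+1)},
\]
which is the claimed bound.

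The only steps needing care are the countable-class measurability and symmetrization, together with the observation that the disjoint supports keep the effective class size at $t+1$ regardless of $|\cA|$. This size bound is what keeps the result category-set-size-free. I expect no real obstacle: the conditional maximal inequality is the crux, and it is identical to the unweighted case up to the norm factor $B_\lambda$.
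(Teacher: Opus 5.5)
Your proposal is correct and is essentially the paper's argument: centering at $t v_\lambda^\star$, symmetrization, and the exponential (Massart-type) maximal inequality over the at most $t+1$ distinct, disjointly supported vectors, with the per-coordinate bound raised from $1$ to $g_\lambda(1)=-\log(1-\lambda)$. Your version spells out the constants that the paper's sketch leaves implicit, and arrives at exactly $2\sqrt{2}\,(-\log(1-\lambda))\sqrt{t\log(t+1)}$.
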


\begin{lemma}[Weighted LCB inversion at a fixed time]
\label{lem:weighted-lcb-inversion}
Fix an integer $t \geq 1$ and a number $q \in (0,1]$.
\textup{(a)}~If $\widetilde M_t(q) > 1$, then $\widehat \mu_t(r) > q$.
\textup{(b)}~If $\widetilde M_t(q) \geq 1/\alpha_r$, then $\widetilde L_t(r) \geq q$.
\end{lemma}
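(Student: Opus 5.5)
The plan is to reproduce, in the weighted setting, the proof of Lemma~\ref{lem:lcb-monotonicity}(b)--(c), replacing the indicator $\ind\{X_i=r\}$ by $Y_i:=W_i\ind\{X_i=r\}\in[0,1]$. For part (a) we argue by contrapositive: assume $\widehat\mu_t(r)\le q$ and show $\widetilde M_t(q)\le 1$.

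The step that differs from the unweighted case is that $Y_i$ can now take intermediate values, so the product no longer collapses to a two-term closed form. Instead, fix $\lambda\in\Lambda_r(q)$, so that $\lambda<1/q$. Each factor is then positive, since $1+\lambda(Y_i-q)\ge 1-\lambda q>0$. We use concavity of the logarithm in the form $\log(1+x)\le x$ for $x>-1$:
\[
  \log\widetilde M_t(q,\lambda)
  = \sum_{i=1}^t \log\bigl(1+\lambda(Y_i-q)\bigr)
  \le \lambda\sum_{i=1}^t (Y_i-q)
  = \lambda t\bigl(\widehat\mu_t(r)-q\bigr)\le 0 .
\]
This gives $\widetilde M_t(q,\lambda)\le 1$ for every admissible $\lambda$. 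Summing with the weights gives $\widetilde M_t(q)\le\sum_{\lambda\in\Lambda_r(q)}v_\lambda\le 1$, which is the contrapositive of (a). The same inequality would also give a shorter proof of Lemma~\ref{lem:lcb-monotonicity}(b).

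For part (b), suppose $\widetilde M_t(q)\ge 1/\alpha_r$. Since $\alpha_r<1$, we have $1/\alpha_r>1$, so part (a) gives $\widehat\mu_t(r)>q$. Hence $q\in(0,\widehat\mu_t(r)]$ and $\widetilde M_t(q)\ge 1/\alpha_r$, so $q$ belongs to the set in the definition of $\widetilde L_t(r)$. Taking the supremum gives $\widetilde L_t(r)\ge q$.

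We need $\alpha_r<1$, which holds because $\alpha_r=\varepsilon/3<1$. We also need $\Lambda_r(q)$ to be the grid truncated at $1/q$, so that every factor is positive and the logarithm is defined. With both in place there is no real obstacle.
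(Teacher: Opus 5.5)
Your proof is correct and follows the paper's argument: (a) by contrapositive, using concavity of the logarithm to show each $\widetilde M_t(q,\lambda)\le 1$ and hence $\widetilde M_t(q)\le\sum_{\lambda}v_\lambda\le 1$, and then (b) from (a) because $1/\alpha_r>1$ places $q$ in the set defining $\widetilde L_t(r)$. The only difference is that you use the tangent bound $\log(1+x)\le x$ where the paper applies Jensen to get $t^{-1}\log\widetilde M_t(q,\lambda)\le\log\bigl(1+\lambda(\widehat\mu_t(r)-q)\bigr)$; this does not affect the argument.
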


\subsection{Weighted pairwise NSM property (Section~\ref{sec:weighted}, item~(i))}
\label{app:proof-prop:weighted-pairwise-nsm}

\begin{proof}
Fix $a\neq r$ and $\lambda\in(0,1)$.  Since $W_i\in[0,1]$, each factor
\[
  1+\lambda W_i\bigl(\ind\{X_i=r\}-\ind\{X_i=a\}\bigr)
\]
is nonnegative.  Under $\mu_r\le\mu_a$,
\[
  \EE\left[
    1+\lambda W_i\bigl(\ind\{X_i=r\}-\ind\{X_i=a\}\bigr)
    \mid\cF_{i-1}
  \right]
  =
  1+\lambda(\mu_r-\mu_a)
  \le
  1,
\]
using independence of $(X_i,W_i)$ from $\cF_{i-1}$.  Therefore
$(\widetilde E_t^{(r,a)}(\lambda))_{t\ge0}$ is an NSM.
\end{proof}

\subsection{Weighted mixture NSM (Section~\ref{sec:weighted}, item~(i))}
\label{app:proof-cor:weighted-mixture-nsm}
 
\begin{proof}
For each fixed $a\neq r$ and each
$\lambda\in\Lambda_{\mathrm{pw}}$,
$(\widetilde E_t^{(r,a)}(\lambda))_{t\ge0}$ is an NSM under
$\mu_r\le\mu_a$ by the preceding proof.  Since the weights
$(w_\lambda)$ are nonnegative and sum to at most one,
\[
  \widetilde E_t^{(r,a)}
  =
  \sum_{\lambda\in\Lambda_{\mathrm{pw}}}
  w_\lambda \widetilde E_t^{(r,a)}(\lambda)
\]
is also an NSM with initial value at most one.  Ville's inequality gives
\[
  \PP_P\left(
    \sup_{t\ge0}
    \widetilde E_t^{(r,a)}
    \ge
    \alpha^{-1}
  \right)
  \le
  \alpha .
\]
\end{proof}
 
\subsection{Weighted LCB monotonicity (Section~\ref{sec:weighted}, item~(ii))}
\label{app:proof-lem:weighted-lcb-monotone}
 
\begin{proof}
The proof is identical to the monotonicity part of
Lemma~\ref{lem:lcb-monotonicity}:
$\Lambda_r(q_2) \subseteq \Lambda_r(q_1)$ and each factor
$1+\lambda(W_i\ind\{X_i=r\}-q)$ is pointwise larger when $q$ is
smaller, so $\widetilde M_t(q_1) \geq \widetilde M_t(q_2)$.
\end{proof}

\subsection{Weighted LCB validity (Section~\ref{sec:weighted}, item~(ii))}
\label{app:proof-prop:weighted-lcb-valid}
 
\begin{proof}
If $\mu_r=0$, then $W_i\ind\{X_i=r\}=0$ almost surely for all $i$, so
$\widehat\mu_t(r)=0$ and $\widetilde L_t(r)=0$ almost surely.  Hence
$\{\exists t:\widetilde L_t(r)>\mu_r\}$ is empty.  Assume
$\mu_r>0$ in the remainder.

The proof parallels that of Proposition~\ref{prop:lcb-valid}
(Appendix~\ref{app:proof-prop:lcb-valid}), with the following
substitutions: $\ind\{X_i=r\}$ becomes $W_i\ind\{X_i=r\}$,
$p_r$ becomes $\mu_r$, $\widehat p_t(r)$ becomes $\widehat\mu_t(r)$,
and the mixture weights $(w_\lambda)$ become $(v_\lambda)$.
At the true value $q=\mu_r$, one has
$\EE[W_i\ind\{X_i=r\}-\mu_r]=0$, so each
$\widetilde M_t(\mu_r,\lambda)$ is a nonnegative martingale with unit initial
value, and Ville's inequality gives
$\PP(\sup_t \widetilde M_t(\mu_r) \geq 1/\alpha_r) \leq \alpha_r$.
The monotonicity from
the weighted LCB monotonicity property then implies
$\{\exists t:\widetilde L_t(r)>\mu_r\} \subseteq
\{\sup_t \widetilde M_t(\mu_r) \geq 1/\alpha_r\}$.
\end{proof}

\subsection{Weighted unseen bound validity (Section~\ref{sec:weighted}, item~(iii))}
\label{app:proof-prop:weighted-unseen-valid}
 
\begin{proof}
Since $\mu_a \leq p_a$ for every $a \in \cA$,
\[
  \Bigl\{
    \exists a,\exists t :
    N_t(a)=0
    \text{ and }
    \mu_a > U_t
  \Bigr\}
  \subseteq
  \Bigl\{
    \exists a,\exists t :
    N_t(a)=0
    \text{ and }
    p_a > U_t
  \Bigr\}.
\]
The probability of the latter event is at most $\alpha_u$ by
Proposition~\ref{prop:unseen-valid}.
\end{proof}

\subsection{Proof of Theorem~\ref{thm:weighted-main}}
\label{app:proof-thm:weighted-main}

\begin{proof}
The proof parallels that of Theorem~\ref{thm:main-iut}
(Appendix~\ref{app:proof-thm:main-iut}), with $p_a$ replaced by $\mu_a$
throughout and Lemma~\ref{lem:monotone} replaced by the
all-competitors check (runner-up monotonicity fails for weighted
observations).

Fix $P\in H_{0,r}^{(w)}$.  If $\cA\setminus\{r\}=\emptyset$, then the
weighted null is vacuous.  Otherwise, since
\[
  \sum_{a\in\cA}\mu_a=\EE[W]\le1,
\]
Lemma~\ref{lem:attained-supremum} implies that any positive value of
$\mu_\star=\sup_{a\neq r}\mu_a$ is attained.  If $\mu_\star>0$, choose
$a^\star\neq r$ with $\mu_{a^\star}=\mu_\star\ge\mu_r$.  If
$\mu_\star=0$, then $P\in H_{0,r}^{(w)}$ implies $\mu_r=0$, and any
$a^\star\neq r$ is a valid witness with
$\mu_{a^\star}\ge\mu_r$.
Decompose
\[
  \{\tau^{(w)} < \infty\}
  \subseteq
  \underbrace{\{\tau^{(w)} < \infty,\ a^\star \in \cA_{\tau^{(w)}}\}}_{E_1}
  \cup
  \underbrace{\{\tau^{(w)} < \infty,\ a^\star \notin \cA_{\tau^{(w)}}\}}_{E_2}.
\]

\medskip
\noindent\textbf{Step~1 (bounding $\PP(E_1)$).}
On $E_1$, the weighted pairwise stopping condition at $\tau^{(w)}$
applied to \emph{every} observed competitor gives
$\widetilde E_{\tau^{(w)}}^{(r,a^\star)} \geq 1/\alpha_{\mathrm{pw}}$.
The process $(\widetilde E_t^{(r,a^\star)})_{t \geq 0}$ is an NSM under
$\mu_r \leq \mu_{a^\star}$ by the weighted mixture NSM property
(Section~\ref{sec:weighted}, item~(i)), with initial value $\leq 1$.
Ville's inequality~\eqref{eq:ville} yields $\PP(E_1) \leq \alpha_{\mathrm{pw}}$.

\medskip
\noindent\textbf{Step~2 (bounding $\PP(E_2)$).}
Define
\[
  F_r := \{\exists t \geq 1 : \widetilde L_t(r) > \mu_r\},
  \quad
  F_u := \{\exists a \in \cA,\ \exists t \geq 1 :
           N_t(a) = 0,\ \mu_a > U_t\}.
\]
Weighted LCB validity gives $\PP(F_r) \leq \alpha_r$, and weighted
unseen-bound validity gives $\PP(F_u) \leq \alpha_u$.
On $E_2 \setminus (F_r \cup F_u)$, the LCB--unseen part of the weighted
stopping rule gives
$\widetilde L_{\tau^{(w)}}(r)>U_{\tau^{(w)}}$; combined with
$F_r^c$ and $F_u^c$ (recall $a^\star$ is unseen at $\tau^{(w)}$),
\[
  \mu_{a^\star} \leq U_{\tau^{(w)}}
  < \widetilde L_{\tau^{(w)}}(r) \leq \mu_r,
\]
contradicting $\mu_{a^\star} \geq \mu_r$.  Hence
$E_2 \subseteq F_r \cup F_u$ and $\PP(E_2) \leq \alpha_r + \alpha_u$.

\medskip
\noindent\textbf{Step~3 (union bound).}
$\PP(\tau^{(w)} < \infty) \leq \PP(E_1) + \PP(E_2)
 \leq \alpha_{\mathrm{pw}} + \alpha_r + \alpha_u \leq \varepsilon$.
\end{proof}

\subsection{Proof of Theorem~\ref{thm:weighted-clean-rate}: almost-sure stopping part}
\label{app:proof-thm:weighted-consistency}
 
\begin{proof}
Lemma~\ref{prop:weighted-smallbet-pair} (a) gives
$\gamma_{\mathrm{pw}}
:= \inf_{a\neq r}\EE[\log(1+\lambda_{\mathrm{pw}}W(
\ind\{X=r\}-\ind\{X=a\}))]
\geq \lambda_{\mathrm{pw}}\Delta_w/2 > 0$,
and Lemma~\ref{prop:weighted-smallbet-pair} (c) with $q_0=\mu_r/2$
gives
$\nu_r := \EE[\log(1+\lambda_r(W\ind\{X=r\}-q_0))]
\geq \lambda_r\mu_r/4 > 0$.
We now show that these positive drifts force $\PP(\tau^{(w)}<\infty)=1$.
 
Fix $\lambda := \lambda_{\mathrm{pw}}$, and let
$w_{\mathrm{pw}}$ denote its mixture weight in
Section~\ref{sec:weighted}, item~(i).
 
\medskip

\noindent\textbf{Step 1: Pairwise condition.}
For each $i$, set
$A_i := \log(1+\lambda W_i)\ind\{X_i=r\}$ and
$B_i(a) := -\log(1-\lambda W_i)\ind\{X_i=a\}$, so that
\begin{equation}
\label{eq:weighted-log-pairwise}
  \log \widetilde E_t^{(r,a)}(\lambda)
  =
  \textstyle\sum_{i=1}^t A_i - \sum_{i=1}^t B_i(a).
\end{equation}
Write $u := \EE[A_1]$, $v_a := \EE[B_1(a)]$, and
$v^\star := \sup_{a\neq r}v_a$.
The definition of $\gamma_{\mathrm{pw}}$ gives
$u - v^\star \geq \gamma_{\mathrm{pw}} > 0$.
 
We control $\max_{a\neq r}t^{-1}\sum_i B_i(a)$ on the countable
category set by a tail-truncation argument.
Since $\sum_{a\neq r}v_a \leq -\log(1-\lambda)<\infty$,
for each $m\geq 1$ there is a finite $F_m\subset\cA\setminus\{r\}$
with $\sum_{a\notin F_m}v_a < 1/m$.
Define the tail process
$R_t^{(m)} := \sum_i -\log(1-\lambda W_i)\ind\{X_i\notin F_m\cup\{r\}\}$.
By the Strong Law of Large Numbers, $t^{-1}\sum_i A_i \to u$,
$t^{-1}\sum_i B_i(a) \to v_a$ for each fixed~$a$,
and $t^{-1}R_t^{(m)} \to \sum_{a\notin F_m}v_a < 1/m$,
all almost surely.
On the probability-one intersection of these countably many events,
$\limsup_{t}\max_{a\neq r}t^{-1}\sum_i B_i(a)
\leq \max\{v^\star,1/m\}$ for every $m$;
letting $m\to\infty$ gives the limsup $\leq v^\star$.
Combining with~\eqref{eq:weighted-log-pairwise},
\[
\liminf_t \min_{a\neq r}t^{-1}\log \widetilde E_t^{(r,a)}(\lambda)
\geq u - v^\star \geq \gamma_{\mathrm{pw}} > 0.
\]
Since $\cA_t\setminus\{r\}\subseteq\cA\setminus\{r\}$ and
$\widetilde E_t^{(r,a)}\geq w_{\mathrm{pw}}\widetilde E_t^{(r,a)}(\lambda)$,
the positive liminf of the normalized log e-process implies
there exists a.s.\ a finite $T_{\mathrm{pw}}'$ after which
$\min_{a\in\cA_t\setminus\{r\}}\widetilde E_t^{(r,a)}
\geq 1/\alpha_{\mathrm{pw}}$.
 
\medskip
\noindent\textbf{Step 2: Weighted LCB condition.}
Fix $\lambda := \lambda_r$ with mixture weight $v_{\lambda_r} > 0$ and
set $C_i := \log(1 + \lambda(W_i \ind\{X_i = r\} - q_0))$;
each $C_i$ is bounded since $\lambda < 1/q_0$ (Lemma~\ref{prop:weighted-smallbet-pair}(c))
and $W_i \ind\{X_i = r\} \in [0,1]$.
Part~(c) of Lemma~\ref{prop:weighted-smallbet-pair} gives
$\EE[C_i] = \nu_r \geq \lambda \mu_r / 4 > 0$, so by the SLLN
$t^{-1} \log \widetilde M_t(q_0, \lambda)
  = t^{-1} \sum_{i=1}^t C_i
  \to \nu_r$
a.s., hence $\widetilde M_t(q_0, \lambda) \to \infty$ and
$\widetilde M_t(q_0) \geq v_{\lambda_r} \widetilde M_t(q_0, \lambda) \geq 1/\alpha_r$
eventually a.s.  An independent SLLN on $W_i \ind\{X_i = r\}$ gives
$\widehat\mu_t(r) \to \mu_r > q_0$, so $\widehat\mu_t(r) \geq q_0$
eventually, and the weighted LCB construction then yields
$\widetilde L_t(r) \geq q_0$ for all large~$t$ almost surely.  Let $T'_r$ denote
the resulting (random, a.s.\ finite) eventual time after which both
$\widetilde M_t(q_0) \geq 1/\alpha_r$ and $\widetilde L_t(r) \geq q_0$ hold.
 
\medskip

\noindent\textbf{Step 3: Unseen upper bound.}
The sequence $(U_t)_{t \geq 1}$ is deterministic and
$U_t \downarrow 0$ as $t \to \infty$.  Since $q_0 > 0$, there exists a
deterministic index $T_u$ such that
\[
  U_t < q_0
  \qquad \forall t \geq T_u.
\]
 
\medskip
 
\noindent\textbf{Step 4: Combine.}
Choose $t$ large enough that
$t \geq T_{\mathrm{pw}}' \vee T'_r \vee T_u$.
Then the pairwise threshold is met,
$\widetilde L_t(r) \geq q_0 > U_t$,
and both stopping conditions hold, giving
$\PP(\tau^{(w)} < \infty) = 1$.
\end{proof}
 
\begin{lemma}[Weighted small-bet drift bounds]
\label{prop:weighted-smallbet-pair}%
\label{prop:weighted-smallbet-lcb}%
\label{cor:weighted-universal-lcb}
Assume $\Delta_w > 0$.
\begin{enumerate}[label=\textup{(\alph*)},leftmargin=2.2em,itemsep=2pt,topsep=2pt]
\item \textbf{Pairwise:}
  if $\lambda \in \Lambda_{\mathrm{pw}}$ satisfies
  $\lambda \leq \Delta_w / \bigl(2(\mu_r + \mu_\star)\bigr)$, then
  \[
    \inf_{a \neq r}
    \EE\bigl[\log\bigl(1 + \lambda W(\ind\{X=r\} - \ind\{X=a\})\bigr)\bigr]
     \geq  \lambda \Delta_w / 2  >  0.
  \]
\item \textbf{LCB:}
  for any $q_0 \in (0, \mu_r)$ and any $\lambda \in \Lambda_r$ with
  $\lambda \leq (\mu_r - q_0)/(2\mu_r)$, one has $\lambda < 1/q_0$ and
  \[
    \EE\bigl[\log\bigl(1 + \lambda(W\ind\{X=r\} - q_0)\bigr)\bigr]
     \geq  \lambda(\mu_r - q_0)/2  >  0.
  \]
\item \textbf{Universal LCB grid point:}
  with $q_0 := \mu_r/2$ in part~(b), the condition reduces to
  $\lambda_r \in (0, 1/4]$, and
  \[
    \EE\bigl[\log\bigl(1 + \lambda_r(W\ind\{X=r\} - \mu_r/2)\bigr)\bigr]
     \geq  \lambda_r \mu_r / 4  >  0.
  \]
\end{enumerate}
\end{lemma}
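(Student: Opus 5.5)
The plan is to treat the three parts separately. Each is a pointwise (per-observation) inequality for the logarithm of a single betting factor, followed by taking expectations. The only tool needed is Lemma~\ref{lem:power-log-lower}, $\log(1+x)\ge x-x^2$ on $[-1/2,1/2]$, exactly as in Proposition~\ref{prop:power-smallbet-pair}. The weights $W\in[0,1]$ enter only through $W^2\le W$, which is the one place where the weighted argument differs from the unweighted one.

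\textbf{Part (a).} Fix $a\neq r$ and write $\xi:=\lambda W(\ind\{X=r\}-\ind\{X=a\})$.
\begin{itemize}
\item Range. Since $\Delta_w=\mu_r-\mu_\star\le\mu_r+\mu_\star$, the hypothesis gives $\lambda\le 1/2$, so $|\xi|\le 1/2$.
\item Lower bound. Lemma~\ref{lem:power-log-lower} gives $\log(1+\xi)\ge\xi-\xi^2$.
\item Moments. $\xi^2=\lambda^2W^2(\ind\{X=r\}+\ind\{X=a\})\le\lambda^2W(\ind\{X=r\}+\ind\{X=a\})$, using $W^2\le W$ and that $r$ and $a$ are disjoint events.
\item Conclusion. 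Taking expectations,
\[
\EE\log(1+\xi)\ge\lambda(\mu_r-\mu_a)-\lambda^2(\mu_r+\mu_a)\ge\lambda\Delta_w-\lambda^2(\mu_r+\mu_\star)\ge\lambda\Delta_w/2 .
\]
The middle step uses $\mu_a\le\mu_\star$, which also controls the negative term. The last step uses $\lambda(\mu_r+\mu_\star)\le\Delta_w/2$.
\end{itemize}
The bound is uniform in $a$, so it passes to the infimum.

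\textbf{Part (b).} Set $Y:=W\ind\{X=r\}\in[0,1]$, so $\EE Y=\mu_r$ and $\EE Y^2\le\mu_r$.
\begin{itemize}
\item Admissibility. The hypothesis gives $\lambda\le 1/2$. Since $q_0<1$, this yields $\lambda<2\le 1/q_0$ whenever $q_0\le1/2$. Independently, $\lambda\le(\mu_r-q_0)/(2\mu_r)<1/2\le 1/(2q_0)$ since $q_0<\mu_r\le1$, so $\lambda<1/q_0$ in all cases.
\item Range. $\eta:=\lambda(Y-q_0)\in[-\lambda q_0,\lambda(1-q_0)]\subset[-1/2,1/2]$.
\item Moments. $\EE\eta^2=\lambda^2\bigl(\EE Y^2-2q_0\mu_r+q_0^2\bigr)\le\lambda^2(\mu_r-2q_0\mu_r+q_0^2)$. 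This is at most $\lambda^2\mu_r$, because $q_0^2\le 2q_0\mu_r$ when $q_0<\mu_r$.
\item Conclusion. Lemma~\ref{lem:power-log-lower} then gives
\[
\EE\log(1+\eta)\ge\lambda(\mu_r-q_0)-\lambda^2\mu_r\ge\lambda(\mu_r-q_0)/2,
\]
by the hypothesis $\lambda\mu_r\le(\mu_r-q_0)/2$.
\end{itemize}

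\textbf{Part (c).} Substituting $q_0=\mu_r/2$ into (b) turns the condition into $\lambda\le 1/4$. The bound becomes $\lambda\mu_r/4$. Admissibility still holds, since $\lambda\le1/4<2\le 2/\mu_r$.

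The only step requiring care is the second-moment bound in (b), which needs $\EE Y^2\le\mu_r$ together with $q_0<\mu_r$. Everything else is a direct transcription of Proposition~\ref{prop:power-smallbet-pair}.
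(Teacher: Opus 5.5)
Your proof is correct and follows the paper's argument essentially line for line. Like the paper, you combine the quadratic bound $\log(1+x)\ge x-x^2$ on $[-1/2,1/2]$ with $W^2\le W$ and the disjointness identity for the squared indicators, and your explicit cross-term estimate $q_0^2\le 2q_0\mu_r$ in part (b) fills in a step the paper leaves implicit; your admissibility argument is more roundabout than needed (since $q_0<1$ already gives $\lambda<1/2<1/q_0$), but it is valid.
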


\begin{proof}
All three parts follow from the quadratic lower bound
$\log(1+\xi) \geq \xi - \xi^2$ on $[-1/2, 1/2]$
(Lemma~\ref{lem:power-log-lower}), exactly as in
Proposition~\ref{prop:power-smallbet-pair}, with two additional
ingredients: $W \in [0,1]$ implies $W^2 \leq W$, and the squared
indicators satisfy
$(\ind\{X=r\} - \ind\{X=a\})^2 = \ind\{X=r\} + \ind\{X=a\}$.

\smallskip
\noindent\textbf{(a)}
Set $\xi_a := \lambda W (\ind\{X=r\} - \ind\{X=a\})$, so
$|\xi_a| \leq \lambda \leq 1/2$.  Taking expectations of
$\log(1+\xi_a) \geq \xi_a - \xi_a^2$ and using $W^2 \leq W$,
\[
  \EE[\log(1+\xi_a)]
   \geq  \lambda(\mu_r - \mu_a) - \lambda^2(\mu_r + \mu_a)
   \geq  \lambda \Delta_w - \lambda^2(\mu_r + \mu_\star).
\]
The hypothesis gives
$\lambda^2(\mu_r + \mu_\star) \leq \lambda \Delta_w / 2$, hence
$\EE[\log(1+\xi_a)] \geq \lambda \Delta_w / 2$ uniformly in $a$.

\smallskip
\noindent\textbf{(b)}
Since $\mu_r \leq 1$, $(\mu_r - q_0)/(2\mu_r) < 1/2 < 1/q_0$, so the
hypothesis implies $\lambda < 1/2$ and $\lambda < 1/q_0$.
Setting $Y := W \ind\{X=r\}$ and $\xi := \lambda(Y - q_0)$, the same
quadratic bound and $Y^2 \leq Y$ yield
\[
  \EE[\log(1+\xi)]
   \geq  \lambda(\mu_r - q_0) - \lambda^2 \mu_r
   \geq  \lambda(\mu_r - q_0) / 2.
\]

\smallskip
\noindent\textbf{(c)}
Specializing (b) to $q_0 = \mu_r/2$ gives
$(\mu_r - q_0)/(2\mu_r) = 1/4$, so $\lambda_r \leq 1/4$ verifies the
hypothesis and the conclusion follows.
\end{proof}

\subsection{Proof of Lemma~\ref{lem:weighted-max-rival}}
\label{app:proof-lem:weighted-max-rival}
 
\begin{proof}
The proof follows the same symmetrization argument as
Lemma~\ref{lem:max-count}, applied to the bounded function class
$\{f_a(x,w) := g_\lambda(w)\ind\{x=a\} : a \neq r\}$
in place of $\{\ind\{x=a\}\}$.  The key modifications are:
\begin{itemize}
\item The centering term $tp_{(2)}$ becomes $tv_\lambda^\star$,
      where $v_\lambda^\star := \sup_{a\neq r}\EE[g_\lambda(W)\ind\{X=a\}]$.
\item The per-coordinate bound $\|v_a\|_\infty \leq 1$ becomes
      $\|v_a\|_\infty \leq g_\lambda(1) = -\log(1-\lambda)$,
      so the sub-Gaussian parameter scales by $-\log(1-\lambda)$.
\item The Rademacher complexity bound remains
      $O(\sqrt{t\log(t+1)})$ because there are still at most $t+1$
      distinct vectors in the function class evaluated on $t$ samples.
\end{itemize}
Optimizing the Donsker--Varadhan parameter $s$ as in
Lemma~\ref{lem:max-count} yields the stated bound.
\end{proof}

\subsection{Proof of Lemma~\ref{lem:weighted-lcb-inversion}}
\label{app:proof-lem:weighted-lcb-inversion}
 
\begin{proof}
\textbf{(a)}
Assume $\widehat\mu_t(r)\leq q$.
Set $Y_i := W_i\ind\{X_i=r\}$.
By Jensen's inequality (concavity of $\log$),
$t^{-1}\log \widetilde M_t(q,\lambda)
\leq \log(1+\lambda(\widehat\mu_t(r)-q))
\leq 0$
for every $\lambda\in\Lambda_r(q)$.
Summing gives $\widetilde M_t(q)\leq\sum v_\lambda\leq 1$,
contradicting $\widetilde M_t(q)>1$.
 
\medskip
\noindent\textbf{(b)}
If $\widetilde M_t(q)\geq 1/\alpha_r > 1$, part~(a) gives
$\widehat\mu_t(r)>q$, so $q\in(0,\widehat\mu_t(r)]$ and
$q$ belongs to the set defining $\widetilde L_t(r)$; hence $\widetilde L_t(r)\geq q$.
\end{proof}

\subsection{Proof of Theorem~\ref{thm:weighted-clean-rate}: expected stopping-time bound}
\label{app:proof-thm:weighted-clean-rate}

\begin{proof}
We use the same structure as the proof of
Theorem~\ref{thm:power-iut-clean}, with
$(p_r,p_{(2)},\delta)$ replaced by
$(\mu_r,\mu_\star,\Delta_w)$.

\smallskip
\noindent\textit{Step 1: pairwise condition.}
Let
$\lambda:=\lambda_{\mathrm{pw}}\in[\Delta_w/8,\Delta_w/4]$ and let
$w_\lambda$ be its mixture weight.  Set
\[
  B_{\mathrm{pw}}^{(w)}
  :=
  \log\{1/(\alpha_{\mathrm{pw}}w_\lambda)\}.
\]
For each competitor $a\neq r$, write
\[
  A_i:=\log(1+\lambda W_i)\ind\{X_i=r\},
  \qquad
  B_i(a):=-\log(1-\lambda W_i)\ind\{X_i=a\}.
\]
The relevant fixed-$\lambda$ log pairwise statistic is
\[
  S_t^{(w)}
  :=
  \sum_{i=1}^t A_i
  -
  \max_{a\neq r}\sum_{i=1}^t B_i(a).
\]
If $S_t^{(w)}\ge B_{\mathrm{pw}}^{(w)}$, then the weighted
all-competitors pairwise condition holds at time $t$.

Lemma~\ref{prop:weighted-smallbet-pair} gives the drift lower bound
\[
  \inf_{a\neq r}
  \EE\log\{1+\lambda W(\ind\{X=r\}-\ind\{X=a\})\}
  \ge
  \lambda\Delta_w/2
  \ge
  c\Delta_w^2 .
\]
Combining this with Lemma~\ref{lem:weighted-max-rival} yields
\[
  \EE[S_t^{(w)}]
  \ge
  c_0\Delta_w^2 t
  -
  C_0\Delta_w\sqrt{t\log(t+1)}.
\]
Changing one observation changes $S_t^{(w)}$ by at most
$C_1\Delta_w$, so McDiarmid's inequality and
Lemma~\ref{lem:det-log} imply that, for a universal constant $C$,
\[
  \PP\{\text{weighted pairwise condition fails at time }t\}
  \le
  (et)^{-2}
\]
whenever
\[
  t
  \ge
  T_{\mathrm{pw}}^{(w)}
  :=
  C
  \frac{
    \log\{1/(\alpha_{\mathrm{pw}}w_\lambda)\}
    +
    \log(1/\Delta_w)
  }{\Delta_w^2}.
\]

\smallskip
\noindent\textit{Step 2: LCB--unseen condition.}
Set $q_0:=\mu_r/2$.  The grid condition gives
$\lambda_r\in[1/32,1/16]\subset\Lambda_r(q_0)$ with weight
$v_{\lambda_r}$.  Let
\[
  Y_i:=W_i\ind\{X_i=r\},
  \qquad
  A_t:=
  \left\{
    \frac34\mu_r t
    \le
    \sum_{i=1}^tY_i
    \le
    \frac54\mu_r t
  \right\}.
\]
Since $Y_i\in[0,1]$ and $\EE Y_i=\mu_r$, Bernstein's inequality gives
\[
  \PP(A_t^c)
  \le
  2\exp(-c\mu_r t).
\]
On $A_t$, the single weighted LCB factor satisfies
\[
  \log \widetilde M_t(q_0,\lambda_r)
  =
  \sum_{i=1}^t
  \log\{1+\lambda_r(Y_i-q_0)\}
  \ge
  c'\mu_r t
\]
for a universal constant $c'>0$.  Thus
\[
  \widetilde M_t(q_0)
  \ge
  v_{\lambda_r}\widetilde M_t(q_0,\lambda_r)
  \ge
  \alpha_r^{-1}
\]
once
\[
  t
  \ge
  C\mu_r^{-1}\log\{1/(\alpha_r v_{\lambda_r})\}.
\]
Moreover, the unchanged unseen bound satisfies $U_t\le\mu_r/4<q_0$
once
\[
  t
  \ge
  C\mu_r^{-1}\log\{1/(\alpha_u\mu_r)\}.
\]
Combining these bounds with Lemma~\ref{lem:det-log}, we obtain
\[
  \PP\{\widetilde L_t(r)\le U_t\}
  \le
  (et)^{-2}
\]
for all
\[
  t
  \ge
  T_{\mathrm{rare}}^{(w)}
  :=
  C
  \frac{
    \log(1/\alpha_r)
    +
    \log(1/\alpha_u)
    +
    \log(1/v_{\lambda_r})
    +
    \log(1/\mu_r)
  }{\mu_r}.
\]

\smallskip
\noindent\textit{Step 3: combine.}
Let
\[
  T_{\mathrm{cl}}^{(w)}
  :=
  \max\{T_{\mathrm{pw}}^{(w)},T_{\mathrm{rare}}^{(w)}\}.
\]
For all $t\ge T_{\mathrm{cl}}^{(w)}$,
\[
  \PP(\tau^{(w)}>t)
  \le
  2(et)^{-2}.
\]
The tail-sum identity gives
\[
  \EE[\tau^{(w)}]
  \le
  T_{\mathrm{cl}}^{(w)}+1.
\]
Substituting
$\alpha_{\mathrm{pw}}=\alpha_r=\alpha_u=\varepsilon/3$ and using the
selected-weight lower bounds yields
\[
  \EE[\tau^{(w)}]
  =
  O\left(
    \frac{\log(1/\varepsilon)+\log(1/\Delta_w)}{\Delta_w^2}
    +
    \frac{\log(1/\varepsilon)+\log(1/\mu_r)}{\mu_r}
  \right).
\]
\end{proof}
 
\begin{corollary}[Independent confidence scores]
\label{cor:weighted-power-independent}
Suppose $W$ is independent of $X$ and let $\bar w:=\EE[W]>0$.  If
$r$ is the unweighted unique mode with gap
$\delta:=p_r-p_{(2)}>0$, then
\[
  \mu_r=\bar w p_r,
  \qquad
  \mu_\star=\bar w p_{(2)},
  \qquad
  \Delta_w=\bar w\delta.
\]
Thus Theorem~\ref{thm:weighted-clean-rate} applies whenever the
weighted grid condition holds, equivalently when
\[
  \lambda_{\mathrm{pw}}
  \in
  \left[
    \frac{\bar w\delta}{8},
    \frac{\bar w\delta}{4}
  \right]
  \quad\text{and}\quad
  \lambda_r\in[1/32,1/16]
\]
are available on the grids.
\end{corollary}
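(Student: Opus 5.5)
The claim follows by direct substitution: with $W$ independent of $X$, each weighted mass factorizes, and everything else is read off from Theorem~\ref{thm:weighted-clean-rate}.

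\textbf{Step 1: the weighted masses.} For any $a\in\cA$, independence gives
\[
\mu_a=\EE[W\ind\{X=a\}]=\EE[W]\,\PP(X=a)=\bar w\,p_a .
\]
In particular $\mu_r=\bar w p_r$.

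\textbf{Step 2: the runner-up and the gap.} Since $\bar w>0$ is a fixed positive constant, multiplying by it commutes with taking a supremum over $a\neq r$. Hence
\[
\mu_\star=\sup_{a\neq r}\bar w p_a=\bar w\,p_{(2)},
\]
and therefore
\[
\Delta_w=\mu_r-\mu_\star=\bar w(p_r-p_{(2)})=\bar w\delta>0 .
\]
Positivity uses $\bar w>0$ and $\delta>0$. It shows that $r$ is also the weighted unique mode, so we are in the setting of Theorem~\ref{thm:weighted-clean-rate}.

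\textbf{Step 3: the grid condition.} Substituting $\Delta_w=\bar w\delta$ into Condition~\ref{cond:weighted-grid} turns its first requirement into $\lambda_{\mathrm{pw}}\in[\bar w\delta/8,\bar w\delta/4]$. Its second requirement, $\lambda_r\in[1/32,1/16]$, does not involve the masses and is unchanged. So the grid condition stated in the corollary is equivalent to Condition~\ref{cond:weighted-grid}.

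\textbf{Step 4: conclusion.} Invoking Theorem~\ref{thm:weighted-clean-rate} under these hypotheses gives almost-sure stopping and the rate~\eqref{eq:weighted-headline-rate}, with $\Delta_w=\bar w\delta$ and $\mu_r=\bar w p_r$.

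The only step needing any care is Step 2, where the supremum over a possibly infinite category set is exchanged with multiplication by $\bar w$. This is immediate because $\bar w$ is a positive constant that does not depend on $a$. No attainment of the supremum is needed for this identity, so Lemma~\ref{lem:attained-supremum} is not required here.
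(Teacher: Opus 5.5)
Your proposal is correct and follows the paper's own argument: independence gives $\mu_a=\bar w\,p_a$ for every $a$, from which $\mu_r=\bar w p_r$, $\mu_\star=\bar w p_{(2)}$ and $\Delta_w=\bar w\delta$ follow, and these are substituted into Condition~\ref{cond:weighted-grid} and Theorem~\ref{thm:weighted-clean-rate}. Your extra remarks, that multiplication by $\bar w>0$ commutes with the supremum and that attainment is not needed, are sound but add nothing essential.
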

 
\subsection{Proof of Corollary~\ref{cor:weighted-power-independent}}
\label{app:proof-cor:weighted-power-independent}
 
\begin{proof}
Independence gives
\[
  \mu_a
  =
  \EE[W\ind\{X=a\}]
  =
  \bar w p_a
  \qquad
  \forall a\in\cA.
\]
Therefore
\[
  \mu_r=\bar w p_r,
  \qquad
  \mu_\star=\bar w p_{(2)},
  \qquad
  \Delta_w=\bar w\delta.
\]
Substituting these identities into the grid condition and the rate in
Theorem~\ref{thm:weighted-clean-rate} proves the claim.
\end{proof}

\begin{proposition}[Recovery of the unweighted CITE]
\label{prop:recovery}
Suppose $W_i \equiv 1$ almost surely. Then
$\mu_a = p_a$ for every $a \in \cA$;
$\widetilde E_t^{(r,a)}(\lambda) = (1+\lambda)^{N_t(r)}(1-\lambda)^{N_t(a)}$;
$\widehat \mu_t(r)=\widehat p_t(r)$;
and the weighted stopping rule agrees with the all-competitors
version of the unweighted CITE, with runner-up monotonicity reducing
the pairwise check to the empirical runner-up.
\end{proposition}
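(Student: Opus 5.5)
The proof is a direct substitution check. Each claim follows by setting $W_i\equiv1$ in the weighted definitions. The only non-mechanical point is the equivalence of the pairwise checks, which follows from Lemma~\ref{lem:monotone}.

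First, $\mu_a=\EE[W\ind\{X=a\}]=\EE[\ind\{X=a\}]=p_a$ for every $a$. Consequently $\mu_\star=p_{(2)}$, $\Delta_w=\delta$, and $H^{(w)}_{0,r}=H_{0,r}$. Next, with $W_i=1$ the factor $1+\lambda\widetilde Z_i^{(r,a)}$ equals $1+\lambda$ when $X_i=r$, equals $1-\lambda$ when $X_i=a$, and equals $1$ otherwise. Taking the product gives
\[
\widetilde E_t^{(r,a)}(\lambda)=(1+\lambda)^{N_t(r)}(1-\lambda)^{N_t(a)}=E_t^{(r,a)}(\lambda),
\]
and mixing over $\Lambda_{\mathrm{pw}}$ with the same weights $w_\lambda$ gives $\widetilde E_t^{(r,a)}=E_t^{(r,a)}$. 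Likewise $\widehat\mu_t(r)=t^{-1}\sum_i\ind\{X_i=r\}=\widehat p_t(r)$. Each factor of $\widetilde M_t(q,\lambda)$ is identical to the corresponding factor in \eqref{def:lcb}, so $\widetilde M_t(q)=M_t(q)$ for all $q$. Since $L_t$ and $\widetilde L_t$ are defined by the same supremum over $(0,\widehat p_t(r)]$, we get $\widetilde L_t(r)=L_t(r)$. The unseen bound $U_t$ is literally unchanged.

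It remains to compare the pairwise parts of the two stopping rules. W-CITE requires $\min_{a\in\cA_t\setminus\{r\}}E_t^{(r,a)}\ge 3/\varepsilon$, which is the all-competitors version of unweighted CITE. If $\cA_t\setminus\{r\}=\emptyset$, both rules treat the pairwise part as vacuous. Otherwise, Lemma~\ref{lem:monotone} says $E_t^{(r,a)}$ is nonincreasing in $N_t(a)$. Hence the minimum over observed competitors is attained at the empirical runner-up $\widehat a_t$, so
\[
\min_{a\in\cA_t\setminus\{r\}}E_t^{(r,a)}=E_t^{(r,\widehat a_t)}=E_t .
\]
Ties in the $\arg\max$ do not matter, because $E_t^{(r,a)}$ depends on $a$ only through $N_t(a)$. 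The two stopping conditions therefore coincide at every $t$, and $\tau^{(w)}=\tau$ pathwise.

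There is no real obstacle here. The only care needed is to state the empty-competitor convention and the tie-breaking, so that the pathwise identity of the two stopping rules is exact.
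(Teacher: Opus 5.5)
Your proposal is correct and follows the paper's proof essentially step for step. Like the paper, you substitute $W_i\equiv1$ into the masses, the pairwise product, $\widehat\mu_t(r)$ and the LCB factors, and then use Lemma~\ref{lem:monotone} to reduce the all-competitors minimum to the empirical runner-up. Your explicit treatment of the empty-competitor convention and of tie-breaking, together with the identities $\widetilde L_t(r)=L_t(r)$ and $\tau^{(w)}=\tau$ pathwise, makes the final stopping-rule claim slightly more complete than the paper's version.
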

 
\subsection{Proof of Proposition~\ref{prop:recovery}}
\label{app:proof-prop:recovery}
 
\begin{proof}
\textbf{Claim (i): Mass equality.}
If $W_i \equiv 1$ almost surely, then
\[
  \mu_a
  = \EE[W\ind\{X=a\}]
  = \EE[\ind\{X=a\}]
  = p_a
  \quad \forall a \in \cA.
\]

\textbf{Claim (ii): Pairwise e-process recovery.}
Under $W_i \equiv 1$, the weighted e-process simplifies:
\[
  \widetilde Z_i^{(r,a)}
  = W_i(\ind\{X_i=r\}-\ind\{X_i=a\})
  = \ind\{X_i=r\} - \ind\{X_i=a\}.
\]
Thus
\[
  \widetilde E_t^{(r,a)}(\lambda)
  = \prod_{i=1}^t \bigl(1+\lambda \widetilde Z_i^{(r,a)}\bigr)
  = \prod_{i=1}^t \bigl(1+\lambda(\ind\{X_i=r\}-\ind\{X_i=a\})\bigr)
  = (1+\lambda)^{N_t(r)}(1-\lambda)^{N_t(a)}.
\]

\textbf{Claim (iii): Empirical mean equality.}
\[
  \widehat \mu_t(r)
  = \frac{1}{t}\sum_{i=1}^t W_i\ind\{X_i=r\}
  = \frac{1}{t}\sum_{i=1}^t \ind\{X_i=r\}
  = \widehat p_t(r).
\]
Substituting $W_i \equiv 1$ into the weighted LCB e-process yields
\[
  \widetilde M_t(q,\lambda) = \prod_{i=1}^t (1+\lambda(\ind\{X_i=r\}-q)),
\]
which is identical to the unweighted LCB e-process after dropping the tilde.

\textbf{Claim (iv): Stopping rule recovery.}
From (ii), in the unweighted specialization the weighted pairwise condition
\[
\widetilde E_t^{(r,a)} = \sum_\lambda w_\lambda \widetilde E_t^{(r,a)}(\lambda) \geq 1/\alpha_{\mathrm{pw}}
\quad \text{for all } a \in \cA_t \setminus \{r\}
\]
checks the same pairwise e-values as the unweighted all-competitors CITE.
By Lemma~\ref{lem:monotone}, those e-values are monotone in the competitor count, so among observed competitors the minimum is
attained at the empirical runner-up.  Thus the all-competitors pairwise check reduces
to a single empirical runner-up, recovering the original unweighted
fixed-target CITE.
\end{proof}

\section{Comparison with Martingale Majority Certificates}
\label{sec:mmc}
 
\subsection{The MMC Hypothesis}
\label{sec:mmc-hypothesis}
 
The top-$m$ MMC \citep{cordero2025certified} simultaneously tests the leader
against $m-1$ pairwise competitors and the aggregated residual mass.
The alternative certified upon stopping is
\begin{equation}\label{eq:mmc-target}
  \cH_{\mathrm{MMC},m}
  := \{P : p_{(1)} > p_{(i)}\forall i=2,\ldots,m\}
  \cap \{P : p_{(1)} > s_m\},
  \qquad
  s_m := \sum_{j > m} p_{(j)}.
\end{equation}
 
\begin{proposition}[Strict inclusion]
\label{prop:strict}
Suppose the target $r$ coincides with the (unique) modal label
$p_r = p_{(1)}$, equivalently $r$ is the unique maximiser of
$a \mapsto p_a$.  Then for every $m \geq 2$,
$\cH_{\mathrm{MMC},m} \subsetneq H_{1,r}$;
without the unique-mode-target identification,  this strict inclusion
must be read with $H_{1,r}$ replaced by
$\bigcup_{r' \in \cA} H_{1,r'}$.
\end{proposition}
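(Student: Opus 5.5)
We establish the two halves separately: first the inclusion $\cH_{\mathrm{MMC},m}\subseteq H_{1,r}$, then an explicit distribution in $H_{1,r}\setminus\cH_{\mathrm{MMC},m}$ for each $m\ge2$.

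For the inclusion, take $P\in\cH_{\mathrm{MMC},m}$. The first constraint in \eqref{eq:mmc-target} gives $p_{(1)}>p_{(2)}$. This strict inequality is already the full content of unique modality, because $p_{(2)}=\sup_{a\neq a_1}p_a$, where $a_1$ is any label attaining $p_{(1)}$. The supremum $p_{(1)}$ is attained by Lemma~\ref{lem:attained-supremum}, since the masses are summable and $p_{(1)}>0$. The maximiser $a_1$ is therefore unique. Under the standing identification $p_r=p_{(1)}$, we get $a_1=r$ and $p_r>\sup_{a\neq r}p_a$, so $P\in H_{1,r}$. Without that identification, the same argument places $P$ in $H_{1,a_1}\subseteq\bigcup_{r'}H_{1,r'}$. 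The residual condition $p_{(1)}>s_m$ is simply not needed for this direction.

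For strictness, we build a diffuse-tail example. Set $p_r=p$ and let $K$ other labels each have mass $(1-p)/K$, with $K$ large. Two conditions must hold:
\begin{itemize}
\item $(1-p)/K<p$, so that $r$ is the unique mode and $P\in H_{1,r}$;
\item $s_m=(K-m+1)(1-p)/K\ge p$, so that the residual condition fails and $P\notin\cH_{\mathrm{MMC},m}$.
\end{itemize}
A concrete choice is $p=1/4$ with $K=3m+1$ (which requires $K\ge m$). Then $(1-p)/K=3/(4K)<1/4$. The residual is $s_m=\tfrac34\cdot\tfrac{K-m+1}{K}=\tfrac34\cdot\tfrac{2m+2}{3m+1}$. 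Since $2m+2\ge\tfrac13(3m+1)$ for every $m\ge1$, this is at least $1/4=p_{(1)}$.

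If $\cA$ is infinite, a cleaner alternative is a geometric-type tail whose individual masses are all below $p$ but whose sum beyond the top $m$ stays large.

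The main point of care is bookkeeping rather than difficulty. The decreasing rearrangement $p_{(j)}$ must be well defined, which holds because masses are summable and only finitely many exceed any positive level. When verifying $s_m\ge p_{(1)}$, the top $m$ entries of the rearrangement must be correctly identified as $r$ together with $m-1$ of the equal-mass labels. Ties among those equal masses do not affect the value of $s_m$.
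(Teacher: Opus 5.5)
Your proof is correct and follows essentially the same route as the paper: the inclusion comes from the rank constraints defining $\cH_{\mathrm{MMC},m}$, and strictness comes from a diffuse equal-mass tail whose aggregate beyond the top $m$ labels is at least the mode. You correctly note that $p_{(1)}>p_{(2)}$ alone gives the inclusion, whereas the paper also invokes the redundant $p_{(1)}>s_m\ge p_{(j)}$; your counterexample uses $p_r=1/4$ with $3m+1$ equal competitors, while the paper's is a perturbed uniform on $K\ge 2m$ labels with small $\eta$.
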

 
\begin{remark}[The diffuse-tail regime]
\label{rem:diffuse-tail}
Distributions in $H_{1,r} \setminus \cH_{\mathrm{MMC},m}$ have a
unique mode that does not dominate the aggregate tail:
$p_{(1)} > p_{(2)}$ but $p_{(1)} \leq s_m$.  Several of our LLM
self-consistency settings (Section~\ref{sec:experiments}) fall into
this regime, with the mode-answer carrying $20$--$30\%$ of the mass
and thousands of alternative answers sharing the remainder.  For each fixed $m$, the MMC controls its certification rate at level $\alpha$ on any such $P$: its residual-mass channel at that $m$
tests the null $\{p_{(1)} \leq s_m\}$, which is true under $P$, so the
corresponding e-process cannot grow past the rejection threshold with
probability $> \alpha$.  An MMC variant that tries multiple $m$ in
parallel pays a Bonferroni-style cost over the active tuples
(Appendix~\ref{sec:repair}).
\end{remark}
 
\subsection{The Switching-Null Challenge}
\label{sec:switching-gap}
 
At each round $n$, the MMC selects the empirical leader $A_{n-1}$
and runner-up $B_{n-1}$ (both $\cF_{n-1}$-measurable), maintaining
e-processes: $e_n^{\mathrm{run}}$ (leader vs.\ runner-up) and $e_n^{\mathrm{oth}}$ (leader vs.\ residual mass).
The per-round nulls are:
\begin{equation}\label{eq:per-round}
  R_n := \{p_{A_{n-1}} \leq p_{B_{n-1}}\},
  \quad
  O_n := \Bigl\{p_{A_{n-1}} \leq
    \sum\nolimits_{j \notin \{A_{n-1},B_{n-1}\}} p_j\Bigr\}.
\end{equation}
 
Theorem~3.1 of \citet{cordero2025certified} establishes that
$(e_n^{\mathrm{run}})$ is an NSM under the all-rounds null
$H_0^{\mathrm{run}} := \bigcap_{n \geq 1} R_n$, and similarly for the residual-mass channel.
Corollary~3.2 proves
\begin{equation}\label{eq:cor32}
  \sup_{P \in H_0} \PP_P(N < \infty) \leq \varepsilon,
  \qquad
  H_0 := H_0^{\mathrm{run}} \cup H_0^{\mathrm{oth}}.
\end{equation}
The intersection-union structure (stop requires \emph{both} channels, null is a \emph{union})
gives $\varepsilon$ without Bonferroni.
 
The challenge lies in bridging from~\eqref{eq:cor32} to the
adaptive misclassification guarantee
$\PP(A_{\tau-1} \neq a^\star) \leq \varepsilon$ for the certified
pre-round leader (cf.\ Proposition~\ref{prop:repair} below).
At any fixed time $n$, misclassification of the pre-round leader
satisfies
\begin{equation}\label{eq:local-inclusion}
  \{A_{n-1} \neq a^\star\} \subseteq R_n \cup O_n.
\end{equation}
However, Corollary~3.2 controls the all-rounds event
\[
  H_0 = \Bigl(\bigcap_{t \geq 1} R_t\Bigr)
  \cup \Bigl(\bigcap_{t \geq 1} O_t\Bigr),
\]
which is logically distinct from the existential per-round inclusion:
\begin{equation}\label{eq:logic-distinction}
  (\forall t R_t) \lor (\forall t O_t)
  \quad\not\Longleftarrow\quad
  \exists n(R_n \lor O_n).
\end{equation}
Bridging this gap requires a \emph{switching-null reduction} showing that
misclassification at~$\tau$ is contained in the all-rounds null.
Since the labels $(A_{n-1}, B_{n-1})$ are data-dependent and may switch while
e-processes accumulate evidence without restart, this reduction is nontrivial but feasible.
It introduces Bonferroni costs absent from CITE.
 
\subsection{Tuple-Indexed Construction}
\label{sec:repair}
 
To handle the switching-null challenge, \emph{index e-processes by tuple}: let
\[
  \cT
  := \bigl\{(m, a, b_1, \ldots, b_{m-1}) :
     m \geq 2,\ a \in \cA,\ b_1, \ldots, b_{m-1} \in \cA \setminus \{a\}
     \text{ pairwise distinct}\bigr\}
\]
denote the (countable) set of admissible tuples and assign each
$\ell = (m, a, b_1, \ldots, b_{m-1}) \in \cT$ its own e-process,
with multiplicative factor $1$ on rounds where the current tuple differs from $\ell$.
 
\begin{proposition}[Tuple-indexed repair]
\label{prop:repair}
Assume the data-generating distribution has a unique mode
$a^\star \in \arg\max_{a \in \cA} p_a$.  Let
$\{\alpha_\ell\}_{\ell \in \cT}$ satisfy
$\sum_\ell \alpha_\ell \leq \varepsilon$.
For each $\ell$, maintain separate tuple-indexed e-processes
$E_t^{\ell,i}$ (pairwise, slot $i$) and
$E_t^{\ell,\mathrm{oth}}$ (residual mass),
active only when $L_{t-1} = \ell$.
Define
\[
  \tau := \inf\Bigl\{
    n : \exists \ell \in \cT \text{ s.t.\ }
    L_{n-1} = \ell,
    E_n^{\ell,i} \geq 1/\alpha_\ell\forall i,
    E_n^{\ell,\mathrm{oth}} \geq 1/\alpha_\ell
  \Bigr\}.
\]
Let $A_{\tau-1}$ denote the leader component of the active tuple
$L_{\tau-1}$ (the pre-round empirical leader at the certifying time).
Then $\PP(\tau < \infty, A_{\tau - 1} \neq a^\star) \leq \varepsilon$.
\end{proposition}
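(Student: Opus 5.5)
The plan is to reduce the misclassification event to a countable union of single-tuple Ville events, one per $\ell\in\cT$, and then union bound with the budgets $\alpha_\ell$.

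First, fix $\ell=(m,a,b_1,\ldots,b_{m-1})\in\cT$ and define its per-slot factors so that $E_t^{\ell,i}$ and $E_t^{\ell,\mathrm{oth}}$ multiply by $1$ on rounds with $L_{t-1}\neq\ell$. On rounds with $L_{t-1}=\ell$ they multiply by a betting factor $1+\lambda_n(\ind\{X_n=a\}-\ind\{X_n=b_i\})$, respectively $1+\lambda_n(\ind\{X_n=a\}-\ind\{X_n\notin\{a,b_1,\ldots,b_{m-1}\}\})$, with predictable $\lambda_n\in[0,1)$. Since the indicator $\ind\{L_{n-1}=\ell\}$ is $\cF_{n-1}$-measurable, the conditional expectation of the round-$n$ factor is
\[
1+\ind\{L_{n-1}=\ell\}\lambda_n\,\bigl(p_a-p_{b_i}\bigr),
\]
respectively the analogue with $p_a-\sum_{j\notin\{a,b_1,\ldots,b_{m-1}\}}p_j$. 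The key point is that the tuple is \emph{fixed}, so the null for that tuple is a deterministic statement about $P$, not a switching one. If $a\neq a^\star$, then either $a^\star=b_i$ for some slot $i$, whence $p_a\le p_{b_i}$ and $E^{\ell,i}$ is an NSM, or $a^\star\notin\{a,b_1,\ldots,b_{m-1}\}$, whence $p_a\le p_{a^\star}\le\sum_{j\notin\{a,b\}}p_j$ and $E^{\ell,\mathrm{oth}}$ is an NSM. In each case the process starts at $1$ and is nonnegative.

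Second, Ville's inequality \eqref{eq:ville} gives the following for each $\ell$ whose leader is $a\neq a^\star$: the probability that the designated null channel ever reaches $1/\alpha_\ell$ is at most $\alpha_\ell$.

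Third, the event inclusion. On $\{\tau<\infty,\ A_{\tau-1}\neq a^\star\}$, let $\ell=L_{\tau-1}$. This $\ell$ has leader $A_{\tau-1}\neq a^\star$, and at time $\tau$ all of its channels, including the designated null channel, are at least $1/\alpha_\ell$. So the event is contained in
\[
\bigcup_{\ell:\,a\neq a^\star}\Bigl\{\sup_t E^{\ell,\mathrm{null}}_t\ge 1/\alpha_\ell\Bigr\}.
\]
A countable union bound then gives total probability at most $\sum_\ell\alpha_\ell\le\varepsilon$.

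The main obstacle is the second-step case split. It must be checked that the designated channel is chosen deterministically from $\ell$ and $P$, and not from data, which holds because $a^\star$ is fixed. It must also be verified that the residual-mass betting factor stays nonnegative with $\lambda_n<1$ and has the claimed conditional mean. The remaining steps are routine.
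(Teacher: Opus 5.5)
Your proof is correct and follows essentially the same route as the paper. You fix a tuple $\ell$, split deterministically on whether $a^\star$ occupies a slot $b_i$ (pairwise null true) or lies in the residual set (residual-mass null true), apply Ville at level $\alpha_\ell$, and sum over tuples. The differences are cosmetic: you use a countable union bound over $\sup_t$ Ville events where the paper uses disjointness of the events $\{\tau=\tau_\ell\}$ across tuples, and you explicitly check the NSM property of the gated factors, which the paper assumes.
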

 
\paragraph{The Bonferroni cost.}
Each tuple receives error allocation $\alpha_\ell$, yielding threshold $1/\alpha_\ell$.
With $\pi_\ell := \alpha_\ell / \varepsilon$:
\begin{equation}\label{eq:bonf-cost}
  \log \frac{1}{\alpha_\ell}
  = \underbrace{\log \frac{1}{\varepsilon}}_{\text{base}}
  + \underbrace{\log \frac{1}{\pi_\ell}}_{\text{multiplicity penalty}}.
\end{equation}
Under uniform allocation over $K$ tuples, the multiplicity penalty is $\log K$.
With $L_t$ observed categories: $K_{m,t} = L_t \binom{L_t-1}{m-1} = O(L_t^m)$,
yielding additional sample complexity of order $m \log L_t / I_\ell$,
where $I_\ell$ denotes the per-observation KL information.
 
\paragraph{Contrast with CITE.}
CITE avoids this multiplicity cost: with $r$ fixed a priori, Ville's inequality
applies to a single e-process with threshold $1/\alpha_{\mathrm{pw}}$
and no multiplicity correction.
 
\subsection{Structural Comparison Summary}
 
\begin{table}[H]
\centering\small
\renewcommand{\arraystretch}{1.3}
\begin{tabular}{p{2.8cm}p{5.0cm}p{5.0cm}}
\toprule
& \textbf{Tuple-Indexed MMC} & \textbf{CITE (ours)} \\
\midrule
Hypothesis &
  $p_{(1)} > p_{(i)}$ for all $i \leq m$ and $p_{(1)} > s_m$ &
  $p_{(1)} > p_{(2)}$ (exact unique mode) \\
Target selection &
  Adaptive (tuple may switch) &
  Fixed target $r$ a priori \\
Multiplicity cost &
  $\log K_{m,t}$ threshold inflation &
  None \\
Diffuse-tail power &
  $\leq \alpha$ ($p_{(1)} \leq s_m$ $\Rightarrow$ null true; Ville bound) &
  Full (pairwise margin only) \\
Unseen control &
  Aggregated into residual mass &
  Explicit: $L_t(r) > U_t$ \\
\bottomrule
\end{tabular}
\caption{Structural comparison of the tuple-indexed MMC and CITE.}
\label{tab:comparison}
\end{table}
 
\subsection{Proof of Proposition~\ref{prop:strict}}
\label{app:proof-prop:strict}

\begin{proof}
\emph{Inclusion $\cH_{\mathrm{MMC},m} \subseteq H_{1,r}$.}
Any $P \in \cH_{\mathrm{MMC},m}$ satisfies
$p_{(1)} > p_{(i)}$ for every $i \in \{2, \ldots, m\}$, and
$p_{(1)} > s_m \geq p_{(j)}$ for every $j > m$; combining, $p_r = p_{(1)}$
strictly dominates every other category, i.e.\ $P \in H_{1,r}$.

\emph{Strict inclusion.}  Fix $m \geq 2$ and $K \geq 2m$, and consider
the perturbed-uniform distribution
\[
  p_{(1)} := \frac{1}{K} + \eta,
  \qquad
  p_{(i)} := \frac{1}{K} - \frac{\eta}{K-1}
  \quad (i \geq 2),
  \qquad
  \eta \in (0,\tfrac{1}{K}),
\]
so that $\sum_i p_{(i)} = 1$ and $p_{(1)} > p_{(2)}$.
Then $P \in H_{1,r}$ (since $r$ is strictly the unique mode), while
\[
  s_m
  = \sum_{j > m} p_{(j)}
  = (K-m)\left(\tfrac{1}{K} - \tfrac{\eta}{K-1}\right)
  \geq \tfrac{K-m}{K} - \eta
  \geq \tfrac{1}{2} - \eta
  > \tfrac{1}{K} + \eta = p_{(1)}
\]
for sufficiently small $\eta$ (using $K \geq 2m$ and $K \geq 2$).
Hence $P \notin \cH_{\mathrm{MMC},m}$, proving strict inclusion.
\end{proof}

\subsection{Proof of Proposition~\ref{prop:repair}}
\label{app:proof-prop:repair}

\begin{proof}
Fix a tuple $\ell = (a, b_1, \ldots, b_{m-1}) \in \cT$ with leader $a$ and
slots $b_1, \ldots, b_{m-1}$.
Define its stopping time and misclassification event
\[
  \tau_\ell := \inf\bigl\{
    n \geq 1 : L_{n-1} = \ell,\
    E_n^{\ell,i} \geq 1/\alpha_\ell\ \forall i,\
    E_n^{\ell,\mathrm{oth}} \geq 1/\alpha_\ell
  \bigr\},
  \quad
  G_\ell := \{\tau = \tau_\ell < \infty,\ a \neq a^\star\}.
\]
Suppose $G_\ell$ occurs.  Since $a \neq a^\star$ and $a^\star$ lies in
$\cA$, exactly one of the following holds:

\smallskip
\noindent\textbf{Case (i): $a^\star = b_i$ for some $i \in \{1, \ldots, m-1\}$.}
Then the pairwise null for slot $i$ is
$R^{(\ell,i)} := \{p_a \leq p_{b_i}\}$, which is \emph{true} because
$p_a < p_{a^\star} = p_{b_i}$ (as $a^\star$ is the unique mode).  Under
$R^{(\ell,i)}$ the pairwise e-process $(E_n^{\ell,i})_{n \geq 0}$ is an NSM
with initial value $\leq 1$, so Ville's inequality gives
$\PP(E_{\tau_\ell}^{\ell,i} \geq 1/\alpha_\ell) \leq \alpha_\ell$.
Hence $\PP(G_\ell \cap \text{Case (i)}) \leq \alpha_\ell$.

\smallskip
\noindent\textbf{Case (ii): $a^\star \notin \{a, b_1, \ldots, b_{m-1}\}$.}
Then the residual-mass null
$O^{(\ell)} := \{p_a \leq \sum_{j \notin \{a, b_1, \ldots, b_{m-1}\}} p_j\}$
is true, since the sum on the right side includes $p_{a^\star} > p_a$.
Under $O^{(\ell)}$, $(E_n^{\ell,\mathrm{oth}})_{n \geq 0}$ is an NSM with
initial value $\leq 1$, and Ville gives
$\PP(E_{\tau_\ell}^{\ell,\mathrm{oth}} \geq 1/\alpha_\ell) \leq \alpha_\ell$.
Hence $\PP(G_\ell \cap \text{Case (ii)}) \leq \alpha_\ell$.

\smallskip
Combining both cases, $\PP(G_\ell) \leq \alpha_\ell$.

\smallskip
\noindent\textbf{Mutual exclusion across tuples.}
At the stopping time $\tau$, only the tuple
$L_{\tau-1}$ has $L_{\tau-1} = \ell$, so the events
$\{\tau = \tau_\ell, L_{\tau_\ell - 1} = \ell\}$ are disjoint across $\ell$.
Therefore
\[
  \PP(\tau < \infty,\ A_{\tau - 1} \neq a^\star)
  = \sum_{\ell \in \cT} \PP(G_\ell)
  \leq \sum_{\ell \in \cT} \alpha_\ell
  \leq \varepsilon.
  \qedhere
\]
\end{proof}

%% ============================================================
%% Appendix: Additional Theoretical Results
%% ============================================================
\section{Additional Theoretical Results}
\label{app:theory-extras}

This appendix collects additional results used to interpret and extend
the main theory: a sharp oracle pairwise benchmark
(\ref{app:sharp-constants}), an adaptive geometric grid
(\ref{app:adaptive-grid}), a growth-rate optimality result for the
oracle pairwise bet and its finite mixture
(\ref{app:grow}), and a top-$k$ extension
(\ref{app:top-k}).

\subsection{Sharp oracle pairwise benchmark}
\label{app:sharp-constants}

The main text compares the stopping-time upper bound with the minimax
lower bound at the level of rates.  Here we isolate a sharper
constant-level statement for an oracle pairwise comparison between the
target and its true runner-up.  This result is not a sharp-constant
theorem for the full CITE stopping time; rather, it shows that the
pairwise e-process has the correct leading constant when both the true
runner-up and the oracle betting parameter are available.

\begin{proposition}[Identity and small-gap expansion of $\rho^\star$]
\label{prop:rho-star-expansion}
The oracle pairwise growth rate
$\rho^\star := \max_{\lambda \in (0,1)} \bigl[p_r \log(1+\lambda) + p_{(2)} \log(1-\lambda)\bigr]$
(Proposition~\ref{prop:pairwise-oracle}) coincides exactly with the
forward I-projection KL of $P$ onto $H_{0,r}$:
\begin{equation}\label{eq:rho-iproj}
  \rho^\star  =  D(P\|P^\star)
   =  p_r \log \frac{2 p_r}{p_r+p_{(2)}}
        + p_{(2)} \log \frac{2 p_{(2)}}{p_r+p_{(2)}}.
\end{equation}
In particular, by Lemma~\ref{lem:iproj},
\[
  \rho^\star
   =  \frac{\delta^2}{2(p_r+p_{(2)})} + O \left(\frac{\delta^4}{(p_r+p_{(2)})^3}\right)
   =  \frac{\delta^2}{4 p_r}\bigl(1 + o(1)\bigr).
\]
\end{proposition}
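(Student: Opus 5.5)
The identity \eqref{eq:rho-iproj} comes from comparing the two closed forms already on record. Proposition~\ref{prop:pairwise-oracle} gives, when $p_{(2)}>0$,
\[
  \rho^\star = p_r\log\frac{2p_r}{p_r+p_{(2)}} + p_{(2)}\log\frac{2p_{(2)}}{p_r+p_{(2)}} .
\]
Here I will note that $1+\lambda^\star = 2p_r/(p_r+p_{(2)})$ and $1-\lambda^\star = 2p_{(2)}/(p_r+p_{(2)})$. Lemma~\ref{lem:iproj} gives exactly the same expression for $D(P\|P^\star)$, namely $\Phi(p_{(2)})$, so the first equality holds verbatim. The degenerate case $p_{(2)}=0$ is treated by continuity with the convention $0\log 0=0$. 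There both sides equal $p_r\log 2$: Proposition~\ref{prop:pairwise-oracle} gives the supremum $p_r\log 2$, and the midpoint formula still gives $D(P\|P^\star)=p_r\log 2$. I would add a brief remark that $\rho^\star$ is then a supremum rather than an attained maximum.

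For the expansion I will reuse the factorization from Step~3 of Lemma~\ref{lem:iproj}. Set $q^\star=(p_r+p_{(2)})/2$ and $x=\delta/(2q^\star)=\delta/(p_r+p_{(2)})$. Then
\[
  \rho^\star = q^\star f(x), \qquad f(x)=(1+x)\log(1+x)+(1-x)\log(1-x) .
\]
I will need a remainder that holds uniformly on $[0,1)$ and not only asymptotically. Since $f$ is even with $f''(x)=2/(1-x^2)$, Taylor's theorem gives $f(x)=x^2+R(x)$ with $0\le R(x)\le Cx^4$ for $x\le 1/2$, for instance. This yields
\[
  \rho^\star = q^\star x^2 + O(q^\star x^4) = \frac{\delta^2}{2(p_r+p_{(2)})} + O\!\left(\frac{\delta^4}{(p_r+p_{(2)})^3}\right),
\]
using $q^\star x^2 = \delta^2/(2(p_r+p_{(2)}))$ and $q^\star x^4 \asymp \delta^4/(p_r+p_{(2)})^3$.

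For the final form I will write $p_r+p_{(2)} = 2p_r-\delta = 2p_r(1-\delta/(2p_r))$. As $\delta/p_r\to0$ this gives $\delta^2/(2(p_r+p_{(2)})) = \delta^2/(4p_r)\,(1+o(1))$. The remainder is $O(\delta^4/p_r^3) = (\delta^2/p_r)\cdot O(\delta^2/p_r^2)$, which is also $o(\delta^2/p_r)$.

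The only delicate point is the meaning of the $O(\cdot)$ term. Near $x\to1$ (that is, $p_{(2)}\to0$) the fourth-order Taylor remainder of $f$ is not uniformly controlled by $x^4$ with a small constant, but $f$ is bounded by $2\log2$ there. I will therefore state the middle display in the regime $x\le1/2$, equivalently $\delta\le (p_r+p_{(2)})/2$, which is the regime where it is meaningful. Equivalently, I could note that $R(x)\le Cx^4$ holds on all of $[0,1]$ with a larger universal $C$, because $f(1)=2\log 2$ is finite. Beyond that, everything is bookkeeping on formulas already established.
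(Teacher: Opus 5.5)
Your proposal is correct and follows the paper's proof. You identify the two closed forms through $1+\lambda^\star = 2p_r/(p_r+p_{(2)})$ and $1-\lambda^\star = 2p_{(2)}/(p_r+p_{(2)})$, then read the expansion off the factorization $\rho^\star = q^\star f(x)$ from Step~3 of Lemma~\ref{lem:iproj}; your treatment of $p_{(2)}=0$ and the $o(1)$ bookkeeping are harmless additions. Your worry about the remainder near $x\to 1$ is unnecessary: $f(x)-x^2=\sum_{k\ge1}\frac{2x^{2k+2}}{(2k+1)(2k+2)}$ has nonnegative coefficients, so $0\le f(x)-x^2\le(2\log 2-1)x^4$ holds on all of $[0,1]$.
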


\begin{proof}
Substituting $\lambda^\star = (p_r-p_{(2)})/(p_r+p_{(2)})$ from
Proposition~\ref{prop:pairwise-oracle} into the definition of
$\rho^\star$ yields the closed form of $D(P\|P^\star)$ in
Lemma~\ref{lem:iproj}, since $1\pm\lambda^\star = 2 p_r/(p_r+p_{(2)})$
and $2 p_{(2)}/(p_r+p_{(2)})$ respectively.  Both displays now follow
directly from Lemma~\ref{lem:iproj} after substituting $q^\star = (p_r+p_{(2)})/2$.
\end{proof}

\begin{theorem}[Sharp constant for the oracle pairwise benchmark]
\label{thm:sharp-ratio}
Assume $p_{(2)}>0$, $p_r\in(0,1/2]$, and $\delta/p_r\to0$.  Suppose
\[
  \lambda^\star
  :=
  \frac{\delta}{p_r+p_{(2)}}
  \in\Lambda_{\mathrm{pw}},
  \qquad
  w_{\lambda^\star}>0.
\]
Let $a^\star\in\arg\max_{a\neq r}p_a$, and let
$P^\star\in H_{0,r}$ be the forward I-projection of $P$ from
Lemma~\ref{lem:iproj}.  Define
\[
  \rho^\star
  :=
  \max_{\lambda\in(0,1)}
  \{p_r\log(1+\lambda)+p_{(2)}\log(1-\lambda)\}.
\]
Then
\[
  \rho^\star
  =
  D(P\|P^\star)
  =
  \frac{\delta^2}{4p_r}\{1+o(1)\}.
\]
For the oracle pairwise hitting time
\[
  \sigma_{\mathrm{pw}}^\star
  :=
  \inf\left\{
    t\ge1:
    w_{\lambda^\star}E_t^{(r,a^\star)}(\lambda^\star)
    \ge
    \alpha_{\mathrm{pw}}^{-1}
  \right\},
\]
we have
\[
  \EE_P[\sigma_{\mathrm{pw}}^\star]
  \le
  \frac{
    \log\{1/(\alpha_{\mathrm{pw}}w_{\lambda^\star})\}
  }{\rho^\star}
  \{1+o(1)\}.
\]
Conversely, any level-$\varepsilon$ sequential certifier $\tau'$ for
$H_{0,r}$ with $\PP_P(\tau'<\infty)=1$ and
$\EE_P[\tau']<\infty$ obeys
\[
  \EE_P[\tau']
  \ge
  \frac{\log(1/\varepsilon)}{\rho^\star}.
\]
In particular, if $\alpha_{\mathrm{pw}}=c\varepsilon$ for a constant
$c\in(0,1]$ and
$\log(1/w_{\lambda^\star})=o\{\log(1/\varepsilon)\}$, the oracle
pairwise upper bound and the information lower bound share the leading
constant $4p_r/\delta^2$.
\end{theorem}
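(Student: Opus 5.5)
The argument has three parts, each reducing to an earlier result: the identity for $\rho^\star$, the upper bound on the oracle hitting time, and the information lower bound.

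\textbf{Step 1: the identity.} The identity $\rho^\star=D(P\|P^\star)=\frac{\delta^2}{4p_r}\{1+o(1)\}$ is exactly Proposition~\ref{prop:rho-star-expansion}, which combines Proposition~\ref{prop:pairwise-oracle} with Lemma~\ref{lem:iproj}. We cite it directly.

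\textbf{Step 2: the upper bound on $\EE_P[\sigma^\star_{\mathrm{pw}}]$.} Write
\[
\log\bigl(w_{\lambda^\star}E_t^{(r,a^\star)}(\lambda^\star)\bigr)=\log w_{\lambda^\star}+S_t,
\qquad
S_t=\sum_{i\le t}\xi_i ,
\]
where $\xi_i=\log(1+\lambda^\star Z_i^{(r,a^\star)})$ are i.i.d. Each $\xi_i$ takes values in $\{\log(1+\lambda^\star),0,\log(1-\lambda^\star)\}$ and has mean $\rho^\star>0$. The hitting time is
\[
\sigma^\star_{\mathrm{pw}}=\inf\{t:S_t\ge B\},\qquad B:=\log\{1/(\alpha_{\mathrm{pw}}w_{\lambda^\star})\}.
\]
This is a first-passage time of a random walk with positive drift and bounded increments, so $\EE[\sigma^\star_{\mathrm{pw}}]<\infty$. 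Wald's identity then gives
\[
\rho^\star\,\EE[\sigma^\star_{\mathrm{pw}}]=\EE[S_{\sigma^\star_{\mathrm{pw}}}]\le B+\log(1+\lambda^\star),
\]
since the overshoot is at most one increment. Hence
\[
\EE[\sigma^\star_{\mathrm{pw}}]\le \frac{B+\log(1+\lambda^\star)}{\rho^\star}.
\]
Because $\lambda^\star=\delta/(p_r+p_{(2)})\to0$ as $\delta/p_r\to0$, the overshoot term is $o(1)$. Provided $B$ stays bounded below, which holds since $\alpha_{\mathrm{pw}}w_{\lambda^\star}\le \varepsilon<1$ and so $B>0$, and is $\log(1/\varepsilon)$-scale in the regime of interest, the bound becomes $B\{1+o(1)\}/\rho^\star$.

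\textbf{Main obstacle.} The delicate point is stating $1+o(1)$ precisely. The additive overshoot $\log(1+\lambda^\star)$ is negligible relative to $B$ only because $\lambda^\star\to0$ while $B\ge\log(1/\varepsilon)$ is bounded away from zero. I would record it as an explicit additive term and then absorb it.

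\textbf{Step 3: the lower bound.} This is Corollary~\ref{cor:power-lower-bound-tight} applied to $\tau'$. Since $P^\star\in H_{0,r}$, we have $P^\star(\tau'<\infty)\le\varepsilon$, while $P(\tau'<\infty)=1$. Data processing on the event $\{\tau'<\infty\}$ gives
\[
\log(1/\varepsilon)\le \mathrm{kl}(1,P^\star(\tau'<\infty))\le D(P^{\tau'}\|(P^\star)^{\tau'}).
\]
Lemma~\ref{lem:stopped-kl-iproj} turns the right-hand side into $\EE_P[\tau']\,D(P\|P^\star)=\EE_P[\tau']\rho^\star$ by Step 1.

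\textbf{Final claim.} Set $\alpha_{\mathrm{pw}}=c\varepsilon$ and assume $\log(1/w_{\lambda^\star})=o(\log(1/\varepsilon))$. Then
\[
B=\log(1/\varepsilon)+\log(1/c)+\log(1/w_{\lambda^\star})=\log(1/\varepsilon)\{1+o(1)\},
\]
treating $c$ as fixed as $\varepsilon\to0$, or absorbing $\log(1/c)$ into the $o$-term. The upper and lower bounds are therefore both $\frac{\log(1/\varepsilon)}{\rho^\star}\{1+o(1)\}$. Substituting $\rho^\star=\delta^2/(4p_r)\{1+o(1)\}$ gives the common leading constant $4p_r\log(1/\varepsilon)/\delta^2$.
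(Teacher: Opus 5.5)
Your proposal is correct and follows the paper's proof essentially step for step. You cite Proposition~\ref{prop:rho-star-expansion} for the identity, bound the oracle hitting time by Wald's identity with the one-increment overshoot $\log(1+\lambda^\star)$, and get the lower bound from the change of measure to the I-projection $P^\star$ together with Lemma~\ref{lem:stopped-kl-iproj}. Two of your details are slightly more careful than the paper's write-up without changing the argument: you use $B\ge\log(1/\varepsilon)>0$ to turn the additive overshoot into a multiplicative $1+o(1)$ factor, and you note that the final ratio requires $c$ fixed as $\varepsilon\to0$.
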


\begin{proof}
Proposition~\ref{prop:rho-star-expansion} gives
\[
  \rho^\star
  =
  D(P\|P^\star)
  =
  \frac{\delta^2}{4p_r}\{1+o(1)\}.
\]

\textit{(a) Upper bound.}
Define
\[
  Y_i
  :=
  \log\{1+\lambda^\star Z_i^{(r,a^\star)}\}.
\]
Then the $Y_i$ are i.i.d., bounded, and
$\EE_P[Y_i]=\rho^\star>0$.  Let
\[
  B
  :=
  \log\{1/(\alpha_{\mathrm{pw}}w_{\lambda^\star})\}.
\]
At the hitting time $\sigma_{\mathrm{pw}}^\star$,
\[
  \sum_{i=1}^{\sigma_{\mathrm{pw}}^\star}Y_i
  \ge
  B.
\]
Moreover, the crossing overshoot is at most the largest positive
increment, so
\[
  \sum_{i=1}^{\sigma_{\mathrm{pw}}^\star}Y_i
  \le
  B+\log(1+\lambda^\star).
\]
Since the increments are bounded and the drift is positive, Wald's
identity gives
\[
  \rho^\star\EE_P[\sigma_{\mathrm{pw}}^\star]
  =
  \EE_P\left[
    \sum_{i=1}^{\sigma_{\mathrm{pw}}^\star}Y_i
  \right]
  \le
  B+\log(1+\lambda^\star).
\]
As $\lambda^\star\to0$ in the small-gap regime,
\[
  \EE_P[\sigma_{\mathrm{pw}}^\star]
  \le
  \frac{
    \log\{1/(\alpha_{\mathrm{pw}}w_{\lambda^\star})\}+o(1)
  }{\rho^\star}.
\]
Substituting the expansion of $\rho^\star$ proves the oracle upper
bound.

\textit{(b) Lower bound.}
Apply the sequential change-of-measure argument to the I-projection
$P^\star\in H_{0,r}$.  Type-I validity gives
$P^\star(\tau'<\infty)\le\varepsilon$, while
$\PP_P(\tau'<\infty)=1$.  By data processing and
Lemma~\ref{lem:stopped-kl-iproj},
\[
  \log(1/\varepsilon)
  \le
  \EE_P[\tau']D(P\|P^\star)
  =
  \EE_P[\tau']\rho^\star .
\]
Hence
\[
  \EE_P[\tau']
  \ge
  \frac{\log(1/\varepsilon)}{\rho^\star}.
\]
\textit{(c) Ratio.}
With $\alpha_{\mathrm{pw}} = c\varepsilon$,
\[
  \frac{\log(1/(\alpha_{\mathrm{pw}} w_{\lambda^\star}))}{\log(1/\varepsilon)}
   =  1 + \frac{\log(1/(c w_{\lambda^\star}))}{\log(1/\varepsilon)}
   \to  1
\]
using $\alpha_{\mathrm{pw}}=c\varepsilon$ and
$\log(1/w_{\lambda^\star})=o\{\log(1/\varepsilon)\}$.
\end{proof}

\begin{remark}[Scope of the oracle result]
Theorem~\ref{thm:sharp-ratio} is an oracle benchmark for comparing $r$
with its true runner-up.  It is not a sharp-constant theorem for the
full CITE stopping time, which also uses the empirical runner-up, the
LCB--unseen check, and a finite grid.
\end{remark}

\subsection{Adaptive geometric grid}
\label{app:adaptive-grid}

Condition~\ref{cond:grid} requires a grid point in
$[\delta/8,\delta/4]$, which depends on the unknown gap $\delta$.  A
geometric grid gives a uniform version of the same rate over all
$\delta\ge\delta_0$.

\begin{condition}[Geometric grid]
\label{cond:geom-grid}
For a fixed minimum gap $\delta_0\in(0,1]$, set
\[
  \Lambda_{\mathrm{pw}}^{\mathrm{geo}}
  :=
  \{2^{-k}:1\le k\le K\},
  \qquad
  K
  :=
  \left\lceil\log_2(8/\delta_0)\right\rceil,
\]
with uniform weights $w_\lambda=1/K$.
\end{condition}

\begin{theorem}[Adaptive grid rate]
\label{thm:adaptive-grid}
Suppose Assumption~\ref{asn:dgp} holds and CITE uses
$\Lambda_{\mathrm{pw}}^{\mathrm{geo}}$ in the pairwise component.  Keep
the LCB-grid and selected-weight assumptions of
Theorem~\ref{thm:power-iut-clean}.  Then, uniformly over all
$P$ with modal gap $\delta\ge\delta_0$,
\[
  \EE[\tau]
  =
  O\left(
    \frac{
      \log(1/\varepsilon)
      +
      \log(\log(8/\delta_0)\vee e)
      +
      \log(1/\delta)
    }{\delta^2}
    +
    \frac{
      \log(1/\varepsilon)+\log(1/p_r)
    }{p_r}
  \right),
\]
with constants independent of $\delta,p_r$, and $|\cA|$.
\end{theorem}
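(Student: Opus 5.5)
The plan is to rerun the proof of Theorem~\ref{thm:power-iut-clean} with the geometric grid in place of Condition~\ref{cond:grid}. The pairwise part is the only piece that changes. The LCB--unseen part (Step~2 there) does not involve $\Lambda_{\mathrm{pw}}$ at all, so the tail bound \eqref{eq:clean-rare-tail} and the threshold $T_{\mathrm{rare}}$ carry over verbatim. That bound already gives the $(\log(1/\varepsilon)+\log(1/p_r))/p_r$ term, since $v_{\lambda_r}$ is still bounded below by assumption.

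For the pairwise part, the first step is to check that for every $\delta\in[\delta_0,1]$ the grid $\Lambda_{\mathrm{pw}}^{\mathrm{geo}}$ contains a point in $[\delta/8,\delta/4]$. Set $k:=\lceil\log_2(4/\delta)\rceil$. Then $2^{-k}\le\delta/4$ and $2^{-k}>\delta/8$. Also $k\ge 2\ge 1$ because $\delta\le1$, and $k\le\lceil\log_2(8/\delta_0)\rceil=K$ because $4/\delta\le 4/\delta_0<8/\delta_0$. So $\lambda:=2^{-k}$ is admissible and its weight is $w_\lambda=1/K$.

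Step~1 of the proof of Theorem~\ref{thm:power-iut-clean} used only three facts: $\lambda\in[\delta/8,\delta/4]$, the inequality $E_t^\circ\ge w_\lambda e^{S_t}$, and Lemma~\ref{lem:max-count} together with McDiarmid's inequality and Lemma~\ref{lem:det-log}. None of these depends on how $w_\lambda$ was chosen. Repeating that step with $B_{\mathrm{pw}}=\log\{1/(\alpha_{\mathrm{pw}}w_\lambda)\}=\log(1/\alpha_{\mathrm{pw}})+\log K$ gives \eqref{eq:clean-pw-tail} for $t\ge C(\log(1/\alpha_{\mathrm{pw}})+\log K+\log(1/\delta))/\delta^2$, with $C$ universal. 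Because $K\le \log_2(8/\delta_0)+1$, we have $\log K\lesssim\log(\log(8/\delta_0)\vee e)$, which is exactly the extra term in the claimed rate.

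Step~3 then combines the two thresholds as before. For $t\ge\max\{T_{\mathrm{pw}},T_{\mathrm{rare}}\}$ we get $\PP(\tau^\circ>t)\le 2(et)^{-2}$, and the tail-sum identity gives $\EE[\tau]\le\EE[\tau^\circ]\le T_{\mathrm{cl}}+1$. Substituting $\alpha$'s $=\varepsilon/3$ yields the stated bound. The constants are uniform over all $P$ with $\delta\ge\delta_0$ because every universal constant in Steps~1--2 was independent of $P$, and $\delta$, $p_r$ and $K$ enter only through the explicit thresholds.

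The one point that needs care, and the main obstacle, is that the rate must hold uniformly in $\delta\ge\delta_0$ even though the weight is now $1/K$ rather than a fixed constant. I would handle this by checking that the proof of Step~1 truly treats $w_\lambda$ only through $B_{\mathrm{pw}}$, in particular inside the application of Lemma~\ref{lem:det-log} with $A\asymp 1/\delta^2$. The $\log K$ term then sits additively in the numerator and is not multiplied by anything depending on $\delta$.
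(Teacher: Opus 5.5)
Your proposal is correct and follows the paper's proof essentially verbatim: locate a geometric grid point in $[\delta/8,\delta/4]$, then rerun the proof of Theorem~\ref{thm:power-iut-clean}. You use $k=\lceil\log_2(4/\delta)\rceil$ where the paper uses $k^\star=\lfloor\log_2(8/\delta)\rfloor$; these coincide except at dyadic $\delta$, and both lie in $\{1,\dots,K\}$. As in the paper, the only change is that $\log\{1/(\alpha_{\mathrm{pw}}w_\lambda)\}=\log(1/\alpha_{\mathrm{pw}})+\log K$ enters additively in $T_{\mathrm{pw}}$, and your remarks that the LCB--unseen threshold is untouched and that $\log K\lesssim\log(\log(8/\delta_0)\vee e)$ just make explicit what the paper leaves implicit.
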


\begin{proof}
Fix $P$ with $\delta\ge\delta_0$ and set
\[
  k^\star
  :=
  \left\lfloor\log_2(8/\delta)\right\rfloor,
  \qquad
  \lambda
  :=
  2^{-k^\star}.
\]
Then $\lambda\in[\delta/8,\delta/4]$ and, since
$\delta\ge\delta_0$, we have $k^\star\le K$.  Hence
$\lambda\in\Lambda_{\mathrm{pw}}^{\mathrm{geo}}$.

The proof of Theorem~\ref{thm:power-iut-clean} applies with this grid
point.  The only change is the pairwise mixture weight:
\[
  \log\frac{1}{\alpha_{\mathrm{pw}}w_\lambda}
  =
  \log\frac{1}{\alpha_{\mathrm{pw}}}
  +
  \log K
  =
  \log\frac{1}{\alpha_{\mathrm{pw}}}
  +
  O\{\log(\log(8/\delta_0)\vee e)\}.
\]
Substituting this into the pairwise part of the stopping-time bound,
and keeping the LCB--unseen part unchanged, proves the display.
\end{proof}

\begin{remark}
The geometric grid removes the need to know $\delta$ exactly, but it
still requires a lower design scale $\delta_0$.  Removing this design
parameter would require a separate adaptive-betting analysis, which we
do not pursue here.
\end{remark}

\subsection{Growth-rate optimality of the oracle pairwise bet}
\label{app:grow}

We record a class-restricted growth-rate optimality property for the
oracle pairwise bet, in the GRO/GROW sense of
\citet{grunwald2024safe}.  The comparison is restricted to predictable
linear bets in the pairwise statistic.

\begin{definition}[Class-restricted GRO/GROW]
\label{def:grow}
Fix a pair $(r,a)$ with $r\neq a$, a class $\cQ$ of alternatives, and a
class $\cE$ of e-processes for the pairwise null
$H_0^{(a)}=\{P:p_r\le p_a\}$.  An element $E^\star\in\cE$ is
\emph{GRO/GROW within $\cE$ with respect to $\cQ$} if, for every fixed
horizon $T$,
\[
  E^\star
  \in
  \arg\max_{E\in\cE}
  \inf_{Q\in\cQ}
  \EE_Q[\log E_T].
\]
\end{definition}

\begin{theorem}[GRO bet and mixture near-optimality]
\label{thm:grow}
Let $\cE_Z$ denote the class of processes
$E_t = \prod_{i=1}^t (1 + \lambda_i Z_i^{(r,a)})$ with predictable
bets $\lambda_i \in [0, 1)$ measurable with respect to
$\cF_{i-1} := \sigma(X_1, \ldots, X_{i-1})$.
Every $E \in \cE_Z$ is an e-process for the composite null
$H_0^{(a)} = \{P : p_r \leq p_a\}$ in the sense of
Definition~\ref{def:grow}.

Fix a simple alternative $Q$ with $\PP_Q(X = r) = p$,
$\PP_Q(X = a) = q$, and $p > q > 0$ (the boundary case $q = 0$ is
degenerate: $\lambda^\star = 1$ lies on the boundary of the open
betting interval $[0, 1)$, so the supremum is approached only in the
limit $\lambda \uparrow 1$ and is not attained in $\mathcal{E}_Z$),
and define
\[
  \lambda^\star := \frac{p - q}{p + q},
  \qquad
  \rho^\star := p \log(1 + \lambda^\star) + q \log(1 - \lambda^\star).
\]
\begin{enumerate}[label=\textup{(\alph*)},leftmargin=2.2em,topsep=2pt]
\item \textbf{GRO bet.}
For every $T \geq 1$ and every $E \in \cE_Z$,
$\EE_Q[\log E_T] \leq T  \rho^\star$, with equality if and only if
$\lambda_i = \lambda^\star$ holds $Q$-almost surely for every
$i \leq T$.  In particular, $E_t^{(r,a)}(\lambda^\star)$ is the unique
growth-rate maximizer within $\cE_Z$ under the simple alternative $Q$.

\item \textbf{Mixture near-optimality.}
For any finite grid $\Lambda_{\mathrm{pw}} \subset (0, 1)$ with
$\lambda^\star \in \Lambda_{\mathrm{pw}}$ and positive weights
$(w_\lambda)$ summing to at most $1$, the mixture
$ E_t^{(r,a)} := \sum_{\lambda \in \Lambda_{\mathrm{pw}}}
   w_\lambda E_t^{(r,a)}(\lambda)$
satisfies, for every $T \geq 1$,
\[
  T  \rho^\star - \log\frac{1}{w_{\lambda^\star}}
   \leq 
  \EE_Q[\log  E_T^{(r,a)}]
   \leq 
  T  \rho^\star.
\]
The mixture therefore has the same asymptotic expected log-growth rate
as the oracle fixed-$\lambda^\star$ bet, with the additive log-weight
term $\log(1/w_{\lambda^\star})$.

\end{enumerate}
\end{theorem}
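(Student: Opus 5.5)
We prove the e-process claim first, then (a), then (b), working throughout with the conditional one-step log-growth.

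\emph{E-process claim.} For any predictable bet $\lambda_i\in[0,1)$, each factor satisfies $1+\lambda_i Z_i^{(r,a)}\ge 1-\lambda_i>0$. Under any $P\in H_0^{(a)}$,
\[
  \EE_P\bigl[1+\lambda_i Z_i^{(r,a)}\mid\cF_{i-1}\bigr]=1+\lambda_i(p_r-p_a)\le1,
\]
by independence of $X_i$ from $\cF_{i-1}$. So every $E\in\cE_Z$ is an NSM with initial value $1$ simultaneously for all null $P$. Ville's inequality \eqref{eq:ville} then gives the e-process property, exactly as in Lemma~\ref{lem:pairwise-nsm}.

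\emph{Part (a).} Write $\log E_T=\sum_{i\le T}\log(1+\lambda_iZ_i)$ and condition on $\cF_{i-1}$. Since $\lambda_i$ is $\cF_{i-1}$-measurable and $X_i$ is independent of $\cF_{i-1}$ under $Q$,
\[
  \EE_Q\bigl[\log(1+\lambda_iZ_i)\mid\cF_{i-1}\bigr]=\rho(\lambda_i),
  \qquad
  \rho(\lambda):=p\log(1+\lambda)+q\log(1-\lambda).
\]
By Proposition~\ref{prop:pairwise-oracle} (with $p_r,p_{(2)}$ replaced by $p,q$), $\rho$ is strictly concave on $[0,1)$ with unique maximizer $\lambda^\star=(p-q)/(p+q)\in(0,1)$. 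Hence $\rho(\lambda_i)\le\rho^\star$ pointwise. Taking expectations and summing gives $\EE_Q[\log E_T]\le T\rho^\star$; the expectations are finite because $\rho$ is bounded above and we may allow $-\infty$.

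For the equality case: equality holds iff $\EE_Q[\rho^\star-\rho(\lambda_i)]=0$ for every $i\le T$. Since $\rho^\star-\rho(\lambda_i)\ge0$, this forces $\rho(\lambda_i)=\rho^\star$ $Q$-a.s., and strict concavity gives $\lambda_i=\lambda^\star$ $Q$-a.s. Uniqueness of the maximizer within $\cE_Z$ follows, since the process is determined by the bets up to $Q$-null sets.

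\emph{Part (b).} The upper bound needs one observation: the mixture is not literally in $\cE_Z$. We rewrite it as a single predictable-bet process. Define the posterior-weighted bet
\[
  \bar\lambda_i:=\frac{\sum_\lambda w_\lambda E_{i-1}(\lambda)\,\lambda}{\sum_\lambda w_\lambda E_{i-1}(\lambda)}\in(0,1),
\]
which is $\cF_{i-1}$-measurable. Because each factor is linear in $\lambda$,
\[
  \sum_\lambda w_\lambda E_i(\lambda)=\Bigl(\sum_\lambda w_\lambda E_{i-1}(\lambda)\Bigr)\bigl(1+\bar\lambda_iZ_i\bigr).
\]
Iterating, $E_T^{(r,a)}=W\prod_{i\le T}(1+\bar\lambda_iZ_i)$ with $W:=\sum_\lambda w_\lambda\le1$. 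Therefore
\[
  \EE_Q[\log E_T^{(r,a)}]\le\log W+T\rho^\star\le T\rho^\star,
\]
using (a) applied to the bets $\bar\lambda_i$.

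For the lower bound, drop all terms but one: $E_T^{(r,a)}\ge w_{\lambda^\star}E_T^{(r,a)}(\lambda^\star)$ since all terms are nonnegative. Taking logs and expectations,
\[
  \EE_Q[\log E_T^{(r,a)}]\ge\log w_{\lambda^\star}+T\rho(\lambda^\star)=T\rho^\star-\log\frac{1}{w_{\lambda^\star}}.
\]

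The main obstacle is the upper bound in (b), specifically recognizing that the mixture is a predictable-bet product. This reduction works because $Z_i$ enters each factor linearly.
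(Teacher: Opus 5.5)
Your proof is correct. The e-process claim, part (a), and the lower bound in (b) match the paper essentially step for step, but your upper bound in (b) takes a different route.

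The paper argues by duality. It introduces the forward I-projection $P^\star$ of $Q$ onto $H_0^{(a)}$ (midpointing the masses of $r$ and $a$) and verifies $D(Q\|P^\star)=\rho^\star$. It then combines $\EE_{P^\star}[E_T^{(r,a)}]\le 1$ with Jensen applied to the likelihood ratio, giving $\EE_Q[\log E_T^{(r,a)}]\le\log\EE_{P^\star}[E_T^{(r,a)}]+T\,D(Q\|P^\star)\le T\rho^\star$.

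You instead use that each factor is affine in $\lambda$. The mixture therefore equals $W\prod_{i\le T}(1+\bar\lambda_iZ_i^{(r,a)})$ with the wealth-weighted bet $\bar\lambda_i\in[\min\Lambda_{\mathrm{pw}},\max\Lambda_{\mathrm{pw}}]\subset(0,1)$. Thus $(E_t^{(r,a)}/W)_{t\ge0}\in\cE_Z$, and part (a) applies directly.

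Your argument is more elementary and self-contained: it needs no I-projection, KL tensorization, or change of measure. It also exposes the structural fact that finite mixtures of constant bets are themselves adaptive predictable bets. Together with the equality case of (a), it even shows the upper bound is strict for every $T\ge2$ once the grid has two or more points, since $\bar\lambda_2$ takes different values on $\{Z_1=1\}$ and $\{Z_1=-1\}$.

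The paper's route buys generality. The duality step uses only $\EE_{P^\star}[E_T]\le1$, so it caps the expected log-growth of \emph{every} e-process for $H_0^{(a)}$, not just linear-bet products and their mixtures. It also identifies $\rho^\star$ with the KL quantity that drives the paper's lower bounds.

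One small wording fix in (a): $\EE_Q[\rho(\lambda_i)]$ need not be finite, since predictable bets may approach $1$. It is well defined in $[-\infty,\log 2]$, and the equality case automatically excludes $-\infty$.
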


\begin{proof}
\noindent\emph{Preliminary: e-process property.}
For any $P \in H_0^{(a)}$, $X_i$ is independent of $\cF_{i-1}$ with
$\EE_P[Z_i^{(r,a)}] = p_r - p_a \leq 0$, so
\[
  \EE_P[1 + \lambda_i Z_i^{(r,a)} \mid \cF_{i-1}]
   =  1 + \lambda_i (p_r - p_a)
   \leq  1
\]
($\lambda_i \geq 0$).  Thus $(E_t)$ is a nonneg.\ supermartingale with
$E_0 = 1$, and optional stopping gives $\EE_P[E_\tau] \leq 1$ for any
stopping time $\tau$.

\smallskip
\noindent\textit{Part (a): GRO bet.}
Fix $E \in \cE_Z$.  For each $i$, $\lambda_i$ is
$\cF_{i-1}$-measurable and $X_i$ is independent of $\cF_{i-1}$ under
$Q$, taking values $r$, $a$, and ``other'' with probabilities $p$,
$q$, and $1 - p - q$ (in which last case $Z_i = 0$ and the log term
vanishes).  Conditioning on $\cF_{i-1}$,
\[
  \EE_Q \bigl[\log(1 + \lambda_i Z_i^{(r,a)}) \mid \cF_{i-1}\bigr]
   =  p \log(1 + \lambda_i) + q \log(1 - \lambda_i)
   =:  \rho(\lambda_i).
\]
Summing over $i = 1, \ldots, T$ via the tower property gives
\begin{equation}\label{eq:grow-decomp}
  \EE_Q[\log E_T]  =  \sum_{i=1}^T \EE_Q[\rho(\lambda_i)].
\end{equation}
By Proposition~\ref{prop:pairwise-oracle}, $\rho$ is strictly concave
on $(0, 1)$ with unique maximiser $\lambda^\star$, so
$\rho(\lambda_i) \leq \rho^\star$ pointwise.  Substituting
into~\eqref{eq:grow-decomp} yields $\EE_Q[\log E_T] \leq T \rho^\star$,
with equality iff $\lambda_i = \lambda^\star$ $Q$-a.s.\ for each $i$.

\smallskip
\noindent\textit{Lower bound for part (b).}

For every $\lambda \in \Lambda_{\mathrm{pw}}$, $E_t^{(r,a)}(\lambda) > 0$,
hence $ E_T^{(r,a)} \geq w_{\lambda^\star}  E_T^{(r,a)}(\lambda^\star)$
$Q$-a.s.  Taking logs and $Q$-expectations and applying (a) to the
constant bet $\lambda_i \equiv \lambda^\star$:
\[
  \EE_Q[\log  E_T^{(r,a)}]
   \geq  \log w_{\lambda^\star} + T \rho^\star
   =  T \rho^\star - \log(1/w_{\lambda^\star}).
\]

\smallskip
\noindent\textit{Upper bound for part (b), via variational duality.}
Let $P^\star$ be the boundary distribution
\[
  P^\star(X = r) = P^\star(X = a) := \tfrac{p+q}{2},
  \qquad P^\star(X = b) := Q(X = b) \text{ for } b \notin \{r, a\}.
\]
The Lagrangian argument of Lemma~\ref{lem:iproj}, applied to
$H_0^{(a)} = \{P : p_r \leq p_a\}$ in place of $H_{0,r}$, identifies
$P^\star$ as the forward I-projection of $Q$ onto $H_0^{(a)}$, with
the closed form
\begin{equation}\label{eq:grow-id}
  D(Q \| P^\star)
   =  p \log \frac{2 p}{p+q} + q \log \frac{2 q}{p+q}
   =  p \log(1 + \lambda^\star) + q \log(1 - \lambda^\star)
   =  \rho^\star,
\end{equation}
the second equality from
$1 \pm \lambda^\star = 2 p/(p+q)$ and $2 q/(p+q)$.

Writing $W := \sum_{\lambda \in \Lambda_{\mathrm{pw}}} w_\lambda \leq 1$,
$ E_T^{(r,a)}$ is an e-process for $H_0^{(a)}$ (mixture of
e-processes), so $\EE_{P^\star}[ E_T^{(r,a)}] \leq W \leq 1$.
Decomposing the $Q$-expectation via change of measure and applying
Jensen to the concave $\log$,
\begin{align*}
  \EE_Q[\log  E_T^{(r,a)}]
  &= \EE_Q \left[\log \left( E_T^{(r,a)} \cdot
        \frac{d (P^\star)^T}{d Q^T}\right)\right]
    + \EE_Q \left[\log \frac{d Q^T}{d (P^\star)^T}\right] \\
  &\leq \log \EE_{P^\star}[ E_T^{(r,a)}]
        + D(Q^T \| (P^\star)^T)
   \leq  0 + T  D(Q \| P^\star)
   =  T \rho^\star,
\end{align*}
where the last line uses $\EE_{P^\star}[ E_T^{(r,a)}] \leq 1$,
tensorisation of KL, and \eqref{eq:grow-id}.  Combining with the lower
bound finishes the proof.
\end{proof}

\subsection{Top-$k$ mode certification}
\label{app:top-k}

Fix a target set $S \subseteq \cA$ of size $k \geq 1$ with
$\cA \setminus S \neq \emptyset$ (so that the outsider competitor
set is meaningful).
We extend CITE to certify that $S$ contains the top-$k$ categories
in the sense
\[
  H_{1,S}^{(k)}
  := \bigl\{P :
     \min_{s \in S} p_s
      > 
     \sup_{a \notin S} p_a\bigr\},
  \qquad
  H_{0,S}^{(k)} := \neg H_{1,S}^{(k)}.
\]

\paragraph{Top-$k$ CITE.}
Let
\[
  \delta_S
  :=
  \min_{s\in S}p_s-\sup_{a\notin S}p_a,
  \qquad
  p_S
  :=
  \min_{s\in S}p_s .
\]
For each $s\in S$ and $a\notin S$, define
$ E_t^{(s,a)}$ as in Definition~\ref{def:mixture-e}, and let
$L_t(s)$ be the LCB for $p_s$.

At time $t$, if $\cA_t\setminus S\neq\emptyset$, define
\[
  \widehat a_t^{\mathrm{out}}
  \in
  \arg\max_{a\in\cA_t\setminus S}N_t(a)
\]
with deterministic tie-breaking.  Define
\[
  \mathsf{PW}_t^{(k)}
  :=
  \begin{cases}
    \mathrm{true},
    & \cA_t\setminus S=\emptyset,\\[1mm]
    \left\{
      \displaystyle
      \min_{s\in S}
      \widetilde E_t^{(s,\widehat a_t^{\mathrm{out}})}
      \ge
      \alpha_{\mathrm{pw}}^{-1}
    \right\},
    & \cA_t\setminus S\neq\emptyset.
  \end{cases}
\]
The top-$k$ CITE stopping time is
\begin{equation}\label{eq:top-k-tau}
  \tau^{(k)}
  :=
  \inf\left\{
    t\ge1:
    \mathsf{PW}_t^{(k)}
    \text{ and }
    \min_{s\in S}L_t(s)>U_t
  \right\}.
\end{equation}

\begin{theorem}[Top-$k$ Type-I and power]
\label{thm:top-k}
Run the top-$k$ CITE~\eqref{eq:top-k-tau} under the same
component-budget and selected-grid-weight bounded-below assumptions
as Theorem~\ref{thm:power-iut-clean}, with the same per-component
budget as the single-target CITE,
$\alpha_{\mathrm{pw}} + \alpha_r + \alpha_u \leq \varepsilon$.
\begin{enumerate}[label=\textup{(\alph*)},leftmargin=2.2em,topsep=2pt]
\item \textbf{Type-I:} Under Assumption~\ref{asn:dgp},
$\sup_{P \in H_{0,S}^{(k)}} \PP_P(\tau^{(k)} < \infty) \leq \varepsilon$.
\item \textbf{Power:} Suppose $\delta_S > 0$ and the grids satisfy the
analogue of Condition~\ref{cond:grid} with $\delta_S$ in place of
$\delta$.  Define the threshold
\[
  T_{\mathrm{cl}}^{(k)}  :=  C_0 \left(
    \frac{\log(k/\varepsilon)+\log(1/\delta_S)}{\delta_S^{2}}
     +  \frac{\log(k/\varepsilon)+\log(1/p_S)}{p_S}
  \right)
\]
for a universal constant $C_0$.  For \emph{fixed} $k$, as
$\varepsilon \to 0$,
\[
  \EE[\tau^{(k)}]  \leq  T_{\mathrm{cl}}^{(k)}\bigl(1 + o(1)\bigr)
   =  O(T_{\mathrm{cl}}^{(k)}).
\]
The upper bound extends to growing $k$ provided
$k = O(T_{\mathrm{cl}}^{(k)})$, e.g.\ when $\delta_S^{-2} \log(k/\varepsilon)$
dominates $k$.
\end{enumerate}
\end{theorem}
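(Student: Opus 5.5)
For part (a), I will mirror the proof of Theorem~\ref{thm:main-iut}. Fix $P\in H_{0,S}^{(k)}$, so $\min_{s\in S}p_s\le\sup_{a\notin S}p_a$. I first fix deterministic witnesses $s^\star\in\arg\min_{s\in S}p_s$ (a finite minimum) and $a^\star\notin S$ with $p_{a^\star}\ge p_{s^\star}$. Such an $a^\star$ exists by Lemma~\ref{lem:attained-supremum} when the supremum is positive; otherwise $p_{s^\star}=0$ and any outsider works. I then decompose $\{\tau^{(k)}<\infty\}$ according to whether $a^\star\in\cA_{\tau^{(k)}}$.

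On $E_1=\{\tau^{(k)}<\infty,\ a^\star\in\cA_{\tau^{(k)}}\}$, the pairwise condition gives $E_{\tau^{(k)}}^{(s^\star,\widehat a^{\mathrm{out}})}\ge\alpha_{\mathrm{pw}}^{-1}$. Since $N(\widehat a^{\mathrm{out}})\ge N(a^\star)$, Lemma~\ref{lem:monotone}, applied with target $s^\star$, upgrades this to $E_{\tau^{(k)}}^{(s^\star,a^\star)}\ge\alpha_{\mathrm{pw}}^{-1}$. This is a single fixed pair with $p_{s^\star}\le p_{a^\star}$, so Corollary~\ref{cor:mixture-nsm} and Ville bound $\PP(E_1)\le\alpha_{\mathrm{pw}}$. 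On $E_2$ we have $L_{\tau^{(k)}}(s^\star)\ge\min_s L(s)>U$. I apply Proposition~\ref{prop:lcb-valid} only to the single deterministic index $s^\star$ at level $\alpha_r$, and Proposition~\ref{prop:unseen-valid} at level $\alpha_u$. This yields the same contradiction $p_{a^\star}\le U<L(s^\star)\le p_{s^\star}$. The key point is that no union over $s\in S$ is needed, because the witness is deterministic. That is why the budget $\alpha_{\mathrm{pw}}+\alpha_r+\alpha_u$ suffices.

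For part (b), I will follow the proof of Theorem~\ref{thm:power-iut-clean}, with a union bound over the $k$ targets. For each $s\in S$, I define $S_t(s):=N_t(s)\log(1+\lambda)+R_t^{\mathrm{out}}\log(1-\lambda)$, where $R_t^{\mathrm{out}}:=\max_{a\notin S}N_t(a)$ dominates the empirical outsider runner-up count. The drift of $S_t(s)$ is at least $c\delta_S^2$, because $p_s-\sup_{a\notin S}p_a\ge\delta_S$. Lemma~\ref{lem:max-count} applies verbatim to the outsider class. McDiarmid then gives a failure probability of at most $(et)^{-2}/k$ per $s$ once $t\gtrsim(\log(1/(\alpha_{\mathrm{pw}}w_\lambda))+\log k+\log(1/\delta_S))/\delta_S^2$; this is where the $\log k$ enters. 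Likewise, the Bernstein/LCB argument at $q_0=p_s/2\ge p_S/2$ gives, for each $s$, failure probability at most $(et)^{-2}/k$ beyond a threshold of order $(\log(k/\varepsilon)+\log(1/p_S))/p_S$. Here I use that $U_t\le p_S/4$ is a deterministic condition, and the Bernstein exponent $c\,p_st\ge c\,p_St$. A union bound gives $\PP(\tau^{(k)}>t)\le 2(et)^{-2}$ for $t\ge T_{\mathrm{cl}}^{(k)}$, and the tail-sum identity gives $\EE[\tau^{(k)}]\le T_{\mathrm{cl}}^{(k)}+1$.

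The main obstacle I expect is bookkeeping of the $\log k$ through Lemma~\ref{lem:det-log}: the per-target tail must be pushed to $(et)^{-2}/k$ with only an additive $\log k$ in the threshold. This follows by choosing $x=C\delta_S\sqrt{t\log(ekt)}$ in McDiarmid. If needed, I will tolerate a cruder $k(et)^{-2}$ tail for $t\ge T_{\mathrm{cl}}^{(k)}$. Summing that tail adds $O(k/T_{\mathrm{cl}}^{(k)})$ relative error. This is $o(1)$ as $\varepsilon\to0$ for fixed $k$, and is controlled whenever $k=O(T_{\mathrm{cl}}^{(k)})$, matching the stated $(1+o(1))$ and growing-$k$ caveat.
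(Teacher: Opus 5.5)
Your proposal is correct and follows essentially the paper's route. Part (a) is the same deterministic-witness argument: $s^\star$ minimizes $p_s$ over $S$, $a^\star$ is an outsider found via Lemma~\ref{lem:attained-supremum}, runner-up monotonicity is restricted to outsiders, and validity is needed only for $L_t(s^\star)$ and $U_t$, so there is no union over $S$. Part (b) is the same fixed-time union bound over $s\in S$, layered on the McDiarmid/Bernstein argument of Theorem~\ref{thm:power-iut-clean}.

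The only difference is in where the $\log k$ goes. The paper inflates the per-target thresholds to $k/\alpha_{\mathrm{pw}}$ and $k/\alpha_r$ but still gets only a per-target tail of $2(et)^{-2}$, i.e.\ a total tail of $2k(et)^{-2}$; that is why it needs $k$ fixed or $k=O(T_{\mathrm{cl}}^{(k)})$. Your choice $x=C\delta_S\sqrt{t\log(ekt)}$, together with the analogous Bernstein step, spends the $\log k$ on shrinking each per-target tail to $(et)^{-2}/k$. This gives the slightly cleaner bound $\EE[\tau^{(k)}]\le T_{\mathrm{cl}}^{(k)}+1$ for every $k$.
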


\begin{proof}
\textit{(a) Type-I.}
Fix any $P \in H_{0,S}^{(k)}$.  Choose
$s^\star\in S$ such that
\[
  p_{s^\star}=\min_{s\in S}p_s .
\]
Since $P\in H_{0,S}^{(k)}$,
\[
  p_{s^\star}\le \sup_{a\notin S}p_a .
\]
If this supremum is positive, Lemma~\ref{lem:attained-supremum} gives
$a^\star\notin S$ with $p_{a^\star}\ge p_{s^\star}$.  If $p_{s^\star}=0$, any outsider $a^\star\notin S$ can be used.

Decompose
\[
  \{\tau^{(k)} < \infty\}
   \subseteq 
  \underbrace{\{\tau^{(k)} < \infty,  a^\star \in \cA_{\tau^{(k)}}\}}_{E_1}
   \cup 
  \underbrace{\{\tau^{(k)} < \infty,  a^\star \notin \cA_{\tau^{(k)}}\}}_{E_2}.
\]

On $E_1$, the procedure's pairwise condition holds at $\tau^{(k)}$ for
\emph{every} $s \in S$, including the witness $s^\star$:
$ E_{\tau^{(k)}}^{(s^\star, \widehat a^{\mathrm{out}}_{\tau^{(k)}})} \geq 1/\alpha_{\mathrm{pw}}$.
Runner-up monotonicity (Lemma~\ref{lem:monotone}) restricted to
outsiders (since $a^\star \in \cA_{\tau^{(k)}} \setminus S$ implies
$N_{\tau^{(k)}}(a^\star) \leq N_{\tau^{(k)}}(\widehat a^{\mathrm{out}}_{\tau^{(k)}})$) gives
$ E_{\tau^{(k)}}^{(s^\star, a^\star)} \geq  E_{\tau^{(k)}}^{(s^\star, \widehat a^{\mathrm{out}}_{\tau^{(k)}})} \geq 1/\alpha_{\mathrm{pw}}$.
Since $s^\star, a^\star$ are deterministic and
$( E_t^{(s^\star, a^\star)})_{t \geq 0}$ is an NSM with
initial value $\leq 1$ under $p_{s^\star} \leq p_{a^\star}$
(Corollary~\ref{cor:mixture-nsm}), Ville's inequality~\eqref{eq:ville}
yields $\PP(E_1) \leq \alpha_{\mathrm{pw}}$.

On $E_2$, condition~(b) at $\tau^{(k)}$ holds for the witness
$s^\star$: $L_{\tau^{(k)}}(s^\star) > U_{\tau^{(k)}}$.  Define the two
failure events
\[
  F_r := \{\exists t : L_t(s^\star) > p_{s^\star}\},
  \qquad
  F_u := \{\exists a, t : N_t(a) = 0,\ p_a > U_t\}.
\]
Proposition~\ref{prop:lcb-valid} (applied to the single index
$s^\star$) gives $\PP(F_r) \leq \alpha_r$, and
Proposition~\ref{prop:unseen-valid} gives $\PP(F_u) \leq \alpha_u$.
On $E_2 \setminus (F_r \cup F_u)$,
$L_{\tau^{(k)}}(s^\star) \leq p_{s^\star}$ and
$p_{a^\star} \leq U_{\tau^{(k)}}$ (since $a^\star \notin \cA_{\tau^{(k)}}$),
yielding the contradiction
$p_{a^\star} \leq U_{\tau^{(k)}} < L_{\tau^{(k)}}(s^\star) \leq p_{s^\star}$,
contrary to $p_{a^\star} \geq p_{s^\star}$.  Hence
$\PP(E_2) \leq \alpha_r + \alpha_u$.

A union bound gives
$\PP(\tau^{(k)} < \infty) \leq \alpha_{\mathrm{pw}} + \alpha_r + \alpha_u \leq \varepsilon$.

\textit{(b) Power.}
The single-target CITE of Definition~\ref{def:stopping} applied to a
target $s \in S$ is \emph{not} appropriate here, because its pairwise
check ranges over all of $\cA \setminus \{s\}$ and may include
competitors $s' \in S$ with $p_{s'} \geq p_s$, on which the pairwise
null is true; the resulting stopping time would be $+\infty$ a.s.
Instead, fix $s \in S$ and consider the
\emph{outsider-restricted single-target CITE for target $s$}, defined
exactly as Definition~\ref{def:stopping} except that the empirical
runner-up $\widehat a_t$ is replaced by
$\widehat a_t^{\mathrm{out}} := \arg\max_{a \in \cA_t \setminus S} N_t(a)$
and the pairwise threshold is $k/\alpha_{\mathrm{pw}}$, the LCB
threshold is $k/\alpha_r$, the unseen budget $\alpha_u$ is unchanged.
We adopt the same convention as for the actual top-$k$ procedure: if
$\cA_t \setminus S = \emptyset$, the per-$s$ pairwise condition is
treated as vacuously satisfied at round $t$.
The tightened thresholds $k/\alpha_{\mathrm{pw}}$ and $k/\alpha_r$ are
\emph{strictly stronger sufficient} requirements than the actual top-$k$
stopping criteria (which use the original thresholds $1/\alpha_{\mathrm{pw}}$
and the LCB inequality $L_t(s) > U_t$); they are introduced solely so
that a Bonferroni union bound over $s \in S$ on the per-target failure
events yields a uniform $\varepsilon$-level conclusion.
Write $\tau_s^{\mathrm{out}}$ for its stopping time.

We use a fixed-time tail bound rather than a max-of-first-hitting-times
argument (the latter fails because pairwise e-processes and LCBs are
not monotone in $t$).  For each fixed $t \geq 1$,
\begin{equation}\label{eq:topk-fail-decomp}
  \{\tau^{(k)} > t\}
  \subseteq
  \bigcup_{s\in S}
  \left(
    \left\{
      \cA_t\setminus S\neq\emptyset,\
       E_t^{(s,\widehat a_t^{\mathrm{out}})}
      <
      \alpha_{\mathrm{pw}}^{-1}
    \right\}
    \cup
    \{L_t(s)\le U_t\}
  \right).
\end{equation}
Apply the proof of Theorem~\ref{thm:power-iut-clean} to the
\emph{outsider-restricted single-target CITE for target $s$}, with
$\cA \setminus \{r\}$ replaced by $\cA \setminus S$ in the
pairwise/LCB analysis (the unseen bound $U_t$ is unchanged since it is
already over all of $\cA$): the runner-up monotonicity
(Lemma~\ref{lem:monotone}) holds within any subset, the LCB
construction is local to target $s$, and the effective modal gap
$p_s - \max_{a \notin S} p_a \geq \delta_S$ and target mass $p_s \geq p_S$.
With per-$s$ thresholds tightened to $k/\alpha_{\mathrm{pw}}$ for the
pairwise check and $k/\alpha_r$ for the LCB (a Bonferroni split over
$|S| = k$), the same McDiarmid + Bernstein arguments give, for each
fixed $s$ and each $t \geq T_{\mathrm{cl}}^{(k)}$,
\[
  \PP \bigl( E_t^{(s, \widehat a_t^{\mathrm{out}})} < 1/\alpha_{\mathrm{pw}}
        \text{or}  L_t(s) \leq U_t\bigr)
   \leq  2(et)^{-2},
\]
where
$T_{\mathrm{cl}}^{(k)} = C_0\bigl((\log(k/\varepsilon)+\log(1/\delta_S))/\delta_S^{2}
+ (\log(k/\varepsilon)+\log(1/p_S))/p_S\bigr)$ for a universal constant $C_0$.
Substituting into~\eqref{eq:topk-fail-decomp} and applying a union
bound over $s \in S$,
\[
  \PP(\tau^{(k)} > t)  \leq  2 k (e t)^{-2}
  \qquad \text{for all } t \geq T_{\mathrm{cl}}^{(k)}.
\]
The tail-sum identity ($\sum_{t \geq T} 1/t^2 \leq 2/T$ for $T \geq 2$)
yields
\[
  \EE[\tau^{(k)}]
   \leq  T_{\mathrm{cl}}^{(k)}
   + \sum_{t > T_{\mathrm{cl}}^{(k)}} 2 k (et)^{-2}
   \leq  T_{\mathrm{cl}}^{(k)}\bigl(1 + o(1)\bigr)
  \quad (\varepsilon \to 0,\ k \text{ fixed}),
\]
which gives the claimed bound.
\end{proof}

\begin{remark}
The top-$k$ extension preserves the runner-up reduction within the
``outsider'' set, so the per-round cost is $k$ pairwise checks and
$k$ LCB updates, plus one shared unseen bound.
The case $k = 1$ recovers CITE of
Definition~\ref{def:stopping}, with Type-I and rate guarantees given
by Theorems~\ref{thm:main-iut} and~\ref{thm:power-iut-clean}
respectively.
\end{remark}

%% ============================================================
%% Appendix: Supplementary Simulation Experiments
%% ============================================================

\section{Simulation Details}
\label{app:sim}

\subsection{Distribution Settings}
\label{app:sim-settings}

Table~\ref{tab:settings} summarizes the five distribution settings
used in Section~\ref{sec:experiments}.  In each setting, the tail
probabilities follow a shifted-Zipf distribution capped so that the
stated modal gap $\delta=p_{(1)}-p_{(2)}$ is preserved exactly.

\begin{table}[H]
\caption{Distribution settings for the simulation study.}
\label{tab:settings}
\centering\small
\begin{tabular}{clcccl}
\toprule
Setting & Name & $K$ & $p_r$ & $\delta$ & Tail \\
\midrule
1 & LLM-like Zipf (diffuse) & 5000 & 0.24 & 0.215 & $\text{Zipf}_{1.1}$ on 4987 labels \\
2 & Concentrated (strong mode) & 100 & 0.60 & 0.450 & Uniform on 94 labels \\
3 & Near-tie (small gap) & 500 & 0.12 & 0.010 & $\text{Zipf}_{1.3}$ on 497 labels \\
4 & Very diffuse (huge category set) & 10000 & 0.06 & 0.010 & $\text{Zipf}_{1.0}$ on 9998 labels \\
5 & Moderate LLM-like & 1000 & 0.35 & 0.150 & $\text{Zipf}_{1.2}$ on 997 labels \\
\bottomrule
\end{tabular}
\end{table}

\subsection{Results for All Reported Settings}
\label{app:sim-tables}

\begin{figure}[p]
\centering
\includegraphics[width=\textwidth]{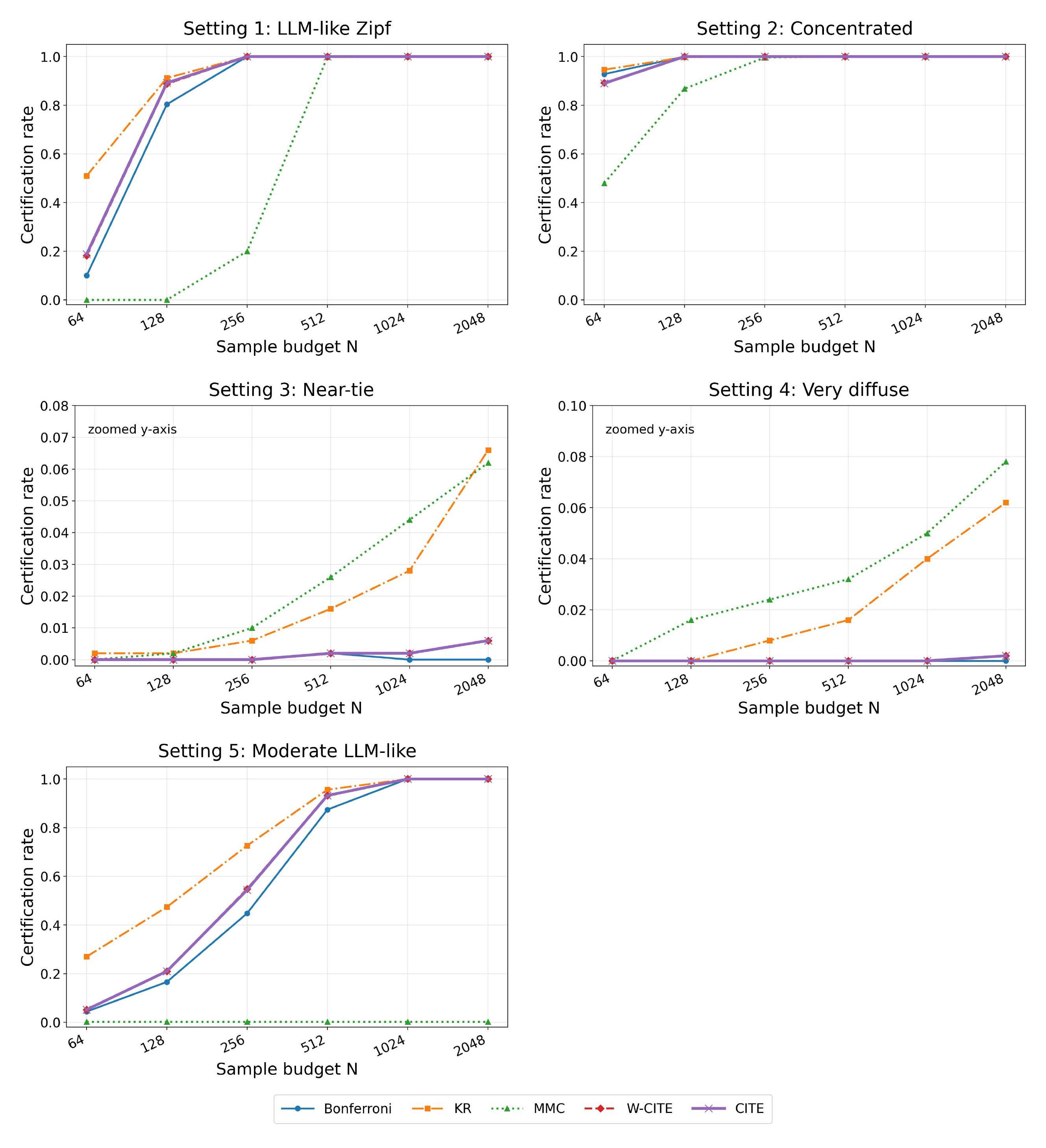}
\caption{Certification rate versus sample budget $N$ for CITE, W-CITE,
fixed-sample Bonferroni, the sample-split fixed-target test KR
\citep{kim2025locally}, and the leader-tracking MMC
\citep{cordero2025certified}, across all five reported simulation
settings.  The panels for Settings~3 and~4 use zoomed vertical axes
because all methods remain low-power over the evaluated budget range.
CITE matches or exceeds Bonferroni in most regimes, with the easy
concentrated Setting~2 saturating quickly for both methods.  Settings
3--4 are also the regimes where KR and MMC show nonzero false
certifications under Case~B (Table~\ref{tab:type1-sim}).}
\label{fig:power}
\end{figure}

\paragraph{Type-I error control.}
Across all five settings and all budgets
$N\in\{64,128,256,512,1024,2048\}$, CITE, W-CITE, and Bonferroni have
empirical false-certification rate exactly zero in every Case~B
experiment.  KR also has zero false certifications in Settings~1, 2,
and~5, with only small nonzero rates on Settings~3--4.  MMC shows
nonzero false certifications in the same two regimes, illustrating that
leader-tracking certificates are not designed for the fixed-target null
considered here.  Table~\ref{tab:type1-sim} reports the full
per-budget Case~B false-certification rates.

\begin{table}[H]
\caption{Case~B false-certification rates by budget.  Entries are
empirical false-certification rates at level $\varepsilon=0.05$ under
the fixed-target null.}
\label{tab:type1-sim}
\centering\scriptsize
\setlength{\tabcolsep}{4pt}
\begin{tabular}{llccccc}
\toprule
Setting & $N$ & Bonferroni & CITE & W-CITE & KR & MMC \\
\midrule
1: LLM-like Zipf & 64   & .000 & .000 & .000 & .000 & .000 \\
                 & 128  & .000 & .000 & .000 & .000 & .000 \\
                 & 256  & .000 & .000 & .000 & .000 & .000 \\
                 & 512  & .000 & .000 & .000 & .000 & .000 \\
                 & 1024 & .000 & .000 & .000 & .000 & .000 \\
                 & 2048 & .000 & .000 & .000 & .000 & .000 \\
\midrule
2: Concentrated  & 64   & .000 & .000 & .000 & .000 & .000 \\
                 & 128  & .000 & .000 & .000 & .000 & .000 \\
                 & 256  & .000 & .000 & .000 & .000 & .000 \\
                 & 512  & .000 & .000 & .000 & .000 & .000 \\
                 & 1024 & .000 & .000 & .000 & .000 & .000 \\
                 & 2048 & .000 & .000 & .000 & .000 & .000 \\
\midrule
3: Near-tie      & 64   & .000 & .000 & .000 & .002 & .000 \\
                 & 128  & .000 & .000 & .000 & .000 & .000 \\
                 & 256  & .000 & .000 & .000 & .000 & .002 \\
                 & 512  & .000 & .000 & .000 & .002 & .004 \\
                 & 1024 & .000 & .000 & .000 & .002 & .006 \\
                 & 2048 & .000 & .000 & .000 & .002 & .006 \\
\midrule
4: Very diffuse  & 64   & .000 & .000 & .000 & .000 & .000 \\
                 & 128  & .000 & .000 & .000 & .000 & .012 \\
                 & 256  & .000 & .000 & .000 & .000 & .012 \\
                 & 512  & .000 & .000 & .000 & .000 & .012 \\
                 & 1024 & .000 & .000 & .000 & .002 & .014 \\
                 & 2048 & .000 & .000 & .000 & .000 & .014 \\
\midrule
5: Moderate LLM-like & 64   & .000 & .000 & .000 & .000 & .000 \\
                     & 128  & .000 & .000 & .000 & .000 & .000 \\
                     & 256  & .000 & .000 & .000 & .000 & .000 \\
                     & 512  & .000 & .000 & .000 & .000 & .000 \\
                     & 1024 & .000 & .000 & .000 & .000 & .000 \\
                     & 2048 & .000 & .000 & .000 & .000 & .000 \\
\bottomrule
\end{tabular}
\end{table}

\paragraph{Power (Case~A).}
Tables~\ref{tab:full-s1}--\ref{tab:full-s5} report certification rates
for all five settings.  Setting~2 saturates quickly for all methods,
whereas Settings~3 and~4 are the most difficult regimes.  Setting~5
highlights a diffuse-tail regime in which CITE substantially outperforms
MMC at moderate budgets.  Bonferroni can be slightly more powerful in
the easiest concentrated regime at the smallest budget, but CITE
matches or improves on it in the remaining reported regimes and
budgets.

\begin{table}[H]
\caption{Setting 1 (LLM-like Zipf): certification rate, Case A.}
\label{tab:full-s1}
\centering\small
\begin{tabular}{lccccc c}
\toprule
$N$ & Bonferroni & CITE & W-CITE & KR & MMC & $\bar{\tau}_{\text{CITE}}$ \\
\midrule
64   & .100 & .188 & .182 & .510 & .000 & 53 \\
128  & .804 & .892 & .888 & .912 & .000 & 84 \\
256  & 1.00 & 1.00 & 1.00 & 1.00 & .200 & 90 \\
512  & 1.00 & 1.00 & 1.00 & 1.00 & .998 & 90 \\
1024 & 1.00 & 1.00 & 1.00 & 1.00 & 1.00 & 90 \\
2048 & 1.00 & 1.00 & 1.00 & 1.00 & 1.00 & 90 \\
\bottomrule
\end{tabular}
\end{table}

\begin{table}[H]
\caption{Setting 2 (Concentrated, strong mode): certification rate, Case A.}
\label{tab:full-s2}
\centering\small
\begin{tabular}{lccccc c}
\toprule
$N$ & Bonferroni & CITE & W-CITE & KR & MMC & $\bar{\tau}_{\text{CITE}}$ \\
\midrule
64   & .928 & .890 & .892 & .946 & .480 & 36 \\
128  & 1.00 & 1.00 & 1.00 & .998 & .868 & 41 \\
256  & 1.00 & 1.00 & 1.00 & 1.00 & .996 & 41 \\
512  & 1.00 & 1.00 & 1.00 & 1.00 & 1.00 & 41 \\
1024 & 1.00 & 1.00 & 1.00 & 1.00 & 1.00 & 41 \\
2048 & 1.00 & 1.00 & 1.00 & 1.00 & 1.00 & 41 \\
\bottomrule
\end{tabular}
\end{table}

\begin{table}[H]
\caption{Setting 3 (Near-tie, $\delta=0.01$): certification rate, Case A.}
\label{tab:full-s3}
\centering\small
\begin{tabular}{lccccc}
\toprule
$N$ & Bonferroni & CITE & W-CITE & KR & MMC \\
\midrule
64   & .000 & .000 & .000 & .002 & .000 \\
128  & .000 & .000 & .000 & .002 & .002 \\
256  & .000 & .000 & .000 & .006 & .010 \\
512  & .002 & .002 & .002 & .016 & .026 \\
1024 & .000 & .002 & .002 & .028 & .044 \\
2048 & .000 & .006 & .006 & .066 & .062 \\
\bottomrule
\end{tabular}
\end{table}

\begin{table}[H]
\caption{Setting 4 (Very diffuse, huge category set): certification rate, Case A.}
\label{tab:full-s4}
\centering\small
\begin{tabular}{lccccc c}
\toprule
$N$ & Bonferroni & CITE & W-CITE & KR & MMC & $\bar{\tau}_{\text{CITE}}$ \\
\midrule
64   & .000 & .000 & .000 & .000 & .000 & --- \\
128  & .000 & .000 & .000 & .000 & .016 & --- \\
256  & .000 & .000 & .000 & .008 & .024 & --- \\
512  & .000 & .000 & .000 & .016 & .032 & --- \\
1024 & .000 & .000 & .000 & .040 & .050 & --- \\
2048 & .000 & .002 & .002 & .062 & .078 & 1149 \\
\bottomrule
\end{tabular}
\end{table}

\begin{table}[H]
\caption{Setting 5 (Moderate LLM-like): certification rate, Case A.}
\label{tab:full-s5}
\centering\small
\begin{tabular}{lccccc c}
\toprule
$N$ & Bonferroni & CITE & W-CITE & KR & MMC & $\bar{\tau}_{\text{CITE}}$ \\
\midrule
64   & .044 & .052 & .052 & .270 & .002 & 49 \\
128  & .166 & .210 & .210 & .474 & .002 & 84 \\
256  & .448 & .544 & .548 & .726 & .002 & 151 \\
512  & .874 & .932 & .932 & .956 & .002 & 234 \\
1024 & 1.00 & 1.00 & 1.00 & 1.00 & .002 & 259 \\
2048 & 1.00 & 1.00 & 1.00 & 1.00 & .002 & 259 \\
\bottomrule
\end{tabular}
\end{table}

\subsection{Bottleneck Analysis}
\label{app:sim-bottleneck}

\begin{figure}[H]
\centering
\includegraphics[width=\textwidth]{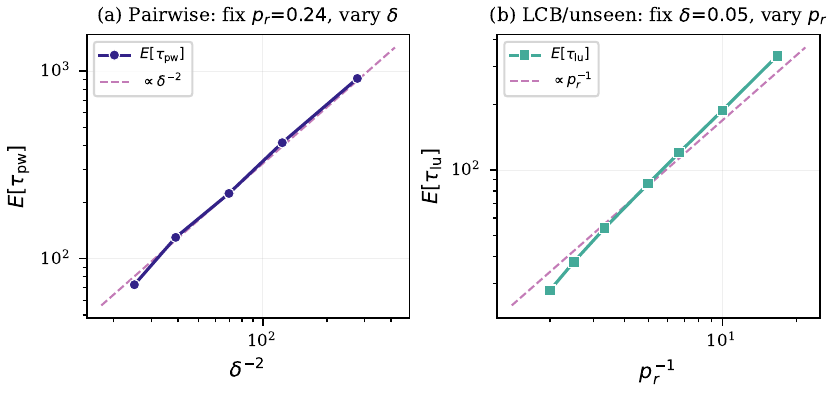}
\caption{Component-level diagnostics for the two terms in the
stopping-time upper bound of Theorem~\ref{thm:power-iut-clean}, on
log-log axes. Dashed reference
lines show the theoretical slopes.  (a)~Fixing $p_r = 0.24$ and
varying $\delta$, the pairwise component
$\EE[\tau_{\mathrm{pw}}]$ (first time the pairwise e-value crosses
$1/\alpha_{\mathrm{pw}}$) is consistent with the $\delta^{-2}$ term.
(b)~Fixing $\delta = 0.05$ and varying $p_r$, the LCB/unseen
component $\EE[\tau_{\mathrm{lu}}]$ (first time
$L_t(r) > U_t$) is consistent with the $p_r^{-1}$ term.}
\label{fig:decomp}
\end{figure}

\subsection{Logistic Weight Model Saturation}
\label{app:sim-weight-logistic}

As discussed in Section~\ref{sec:experiments}, the logistic weight
model can saturate when $K$ is large.  For Settings~1 and~5, sweeping
$\beta\in\{0.5,1,2,4,8,16\}$ produces certification rates and stopping
times that are nearly unchanged relative to unweighted CITE.  The
effective gap ratio $\Delta_w/\mu_r$ is essentially the same as
$\delta/p_r$ for $\beta\ge2$.  This occurs because the top categories
are already assigned weights close to one, so the logistic
transformation does not substantially separate the mode from its
runner-up.

\subsection{Rank-Based Weight Model}
\label{app:sim-weight-rank}

The rank-based model assigns $E[W \mid X = a] = 0.95 \cdot e^{-\gamma \cdot \mathrm{rank}(a)}$
for the top $K_0 = 10$ categories and $w_{\mathrm{low}} = 0.1$ otherwise,
with additive $\mathrm{Uniform}(-0.05, 0.05)$ noise clipped to $[0.01, 1]$.

Table~\ref{tab:weight-rank} reports detailed results for Setting~5 at
$N=512$. We omit a separate figure here to avoid duplicating
Figure~\ref{fig:power}; the table directly shows how the effective
weighted gap and stopping time change with the rank-decay parameter.

\begin{table}[H]
\caption{Setting 5: rank-based W-CITE at $N=512$ in the supplementary
weight sweep.}\label{tab:weight-rank}
\centering\small
\begin{tabular}{cccccc}
\toprule
$\gamma$ & W-CITE rate & $\bar{\tau}_{\text{W-CITE}}$ & $\bar{\tau}_{\text{CITE}}$ & $\Delta_w/\mu_r$ & $\delta/p_r$ \\
\midrule
0.0 & .920 & 237 & 235 & 0.429 & 0.429 \\
0.1 & .990 & 185 & 235 & 0.483 & 0.429 \\
0.2 & 1.00 & 142 & 235 & 0.532 & 0.429 \\
0.5 & 1.00 & 71  & 235 & 0.653 & 0.429 \\
1.0 & 1.00 & 48  & 235 & 0.790 & 0.429 \\
2.0 & 1.00 & 45  & 235 & 0.923 & 0.429 \\
3.0 & 1.00 & 45  & 235 & 0.972 & 0.429 \\
\bottomrule
\end{tabular}
\end{table}

In this synthetic rank-based model, performance improves as the
effective gap ratio $\Delta_w/\mu_r$ increases.  A modest
$\gamma=0.2$ already reaches full certification in this sweep and
reduces $\bar{\tau}$ by about 40\%; at $\gamma=1.0$, the reduction is
about 80\%.  For Setting~1, the rank-based model with
$\gamma\ge0.1$ reaches 99.6\% certification at $N=128$ (versus 88.6\%
unweighted) with $\bar{\tau}\approx72$ (versus 92).

\section{LLM Self-Consistency Details}
\label{app:llm}

Table~\ref{tab:llm-main-case-a} reports the certification rate averaged
across problems for each (model, dataset, budget) under Case A
(target = mode); Table~\ref{tab:llm-main-case-b} reports the
corresponding Case B (target = runner-up) certification rates as a
sanity check on Type-I error.  For each
(model, problem) pair we draw 1{,}000 i.i.d.\ samples, then bootstrap
500 replicates of each budget $N \in \{64, 128, 256, 512, 1024\}$ from
the empirical answer pool to compute certification rates.

% Combined LLM main table (Case A). Auto-generated by build_llm_main.py from tables/
% Source: {model}_{dataset}-main-case-a.tex for 6 conditions
\begin{table}[t]
\caption{Certification rate on LLM self-consistency outputs (Case A, target = mode), averaged across problems. $\bar{K}_N$: Monte Carlo mean number of observed answer categories at budget $N$. $\bar{\tau}$ columns report mean stopping time among certified replicates.}
\label{tab:llm-main-case-a}
\centering\small
\resizebox{\textwidth}{!}{%
\begin{tabular}{ll c ccccc c ccc}
\toprule
Model & Dataset & $N$ & Bonferroni & CITE & W-CITE & KR & MMC & $\bar{K}_N$ & $\bar{\tau}_{\mathrm{CITE}}$ & $\bar{\tau}_{\mathrm{W\mbox{-}CITE}}$ & $\bar{\tau}_{\mathrm{MMC}}$ \\
\midrule
\multirow{15}{*}{Qwen3-30B} & \multirow{5}{*}{AIME 2026} & 64 & 0.499$\pm$0.002 & 0.724$\pm$0.001 & 0.727$\pm$0.001 & 0.778$\pm$0.002 & 0.671$\pm$0.001 & 7.1$\pm$0.0 & 21.8$\pm$0.3 & 22.8$\pm$0.2 & 15.9$\pm$0.3 \\
  & & 128 & 0.597$\pm$0.002 & 0.781$\pm$0.001 & 0.785$\pm$0.001 & 0.827$\pm$0.002 & 0.705$\pm$0.001 & 10.2$\pm$0.0 & 32.4$\pm$0.5 & 33.0$\pm$0.5 & 22.5$\pm$0.5 \\
  & & 256 & 0.683$\pm$0.002 & 0.833$\pm$0.002 & 0.838$\pm$0.002 & 0.880$\pm$0.002 & 0.744$\pm$0.001 & 14.5$\pm$0.0 & 49.1$\pm$0.6 & 55.5$\pm$0.8 & 32.1$\pm$0.6 \\
  & & 512 & 0.770$\pm$0.002 & 0.902$\pm$0.001 & 0.906$\pm$0.001 & 0.929$\pm$0.001 & 0.779$\pm$0.001 & 19.9$\pm$0.0 & 81.8$\pm$1.3 & 82.0$\pm$1.1 & 45.6$\pm$1.1 \\
  & & 1024 & 0.825$\pm$0.001 & 0.944$\pm$0.001 & 0.946$\pm$0.001 & 0.959$\pm$0.001 & 0.798$\pm$0.000 & 25.7$\pm$0.0 & 113.7$\pm$1.8 & 113.2$\pm$1.4 & 57.8$\pm$2.2 \\
\cmidrule{2-12}
  & \multirow{5}{*}{\shortstack{FrontierScience-\\Olympiad}} & 64 & 0.284$\pm$0.002 & 0.288$\pm$0.002 & 0.289$\pm$0.002 & 0.328$\pm$0.003 & 0.121$\pm$0.002 & 28.0$\pm$0.0 & 37.9$\pm$1.9 & 40.9$\pm$1.6 & 22.9$\pm$0.9 \\
  & & 128 & 0.312$\pm$0.002 & 0.315$\pm$0.002 & 0.315$\pm$0.002 & 0.373$\pm$0.003 & 0.146$\pm$0.003 & 46.0$\pm$0.0 & 65.5$\pm$2.5 & 64.3$\pm$2.9 & 35.9$\pm$1.1 \\
  & & 256 & 0.350$\pm$0.003 & 0.356$\pm$0.003 & 0.359$\pm$0.003 & 0.443$\pm$0.004 & 0.198$\pm$0.003 & 73.7$\pm$0.1 & 110.0$\pm$2.7 & 112.0$\pm$2.5 & 60.6$\pm$1.5 \\
  & & 512 & 0.429$\pm$0.003 & 0.449$\pm$0.004 & 0.453$\pm$0.004 & 0.536$\pm$0.004 & 0.245$\pm$0.002 & 112.9$\pm$0.1 & 195.5$\pm$3.3 & 193.5$\pm$3.2 & 97.0$\pm$2.2 \\
  & & 1024 & 0.530$\pm$0.003 & 0.563$\pm$0.004 & 0.567$\pm$0.004 & 0.633$\pm$0.004 & 0.281$\pm$0.002 & 160.0$\pm$0.1 & 436.1$\pm$15.0 & 439.2$\pm$12.3 & 138.5$\pm$2.9 \\
\cmidrule{2-12}
  & \multirow{5}{*}{ER-REASON} & 64 & 0.102$\pm$0.000 & 0.983$\pm$0.000 & 0.983$\pm$0.000 & 0.990$\pm$0.000 & 0.988$\pm$0.000 & 1.1$\pm$0.0 & 10.6$\pm$0.0 & 11.5$\pm$0.0 & 8.2$\pm$0.0 \\
  & & 128 & 0.110$\pm$0.000 & 0.996$\pm$0.000 & 0.996$\pm$0.000 & 0.998$\pm$0.000 & 0.996$\pm$0.000 & 1.1$\pm$0.0 & 11.3$\pm$0.0 & 12.2$\pm$0.0 & 8.8$\pm$0.0 \\
  & & 256 & 0.111$\pm$0.000 & 1.000$\pm$0.000 & 1.000$\pm$0.000 & 1.000$\pm$0.000 & 1.000$\pm$0.000 & 1.1$\pm$0.0 & 11.8$\pm$0.0 & 12.6$\pm$0.0 & 9.3$\pm$0.0 \\
  & & 512 & 0.111$\pm$0.000 & 1.000$\pm$0.000 & 1.000$\pm$0.000 & 1.000$\pm$0.000 & 1.000$\pm$0.000 & 1.1$\pm$0.0 & 11.8$\pm$0.0 & 12.7$\pm$0.0 & 9.3$\pm$0.1 \\
  & & 1024 & 0.111$\pm$0.000 & 1.000$\pm$0.000 & 1.000$\pm$0.000 & 1.000$\pm$0.000 & 1.000$\pm$0.000 & 1.1$\pm$0.0 & 11.8$\pm$0.0 & 12.7$\pm$0.0 & 9.3$\pm$0.1 \\
\midrule
\multirow{15}{*}{gpt-oss-20b} & \multirow{5}{*}{AIME 2026} & 64 & 0.563$\pm$0.002 & 0.596$\pm$0.002 & 0.599$\pm$0.002 & 0.666$\pm$0.002 & 0.459$\pm$0.001 & 12.6$\pm$0.0 & 29.3$\pm$0.2 & 29.6$\pm$0.2 & 21.0$\pm$0.6 \\
  & & 128 & 0.659$\pm$0.002 & 0.693$\pm$0.001 & 0.701$\pm$0.002 & 0.742$\pm$0.002 & 0.486$\pm$0.001 & 18.3$\pm$0.0 & 43.8$\pm$1.6 & 43.5$\pm$1.6 & 27.9$\pm$0.8 \\
  & & 256 & 0.740$\pm$0.002 & 0.765$\pm$0.001 & 0.778$\pm$0.001 & 0.795$\pm$0.001 & 0.513$\pm$0.001 & 25.7$\pm$0.0 & 62.8$\pm$2.9 & 60.7$\pm$3.2 & 40.3$\pm$1.5 \\
  & & 512 & 0.799$\pm$0.001 & 0.811$\pm$0.001 & 0.812$\pm$0.001 & 0.830$\pm$0.001 & 0.538$\pm$0.001 & 34.8$\pm$0.0 & 84.2$\pm$2.9 & 80.7$\pm$3.1 & 58.0$\pm$2.1 \\
  & & 1024 & 0.825$\pm$0.001 & 0.829$\pm$0.001 & 0.829$\pm$0.001 & 0.848$\pm$0.001 & 0.554$\pm$0.001 & 44.0$\pm$0.0 & 176.7$\pm$6.0 & 126.6$\pm$3.7 & 91.0$\pm$4.5 \\
\cmidrule{2-12}
  & \multirow{5}{*}{\shortstack{FrontierScience-\\Olympiad}} & 64 & 0.193$\pm$0.001 & 0.187$\pm$0.002 & 0.147$\pm$0.002 & 0.209$\pm$0.002 & 0.192$\pm$0.001 & 37.8$\pm$0.0 & 32.3$\pm$2.2 & 28.7$\pm$0.5 & 22.1$\pm$0.4 \\
  & & 128 & 0.205$\pm$0.001 & 0.213$\pm$0.002 & 0.188$\pm$0.001 & 0.255$\pm$0.003 & 0.199$\pm$0.000 & 65.1$\pm$0.1 & 67.6$\pm$2.4 & 40.4$\pm$0.7 & 24.0$\pm$0.5 \\
  & & 256 & 0.237$\pm$0.002 & 0.269$\pm$0.003 & 0.201$\pm$0.001 & 0.330$\pm$0.004 & 0.200$\pm$0.000 & 108.5$\pm$0.1 & 144.6$\pm$2.9 & 108.9$\pm$1.8 & 24.3$\pm$0.5 \\
  & & 512 & 0.340$\pm$0.003 & 0.382$\pm$0.003 & 0.242$\pm$0.002 & 0.446$\pm$0.004 & 0.200$\pm$0.000 & 172.0$\pm$0.1 & 258.7$\pm$4.5 & 319.0$\pm$2.3 & 24.3$\pm$0.5 \\
  & & 1024 & 0.429$\pm$0.002 & 0.475$\pm$0.003 & 0.428$\pm$0.004 & 0.559$\pm$0.004 & 0.200$\pm$0.000 & 250.3$\pm$0.1 & 494.7$\pm$13.2 & 499.9$\pm$3.7 & 24.3$\pm$0.5 \\
\cmidrule{2-12}
  & \multirow{5}{*}{ER-REASON} & 64 & 0.623$\pm$0.001 & 0.640$\pm$0.001 & 0.644$\pm$0.001 & 0.722$\pm$0.001 & 0.654$\pm$0.001 & 3.2$\pm$0.0 & 26.6$\pm$0.2 & 28.4$\pm$0.2 & 22.1$\pm$0.2 \\
  & & 128 & 0.726$\pm$0.001 & 0.730$\pm$0.001 & 0.739$\pm$0.001 & 0.795$\pm$0.001 & 0.727$\pm$0.001 & 3.6$\pm$0.0 & 41.4$\pm$0.5 & 42.1$\pm$0.5 & 33.1$\pm$0.6 \\
  & & 256 & 0.813$\pm$0.001 & 0.816$\pm$0.001 & 0.821$\pm$0.001 & 0.857$\pm$0.001 & 0.780$\pm$0.001 & 3.8$\pm$0.0 & 63.0$\pm$0.7 & 60.7$\pm$0.6 & 47.5$\pm$0.7 \\
  & & 512 & 0.873$\pm$0.001 & 0.870$\pm$0.001 & 0.875$\pm$0.001 & 0.903$\pm$0.001 & 0.823$\pm$0.001 & 4.0$\pm$0.0 & 89.7$\pm$1.0 & 91.5$\pm$1.6 & 71.4$\pm$1.4 \\
  & & 1024 & 0.921$\pm$0.001 & 0.915$\pm$0.001 & 0.919$\pm$0.001 & 0.939$\pm$0.001 & 0.861$\pm$0.001 & 4.2$\pm$0.0 & 128.3$\pm$2.3 & 127.5$\pm$2.2 & 110.8$\pm$2.1 \\
\bottomrule
\end{tabular}%
}
\end{table}

% Combined LLM main table (Case B). Auto-generated by build_llm_main.py from tables/
% Source: {model}_{dataset}-main-case-b.tex for 6 conditions
\begin{table}[t]
\caption{Certification rate on LLM self-consistency outputs (Case B, target = runner-up), averaged across problems. $\bar{K}_N$: Monte Carlo mean number of observed answer categories at budget $N$. $\bar{\tau}$ columns report mean stopping time among certified replicates.}
\label{tab:llm-main-case-b}
\centering\small
\resizebox{\textwidth}{!}{%
\begin{tabular}{ll c ccccc c ccc}
\toprule
Model & Dataset & $N$ & Bonferroni & CITE & W-CITE & KR & MMC & $\bar{K}_N$ & $\bar{\tau}_{\mathrm{CITE}}$ & $\bar{\tau}_{\mathrm{W\mbox{-}CITE}}$ & $\bar{\tau}_{\mathrm{MMC}}$ \\
\midrule
\multirow{15}{*}{Qwen3-30B} & \multirow{5}{*}{AIME 2026} & 64 & 0.000$\pm$0.000 & 0.000$\pm$0.000 & 0.000$\pm$0.000 & 0.001$\pm$0.000 & 0.000$\pm$0.000 & 7.1$\pm$0.0 & --- & --- & 17.2$\pm$0.7 \\
  & & 128 & 0.000$\pm$0.000 & 0.000$\pm$0.000 & 0.000$\pm$0.000 & 0.000$\pm$0.000 & 0.000$\pm$0.000 & 10.2$\pm$0.0 & --- & --- & 17.2$\pm$0.7 \\
  & & 256 & 0.000$\pm$0.000 & 0.000$\pm$0.000 & 0.000$\pm$0.000 & 0.000$\pm$0.000 & 0.000$\pm$0.000 & 14.5$\pm$0.0 & --- & --- & 17.2$\pm$0.7 \\
  & & 512 & 0.000$\pm$0.000 & 0.000$\pm$0.000 & 0.000$\pm$0.000 & 0.000$\pm$0.000 & 0.000$\pm$0.000 & 19.9$\pm$0.0 & --- & --- & 17.2$\pm$0.7 \\
  & & 1024 & 0.000$\pm$0.000 & 0.000$\pm$0.000 & 0.000$\pm$0.000 & 0.000$\pm$0.000 & 0.000$\pm$0.000 & 25.7$\pm$0.0 & --- & --- & 17.2$\pm$0.7 \\
\cmidrule{2-12}
  & \multirow{5}{*}{\shortstack{FrontierScience-\\Olympiad}} & 64 & 0.000$\pm$0.000 & 0.000$\pm$0.000 & 0.000$\pm$0.000 & 0.001$\pm$0.000 & 0.000$\pm$0.000 & 28.0$\pm$0.0 & --- & --- & 26.0$\pm$0.0 \\
  & & 128 & 0.000$\pm$0.000 & 0.000$\pm$0.000 & 0.000$\pm$0.000 & 0.000$\pm$0.000 & 0.000$\pm$0.000 & 46.0$\pm$0.0 & --- & --- & 26.0$\pm$0.0 \\
  & & 256 & 0.000$\pm$0.000 & 0.000$\pm$0.000 & 0.000$\pm$0.000 & 0.000$\pm$0.000 & 0.000$\pm$0.000 & 73.7$\pm$0.1 & --- & --- & 26.0$\pm$0.0 \\
  & & 512 & 0.000$\pm$0.000 & 0.000$\pm$0.000 & 0.000$\pm$0.000 & 0.000$\pm$0.000 & 0.000$\pm$0.000 & 112.9$\pm$0.1 & --- & --- & 26.0$\pm$0.0 \\
  & & 1024 & 0.000$\pm$0.000 & 0.000$\pm$0.000 & 0.000$\pm$0.000 & 0.001$\pm$0.000 & 0.000$\pm$0.000 & 160.0$\pm$0.1 & --- & --- & 26.0$\pm$0.0 \\
\cmidrule{2-12}
  & \multirow{5}{*}{ER-REASON} & 64 & 0.000$\pm$0.000 & 0.000$\pm$0.000 & 0.000$\pm$0.000 & 0.000$\pm$0.000 & 0.000$\pm$0.000 & 1.1$\pm$0.0 & --- & --- & 7.0$\pm$0.0 \\
  & & 128 & 0.000$\pm$0.000 & 0.000$\pm$0.000 & 0.000$\pm$0.000 & 0.000$\pm$0.000 & 0.000$\pm$0.000 & 1.1$\pm$0.0 & --- & --- & 7.0$\pm$0.0 \\
  & & 256 & 0.000$\pm$0.000 & 0.000$\pm$0.000 & 0.000$\pm$0.000 & 0.000$\pm$0.000 & 0.000$\pm$0.000 & 1.1$\pm$0.0 & --- & --- & 7.0$\pm$0.0 \\
  & & 512 & 0.000$\pm$0.000 & 0.000$\pm$0.000 & 0.000$\pm$0.000 & 0.000$\pm$0.000 & 0.000$\pm$0.000 & 1.1$\pm$0.0 & --- & --- & 7.0$\pm$0.0 \\
  & & 1024 & 0.000$\pm$0.000 & 0.000$\pm$0.000 & 0.000$\pm$0.000 & 0.000$\pm$0.000 & 0.000$\pm$0.000 & 1.1$\pm$0.0 & --- & --- & 7.0$\pm$0.0 \\
\midrule
\multirow{15}{*}{gpt-oss-20b} & \multirow{5}{*}{AIME 2026} & 64 & 0.000$\pm$0.000 & 0.000$\pm$0.000 & 0.000$\pm$0.000 & 0.002$\pm$0.000 & 0.000$\pm$0.000 & 12.6$\pm$0.0 & 9.0$\pm$0.0 & 9.0$\pm$0.0 & 26.0$\pm$7.8 \\
  & & 128 & 0.000$\pm$0.000 & 0.000$\pm$0.000 & 0.000$\pm$0.000 & 0.002$\pm$0.000 & 0.000$\pm$0.000 & 18.3$\pm$0.0 & 40.0$\pm$31.0 & 40.0$\pm$31.0 & 26.0$\pm$7.8 \\
  & & 256 & 0.000$\pm$0.000 & 0.000$\pm$0.000 & 0.000$\pm$0.000 & 0.002$\pm$0.000 & 0.000$\pm$0.000 & 25.7$\pm$0.0 & 40.0$\pm$31.0 & 40.0$\pm$31.0 & 26.0$\pm$7.8 \\
  & & 512 & 0.000$\pm$0.000 & 0.000$\pm$0.000 & 0.000$\pm$0.000 & 0.001$\pm$0.000 & 0.000$\pm$0.000 & 34.8$\pm$0.0 & 40.0$\pm$31.0 & 40.0$\pm$31.0 & 26.0$\pm$7.8 \\
  & & 1024 & 0.000$\pm$0.000 & 0.000$\pm$0.000 & 0.000$\pm$0.000 & 0.001$\pm$0.000 & 0.000$\pm$0.000 & 44.0$\pm$0.0 & 354.5$\pm$15.5 & 485.5$\pm$13.4 & 26.0$\pm$7.8 \\
\cmidrule{2-12}
  & \multirow{5}{*}{\shortstack{FrontierScience-\\Olympiad}} & 64 & 0.000$\pm$0.000 & 0.000$\pm$0.000 & 0.000$\pm$0.000 & 0.000$\pm$0.000 & 0.000$\pm$0.000 & 37.8$\pm$0.0 & --- & --- & --- \\
  & & 128 & 0.000$\pm$0.000 & 0.000$\pm$0.000 & 0.000$\pm$0.000 & 0.001$\pm$0.000 & 0.000$\pm$0.000 & 65.1$\pm$0.1 & --- & --- & --- \\
  & & 256 & 0.000$\pm$0.000 & 0.000$\pm$0.000 & 0.000$\pm$0.000 & 0.000$\pm$0.000 & 0.000$\pm$0.000 & 108.5$\pm$0.1 & --- & --- & --- \\
  & & 512 & 0.000$\pm$0.000 & 0.000$\pm$0.000 & 0.000$\pm$0.000 & 0.001$\pm$0.000 & 0.000$\pm$0.000 & 172.0$\pm$0.1 & --- & --- & --- \\
  & & 1024 & 0.000$\pm$0.000 & 0.000$\pm$0.000 & 0.000$\pm$0.000 & 0.000$\pm$0.000 & 0.000$\pm$0.000 & 250.3$\pm$0.1 & --- & --- & --- \\
\cmidrule{2-12}
  & \multirow{5}{*}{ER-REASON} & 64 & 0.000$\pm$0.000 & 0.000$\pm$0.000 & 0.000$\pm$0.000 & 0.002$\pm$0.000 & 0.001$\pm$0.000 & 3.2$\pm$0.0 & 24.6$\pm$5.3 & 30.0$\pm$9.3 & 16.0$\pm$1.6 \\
  & & 128 & 0.000$\pm$0.000 & 0.000$\pm$0.000 & 0.000$\pm$0.000 & 0.002$\pm$0.000 & 0.001$\pm$0.000 & 3.6$\pm$0.0 & 28.1$\pm$6.3 & 36.8$\pm$11.5 & 16.0$\pm$1.6 \\
  & & 256 & 0.000$\pm$0.000 & 0.000$\pm$0.000 & 0.000$\pm$0.000 & 0.001$\pm$0.000 & 0.001$\pm$0.000 & 3.8$\pm$0.0 & 28.1$\pm$6.3 & 36.8$\pm$11.5 & 21.1$\pm$3.2 \\
  & & 512 & 0.000$\pm$0.000 & 0.000$\pm$0.000 & 0.000$\pm$0.000 & 0.001$\pm$0.000 & 0.001$\pm$0.000 & 4.0$\pm$0.0 & 28.1$\pm$6.3 & 36.8$\pm$11.5 & 21.1$\pm$3.2 \\
  & & 1024 & 0.000$\pm$0.000 & 0.000$\pm$0.000 & 0.000$\pm$0.000 & 0.001$\pm$0.000 & 0.001$\pm$0.000 & 4.2$\pm$0.0 & 123.4$\pm$44.4 & 36.8$\pm$11.5 & 27.1$\pm$6.6 \\
\bottomrule
\end{tabular}%
}
\end{table}

\subsection{Model settings}
\label{app:llm-model-settings}

\paragraph{Sampling parameters.}
Qwen3-30B-A3B-Instruct-2507: \texttt{temperature=0.7}, \texttt{top\_p=0.8}, \texttt{top\_k=20}. gpt-oss-20b: \texttt{temperature=0.4}, \texttt{top\_p=0.9}, \texttt{top\_k=20}, \texttt{reasoning\_effort=low}.

\paragraph{Response probability weight for W-CITE.}
For each sample $i$, let $(y_1, \ldots, y_{T_i})$ denote the tokens of the extracted answer span (the final \verb|\boxed{...}| contents for AIME and FrontierScience, the \texttt{Triage:~$\langle$acuity$\rangle \mid$ Disposition:~$\langle$disposition$\rangle$} string for ER-REASON). The W-CITE weight is the response probability used by the weighted variant of self-consistency \citep{wang2023selfconsistency},
\[
  w_i  =  \prod_{t=1}^{T_i} p_\theta \left(y_t \mid y_{<t},  \text{prompt}\right),
\]
i.e., the model's own likelihood of the extracted answer under token-by-token factorization. Weights are not renormalized across samples.

\subsection{Dataset protocols}
\label{app:llm-datasets}

\paragraph{AIME 2026.}
All 30 problems of AIME 2026 \citep{dekoninck2026matharena} (AIME~I and AIME~II combined), each with an integer gold answer in $[0, 999]$. Prompt is the problem statement with a per-model suffix: ``Solve in under 100 tokens. Put your final answer within \verb|\boxed{}|.'' (Qwen3-30B-A3B-Instruct-2507) or ``Do not show your reasoning. Put only the final answer within \verb|\boxed{}|.'' (gpt-oss-20b). Answers are extracted from the last \verb|\boxed{...}| and matched by exact integer equality.

\paragraph{FrontierScience-Olympiad.}
FrontierScience-Olympiad \citep{openai2025frontierscience} contains 100 olympiad-level physics, chemistry, and biology problems with numeric or symbolic answers. We use a 10-problem subset obtained by taking every 10th problem (problems 10, 20, \ldots, 100). Prompt suffix is the same as for AIME 2026. Each extracted answer is compared against the gold by an LLM self-judge (the generating model decides whether its own output is equivalent to the gold).

\paragraph{ER-REASON.}
ER-REASON \citep{mehandru2025erreason} contains 3{,}984 emergency-room encounters. We use the 72-encounter subset with expert-authored clinical rationales, which the paper designates as the primary evaluation set. Both models use a two-turn chat prompt in place of the \verb|\boxed{}| suffix. Turn 1 asks the model to assign an ESI acuity level (1--5) from the chief complaint, demographics, and triage vital signs. Turn 2, which echoes the Turn 1 acuity, asks for the disposition (one of 12 labels) given the full ED provider note and all historical notes. The final answer is parsed as the fixed string ``Triage:~$\langle$acuity$\rangle \mid$ Disposition:~$\langle$disposition$\rangle$''.

\subsection{Per-problem certification rates: Qwen3-30B-A3B-Instruct-2507}
\label{app:llm-qwen3}

\paragraph{AIME 2026.}
% Auto-generated by run.py
% Model: Qwen3-30B-A3B-Instruct-2507 / Dataset: AIME 2026 (30 problems)  (30 problems, Case A, eps=0.05)
% Requires \usepackage{longtable, booktabs}
{\tiny
\fontsize{5.2pt}{6.0pt}\selectfont
\setlength{\tabcolsep}{1.2pt}
\renewcommand{\arraystretch}{0.92}
% [inline block 0: 7 envs, 199668 chars in 7 pieces, piece 1 here, a bare % at each other -> data_tex | \begin{longtable}{llccccc c ccc} \caption{Per-problem certification rate on AIME 2026 (30 problems) (Case A, target = mo...]

}

\paragraph{FrontierScience-Olympiad.}
% Auto-generated by run.py
% Model: Qwen3-30B-A3B-Instruct-2507 / Dataset: FrontierScience-Olympiad (10 problems, every-10 subset)  (10 problems, Case A, eps=0.05)
% Requires \usepackage{longtable, booktabs}
{\tiny
\fontsize{5.2pt}{6.0pt}\selectfont
\setlength{\tabcolsep}{1.2pt}
\renewcommand{\arraystretch}{0.92}
%
}

\paragraph{ER-REASON.}
% Auto-generated by run.py
% Model: Qwen3-30B-A3B-Instruct-2507 / Dataset: ER-REASON  (72 problems, Case A, eps=0.05)
% Requires \usepackage{longtable, booktabs}
{\tiny
\fontsize{5.2pt}{6.0pt}\selectfont
\setlength{\tabcolsep}{1.2pt}
\renewcommand{\arraystretch}{0.92}
%
}

\subsection{Per-problem certification rates: gpt-oss-20b}
\label{app:llm-gptoss}

\paragraph{AIME 2026.}
% Auto-generated by run.py
% Model: gpt-oss-20b / Dataset: AIME 2026 (30 problems)  (30 problems, Case A, eps=0.05)
% Requires \usepackage{longtable, booktabs}
{\tiny
\fontsize{5.2pt}{6.0pt}\selectfont
\setlength{\tabcolsep}{1.2pt}
\renewcommand{\arraystretch}{0.92}
%
}

\paragraph{FrontierScience-Olympiad.}
% Auto-generated by run.py
% Model: gpt-oss-20b / Dataset: FrontierScience-Olympiad (10 problems)  (10 problems, Case A, eps=0.05)
% Requires \usepackage{longtable, booktabs}
{\tiny
\fontsize{5.2pt}{6.0pt}\selectfont
\setlength{\tabcolsep}{1.2pt}
\renewcommand{\arraystretch}{0.92}
%
}

\paragraph{ER-REASON.}
% Auto-generated by run.py
% Model: gpt-oss-20b / Dataset: ER-REASON (72 problems)  (72 problems, Case A, eps=0.05)
% Requires \usepackage{longtable, booktabs}
{\tiny
\fontsize{5.2pt}{6.0pt}\selectfont
\setlength{\tabcolsep}{1.2pt}
\renewcommand{\arraystretch}{0.92}
%
}

\section{Reproducibility Statement}
\label{appendix:reprod}

\paragraph{Hardware and inference server.}
Real-LLM experiments are conducted using NVIDIA A100 80~GB GPUs with vLLM as the inference server.

\paragraph{Data and code access.}
All code will be open-sourced upon publication. Datasets and models are publicly available (Table~\ref{tab:links}). ER-REASON requires PhysioNet credentialed access.

\begin{table}[H]
    \centering
    \small
    %
    \caption{List of models and datasets used in the LLM self-consistency experiments.}\label{tab:links}
\end{table}


\begin{thebibliography}{43}
\providecommand{\natexlab}[1]{#1}
\providecommand{\url}[1]{\texttt{#1}}
\expandafter\ifx\csname urlstyle\endcsname\relax
  \providecommand{\doi}[1]{doi: #1}\else
  \providecommand{\doi}{doi: \begingroup \urlstyle{rm}\Url}\fi

\bibitem[Achiam et~al.(2023)Achiam, Adler, Agarwal, Ahmad, Akkaya, Aleman,
  Almeida, Altenschmidt, Altman, Anadkat, et~al.]{achiam2023gpt}
Josh Achiam, Steven Adler, Sandhini Agarwal, Lama Ahmad, Ilge Akkaya,
  Florencia~Leoni Aleman, Diogo Almeida, Janko Altenschmidt, Sam Altman,
  Shyamal Anadkat, et~al.
\newblock Gpt-4 technical report.
\newblock \emph{arXiv preprint arXiv:2303.08774}, 2023.

\bibitem[Aggarwal et~al.(2023)Aggarwal, Madaan, Yang, et~al.]{aggarwal2023let}
Pranjal Aggarwal, Aman Madaan, Yiming Yang, et~al.
\newblock Let’s sample step by step: Adaptive-consistency for efficient
  reasoning and coding with llms.
\newblock In \emph{Proceedings of the 2023 Conference on Empirical Methods in
  Natural Language Processing}, pages 12375--12396, 2023.

\bibitem[Aghazadeh et~al.(2025)Aghazadeh, Ghasemi, Beyhaghi, and
  Pishro-Nik]{aghazadeh2025cges}
Ehsan Aghazadeh, Ahmad Ghasemi, Hedyeh Beyhaghi, and Hossein Pishro-Nik.
\newblock Cges: Confidence-guided early stopping for efficient and accurate
  self-consistency.
\newblock \emph{arXiv preprint arXiv:2511.02603}, 2025.

\bibitem[Berend and Kontorovich(2013)]{berendkontorovich2013missingmass}
Daniel Berend and Aryeh Kontorovich.
\newblock On the concentration of the missing mass.
\newblock \emph{Electronic Communications in Probability}, 18\penalty0
  (3):\penalty0 1--7, 2013.
\newblock \doi{10.1214/ECP.v18-2359}.

\bibitem[Brown et~al.(2024)Brown, Juravsky, Ehrlich, Clark, Le, R{\'e}, and
  Mirhoseini]{brown2025monkeys}
Bradley Brown, Jordan Juravsky, Ryan Ehrlich, Ronald Clark, Quoc~V Le,
  Christopher R{\'e}, and Azalia Mirhoseini.
\newblock Large language monkeys: Scaling inference compute with repeated
  sampling.
\newblock \emph{arXiv preprint arXiv:2407.21787}, 2024.

\bibitem[Brown et~al.(2020)Brown, Mann, Ryder, Subbiah, Kaplan, Dhariwal,
  Neelakantan, Shyam, Sastry, Askell, et~al.]{brown2020language}
Tom Brown, Benjamin Mann, Nick Ryder, Melanie Subbiah, Jared~D Kaplan, Prafulla
  Dhariwal, Arvind Neelakantan, Pranav Shyam, Girish Sastry, Amanda Askell,
  et~al.
\newblock Language models are few-shot learners.
\newblock \emph{Advances in neural information processing systems},
  33:\penalty0 1877--1901, 2020.

\bibitem[Chowdhery et~al.(2023)Chowdhery, Narang, Devlin, Bosma, Mishra,
  Roberts, Barham, Chung, Sutton, Gehrmann, et~al.]{chowdhery2023palm}
Aakanksha Chowdhery, Sharan Narang, Jacob Devlin, Maarten Bosma, Gaurav Mishra,
  Adam Roberts, Paul Barham, Hyung~Won Chung, Charles Sutton, Sebastian
  Gehrmann, et~al.
\newblock Palm: Scaling language modeling with pathways.
\newblock \emph{Journal of machine learning research}, 24\penalty0
  (240):\penalty0 1--113, 2023.

\bibitem[Cordero-Encinar and Duncan(2025)]{cordero2025certified}
Paula Cordero-Encinar and Andrew~B Duncan.
\newblock Certified self-consistency: Statistical guarantees and test-time
  training for reliable reasoning in llms.
\newblock \emph{arXiv preprint arXiv:2510.17472}, 2025.

\bibitem[Dekoninck et~al.(2026)Dekoninck, Jovanović, Gehrunger, Rögnvalddson,
  Petrov, Sun, and Vechev]{dekoninck2026matharena}
Jasper Dekoninck, Nikola Jovanović, Tim Gehrunger, Kári Rögnvalddson, Ivo
  Petrov, Chenhao Sun, and Martin Vechev.
\newblock Beyond {B}enchmarks: {M}ath{A}rena as an {E}valuation {P}latform for
  {M}athematics with {LLM}s.
\newblock 2026.
\newblock URL \url{https://arxiv.org/abs/2605.00674}.

\bibitem[Fu et~al.(2025)Fu, Wang, Tian, and Zhao]{fu2025deep}
Yichao Fu, Xuewei Wang, Yuandong Tian, and Jiawei Zhao.
\newblock Deep think with confidence.
\newblock \emph{arXiv preprint arXiv:2508.15260}, 2025.

\bibitem[Garivier and Kaufmann(2016)]{garivier2016optimal}
Aur{\'e}lien Garivier and Emilie Kaufmann.
\newblock Optimal best arm identification with fixed confidence.
\newblock In \emph{Conference on Learning Theory}, pages 998--1027. PMLR, 2016.

\bibitem[Good(1953)]{good1953population}
I.~J. Good.
\newblock The population frequencies of species and the estimation of
  population parameters.
\newblock \emph{Biometrika}, 40\penalty0 (3-4):\penalty0 237--264, 1953.
\newblock \doi{10.1093/biomet/40.3-4.237}.

\bibitem[Good and Toulmin(1956)]{goodtoulmin1956new}
I.~J. Good and G.~H. Toulmin.
\newblock The number of new species, and the increase in population coverage,
  when a sample is increased.
\newblock \emph{Biometrika}, 43\penalty0 (1-2):\penalty0 45--63, 1956.
\newblock \doi{10.1093/biomet/43.1-2.45}.

\bibitem[Gr{\"u}nwald et~al.(2024)Gr{\"u}nwald, de~Heide, and
  Koolen]{grunwald2024safe}
Peter Gr{\"u}nwald, Rianne de~Heide, and Wouter~M. Koolen.
\newblock Safe testing.
\newblock \emph{Journal of the Royal Statistical Society Series B: Statistical
  Methodology}, 86\penalty0 (5):\penalty0 1091--1128, 2024.

\bibitem[Howard et~al.(2020)Howard, Ramdas, McAuliffe, and
  Sekhon]{howard2020chernoff}
Steven~R Howard, Aaditya Ramdas, Jon McAuliffe, and Jasjeet Sekhon.
\newblock Time-uniform chernoff bounds via nonnegative supermartingales.
\newblock \emph{Probability Surveys}, 17, 2020.

\bibitem[Howard et~al.(2021)Howard, Ramdas, McAuliffe, and
  Sekhon]{howard2021confidence}
Steven~R. Howard, Aaditya Ramdas, Jon McAuliffe, and Jasjeet Sekhon.
\newblock Time-uniform, nonparametric, nonasymptotic confidence sequences.
\newblock \emph{The Annals of Statistics}, 49\penalty0 (2):\penalty0
  1055--1080, 2021.
\newblock \doi{10.1214/20-AOS1991}.

\bibitem[Kaufmann and Koolen(2021)]{kaufmann2021mixture}
Emilie Kaufmann and Wouter~M Koolen.
\newblock Mixture martingales revisited with applications to sequential tests
  and confidence intervals.
\newblock \emph{Journal of Machine Learning Research}, 22\penalty0
  (246):\penalty0 1--44, 2021.

\bibitem[Kaufmann et~al.(2016)Kaufmann, Capp{\'e}, and
  Garivier]{kaufmann2016complexity}
Emilie Kaufmann, Olivier Capp{\'e}, and Aur{\'e}lien Garivier.
\newblock On the complexity of best-arm identification in multi-armed bandit
  models.
\newblock \emph{The Journal of Machine Learning Research}, 17\penalty0
  (1):\penalty0 1--42, 2016.

\bibitem[Kim and Ramdas(2025)]{kim2025locally}
Ilmun Kim and Aaditya Ramdas.
\newblock Locally minimax optimal confidence sets for the best model.
\newblock \emph{arXiv preprint arXiv:2503.21639}, 2025.

\bibitem[Komiyama et~al.(2026)Komiyama, Oba, and
  Oyamada]{komiyama2026bestofinfinity}
Junpei Komiyama, Daisuke Oba, and Masafumi Oyamada.
\newblock Best-of-infinity: Asymptotic performance of test-time llm ensembling.
\newblock In \emph{The Fourteenth International Conference on Learning
  Representations}, 2026.

\bibitem[Li et~al.(2024)Li, Yuan, Feng, Pan, Wang, Sun, Wang, and
  Li]{li2024escape}
Yiwei Li, Peiwen Yuan, Shaoxiong Feng, Boyuan Pan, Xinglin Wang, Bin Sun, Heda
  Wang, and Kan Li.
\newblock Escape sky-high cost: Early-stopping self-consistency for multi-step
  reasoning.
\newblock In \emph{The Twelfth International Conference on Learning
  Representations}, 2024.

\bibitem[Lindon and Malek(2022)]{lindon2022anytime}
Michael Lindon and Alan Malek.
\newblock Anytime-valid inference for multinomial count data.
\newblock \emph{Advances in Neural Information Processing Systems},
  35:\penalty0 2817--2831, 2022.

\bibitem[McAllester and Ortiz(2003)]{mcallesterortiz2003missingmass}
David McAllester and Luis~E. Ortiz.
\newblock Concentration inequalities for the missing mass and for histogram
  rule error.
\newblock \emph{Journal of Machine Learning Research}, 4:\penalty0 895--911,
  2003.

\bibitem[Mehandru et~al.(2025)Mehandru, Golchini, Bamman, Zack, Molina, and
  Alaa]{mehandru2025erreason}
Nikita Mehandru, Niloufar Golchini, David Bamman, Travis Zack, Melanie~F.
  Molina, and Ahmed Alaa.
\newblock {ER-REASON}: {A} {B}enchmark {D}ataset for {LLM}-{B}ased {C}linical
  {R}easoning in the {E}mergency {R}oom.
\newblock \emph{arXiv preprint arXiv:2505.22919}, 2025.
\newblock URL \url{https://arxiv.org/abs/2505.22919}.

\bibitem[{OpenAI}(2025)]{openai2025gptoss}
{OpenAI}.
\newblock gpt-oss-120b \& gpt-oss-20b {M}odel {C}ard.
\newblock \emph{arXiv preprint arXiv:2508.10925}, 2025.
\newblock URL \url{https://arxiv.org/abs/2508.10925}.

\bibitem[Orlitsky et~al.(2016)Orlitsky, Suresh, and Wu]{orlitsky2016unseen}
Alon Orlitsky, Ananda~Theertha Suresh, and Yihong Wu.
\newblock Optimal prediction of the number of unseen species.
\newblock \emph{Proceedings of the National Academy of Sciences}, 113\penalty0
  (47):\penalty0 13283--13288, 2016.
\newblock \doi{10.1073/pnas.1607774113}.

\bibitem[Painsky(2025)]{Painsky02012025}
Amichai Painsky.
\newblock Confidence intervals for parameters of unobserved events.
\newblock \emph{Journal of the American Statistical Association}, 120\penalty0
  (549):\penalty0 226--236, 2025.

\bibitem[{Qwen Team}(2025)]{qwen2025qwen3}
{Qwen Team}.
\newblock Qwen3 {T}echnical {R}eport.
\newblock \emph{arXiv preprint arXiv:2505.09388}, 2025.
\newblock URL \url{https://arxiv.org/abs/2505.09388}.

\bibitem[Ramdas and Wang(2025)]{ramdas2025hypothesis}
Aaditya Ramdas and Ruodu Wang.
\newblock Hypothesis testing with e-values.
\newblock \emph{Foundations and Trends{\textregistered} in Statistics},
  1\penalty0 (1-2):\penalty0 1--390, 2025.

\bibitem[Ramdas et~al.(2023)Ramdas, Gr{\"u}nwald, Vovk, and
  Shafer]{ramdas2023gametheoretic}
Aaditya Ramdas, Peter Gr{\"u}nwald, Vladimir Vovk, and Glenn Shafer.
\newblock Game-theoretic statistics and safe anytime-valid inference.
\newblock \emph{Statistical Science}, 38\penalty0 (4):\penalty0 576--601, 2023.

\bibitem[Shafer(2021)]{shafer2021betting}
Glenn Shafer.
\newblock Testing by betting: A strategy for statistical and scientific
  communication (with discussion).
\newblock \emph{Journal of the Royal Statistical Society: Series A (Statistics
  in Society)}, 184\penalty0 (2):\penalty0 407--431, 2021.
\newblock \doi{10.1111/rssa.12647}.

\bibitem[Snell et~al.(2025)Snell, Lee, Xu, and Kumar]{snell2025scaling}
Charlie~Victor Snell, Jaehoon Lee, Kelvin Xu, and Aviral Kumar.
\newblock Scaling llm test-time compute optimally can be more effective than
  scaling parameters for reasoning.
\newblock In \emph{The Thirteenth International Conference on Learning
  Representations}, 2025.

\bibitem[Taubenfeld et~al.(2025)Taubenfeld, Sheffer, Ofek, Feder, Goldstein,
  Gekhman, and Yona]{taubenfeld2025confidence}
Amir Taubenfeld, Tom Sheffer, Eran Ofek, Amir Feder, Ariel Goldstein, Zorik
  Gekhman, and Gal Yona.
\newblock Confidence improves self-consistency in llms.
\newblock In \emph{Findings of the Association for Computational Linguistics:
  ACL 2025}, pages 20090--20111, 2025.

\bibitem[Touvron et~al.(2023)Touvron, Lavril, Izacard, Martinet, Lachaux,
  Lacroix, Rozi{\`e}re, Goyal, Hambro, Azhar, et~al.]{touvron2023llama}
Hugo Touvron, Thibaut Lavril, Gautier Izacard, Xavier Martinet, Marie-Anne
  Lachaux, Timoth{\'e}e Lacroix, Baptiste Rozi{\`e}re, Naman Goyal, Eric
  Hambro, Faisal Azhar, et~al.
\newblock Llama: Open and efficient foundation language models.
\newblock \emph{arXiv preprint arXiv:2302.13971}, 2023.

\bibitem[Ville(1939)]{ville1939}
Jean Ville.
\newblock \emph{{\'E}tude Critique de la Notion de Collectif}.
\newblock Gauthier-Villars, Paris, 1939.

\bibitem[Vovk and Wang(2021)]{vovkWang2021evalues}
Vladimir Vovk and Ruodu Wang.
\newblock E-values: Calibration, combination and applications.
\newblock \emph{The Annals of Statistics}, 49\penalty0 (3):\penalty0
  1736--1754, 2021.
\newblock \doi{10.1214/20-AOS2020}.

\bibitem[Wang et~al.(2025{\natexlab{a}})Wang, Wan, Sun, Chen, and
  Ar{\i}k]{wang2025dynscaling}
Fei Wang, Xingchen Wan, Ruoxi Sun, Jiefeng Chen, and Sercan~{\"O} Ar{\i}k.
\newblock Dynscaling: Efficient verifier-free inference scaling via dynamic and
  integrated sampling.
\newblock \emph{arXiv preprint arXiv:2506.16043}, 2025{\natexlab{a}}.

\bibitem[Wang et~al.(2026)Wang, Lin, Hu, Jiao, Chowdhury, Chang, and
  Patwardhan]{openai2025frontierscience}
Miles Wang, Robi Lin, Kat Hu, Joy Jiao, Neil Chowdhury, Ethan Chang, and Tejal
  Patwardhan.
\newblock Frontier{S}cience: {E}valuating {AI}'s {A}bility to {P}erform
  {E}xpert-{L}evel {S}cientific {T}asks.
\newblock \emph{arXiv preprint arXiv:2601.21165}, 2026.
\newblock URL \url{https://arxiv.org/abs/2601.21165}.

\bibitem[Wang et~al.(2025{\natexlab{b}})Wang, Feng, Li, Yuan, Zhang, Tan, Pan,
  Hu, and Li]{wang2025penny}
Xinglin Wang, Shaoxiong Feng, Yiwei Li, Peiwen Yuan, Yueqi Zhang, Chuyi Tan,
  Boyuan Pan, Yao Hu, and Kan Li.
\newblock Make every penny count: Difficulty-adaptive self-consistency for
  cost-efficient reasoning.
\newblock In \emph{Findings of the Association for Computational Linguistics:
  NAACL 2025}, pages 6904--6917, 2025{\natexlab{b}}.

\bibitem[Wang et~al.(2023)Wang, Wei, Schuurmans, Le, Chi, Narang, Chowdhery,
  and Zhou]{wang2023selfconsistency}
Xuezhi Wang, Jason Wei, Dale Schuurmans, Quoc~V Le, Ed~H. Chi, Sharan Narang,
  Aakanksha Chowdhery, and Denny Zhou.
\newblock Self-consistency improves chain of thought reasoning in language
  models.
\newblock In \emph{The Eleventh International Conference on Learning
  Representations}, 2023.
\newblock URL \url{https://openreview.net/forum?id=1PL1NIMMrw}.

\bibitem[Wei et~al.(2022)Wei, Wang, Schuurmans, Bosma, Xia, Chi, Le, Zhou,
  et~al.]{wei2022chain}
Jason Wei, Xuezhi Wang, Dale Schuurmans, Maarten Bosma, Fei Xia, Ed~Chi, Quoc~V
  Le, Denny Zhou, et~al.
\newblock Chain-of-thought prompting elicits reasoning in large language
  models.
\newblock \emph{Advances in neural information processing systems},
  35:\penalty0 24824--24837, 2022.

\bibitem[Yao et~al.(2023)Yao, Yu, Zhao, Shafran, Griffiths, Cao, and
  Narasimhan]{yao2023tree}
Shunyu Yao, Dian Yu, Jeffrey Zhao, Izhak Shafran, Tom Griffiths, Yuan Cao, and
  Karthik Narasimhan.
\newblock Tree of thoughts: Deliberate problem solving with large language
  models.
\newblock \emph{Advances in neural information processing systems},
  36:\penalty0 11809--11822, 2023.

\bibitem[Zhou et~al.(2023)Zhou, Sch{\"a}rli, Hou, Wei, Scales, Wang,
  Schuurmans, Cui, Bousquet, Le, et~al.]{zhou2023least}
Denny Zhou, Nathanael Sch{\"a}rli, Le~Hou, Jason Wei, Nathan Scales, Xuezhi
  Wang, Dale Schuurmans, Claire Cui, Olivier Bousquet, Quoc~V Le, et~al.
\newblock Least-to-most prompting enables complex reasoning in large language
  models.
\newblock In \emph{The Eleventh International Conference on Learning
  Representations}, 2023.

\end{thebibliography}
\end{document}